\newtheorem{assumption}{Assumption}
\newtheorem{theorem}{Theorem}
\newtheorem{lemma}{Lemma}
\newtheorem{definition}{Definition}
\newtheorem{corollary}{Corollary}
\newtheorem{remark}{Remark}
\newtheorem{proof}{Proof}
\begin{document}
\title{A Unified Regularization Approach to High-Dimensional Generalized Tensor Bandits} 


\author{%
  \IEEEauthorblockN{Jiannan Li, Yiyang Yang and Yao Wang}
  \IEEEauthorblockA{School of Mangement \\
                    Xi’an Jiaotong University\\
                    Xi’an, China\\
                    Email: jiannanli@stu.xjtu.edu.cn, \{yyyang817,yao.s.wang\}@gmail.com}
  \and
  \IEEEauthorblockN{Shaojie Tang}
  \IEEEauthorblockA{Department of Management Science and Systems\\ 
                    University at Buffalo\\
                    Buffalo, New York, USA\\
                    Email: shaojiet@buffalo.edu}
}

\maketitle


\begin{abstract}
   Modern decision-making scenarios often involve data that is both high-dimensional and rich in higher-order contextual information, where existing bandits algorithms fail to generate effective policies. In response, we propose in this paper a generalized linear tensor bandits algorithm designed to tackle these challenges by incorporating low-dimensional tensor structures, and further derive a unified analytical framework of the proposed algorithm. Specifically, our framework introduces a convex optimization approach with the weakly decomposable regularizers, enabling it to not only achieve better results based on the tensor low-rankness structure assumption but also extend to cases involving other low-dimensional structures such as slice sparsity and low-rankness. The theoretical analysis shows that, compared to existing low-rankness tensor result, our framework not only provides better bounds but also has a broader applicability. Notably, in the special case of degenerating to low-rank matrices, our bounds still offer advantages in certain scenarios.

\end{abstract}

\section{Introduction}

Contextual bandits (CB) is a sequential decision-making problem where, in each round, the agent leverages available contextual information to guide its choices—deciding whether to explore new, unexplored arms or exploit previously selected arms in order to maximize cumulative reward. Due to its ability to make better decisions in complex and dynamic environments, CB have gained increasing attention. They have been widely applied in various domains, including recommendation systems \citep{bastani2022learning,aramayo2023multiarmed}, dynamic pricing \citep{wang2021multimodal,luo2024distribution}, personalized healthcare \citep{bastani2020online,zhou2023spoiled}, and other areas \citep{grant2021filtered,agrawal2023tractable}. However, it is worth noting that the rewards are not necessarily continuous variables in the aforementioned sequential decision-making problems. For example, binary rewards in recommendation systems and count-based rewards in ad searches. This highlights the need for more flexible generalized linear models.

Meanwhile, with the rapid advancement of modern information technology, the quantity and diversity of available feature information have increased significantly. This shift implies that, to improve the accuracy of decision making, more complex interaction factors must be considered. At the same time, the dimensions of these factors have expanded rapidly, leading to the emergence of high-order, high-dimensional problems. We illustrate the existence of this phenomenon through specific application cases. For example, traditional personalized healthcare schemes focus primarily on simple interactions between diseases and drugs. However, with the adoption of targeted therapies, complex associations between drugs, targets, and diseases have become central to research, involving thousands or even tens of thousands of features related to diseases and drugs \citep{bayati2014data,razavian2015population}. Similarly, to improve click-through rates, online recommendation platforms need to process high-dimensional data, including user and product features \citep{naik2008challenges}, while also addressing complex interactions between users, products and time. In these scenarios, assumptions based on low-order and moderately sized features \citep{li2010contextual,abbasi2011improved} are no longer applicable, necessitating a shift in research focus towards high-order, high-dimensional problems.

In this paper, we propose a unified regularization approach to handle high-dimensional tensor bandits problems under generalized linear reward relationships, incorporating various low-dimensional structures. Specifically, considering that tensors have complex algebraic structures, certain low-dimensional structures (i.e., sparsity \cite{kim2019doubly,ren2024dynamic} and low-rankness \cite{li2022simple,cai2023doubly}) may appear at the entry level, fiber level, or slice level, rather than being confined to the whole tensor. As such, the goal of this work is to leverage low-dimensional knowledge to facilitate bandit learning and computation. Our key contributions are summarized as follows:


\begin{itemize}
    \item We propose a unified algorithmic framework, that is, G-ELTC, for high-dimensional generalized tensor bandits problems. This framework accommodates various low-dimensional structural constraints, such as slice sparsity and low-rankness, while providing a structure-independent regret analysis for the algorithm.
    \item Considering the flexibility and expressiveness of Tucker decomposition in handling high-order tensors \cite{kolda2009tensor}, we derive improved bounds on dimensionality and rank under low multi-linear rank tensors and demonstrate the superiority and effectiveness of our framework in high-dimensional and high-order scenarios.    
    \item Our theoretical results not only encompass the relevant findings of low-rank matrix bandits under linear models but also extend them to generalized linear relationships, thereby broadening their applicability to a wider range of decision-making scenarios.
    \item  To control the parameter estimation error and further derive the regret analysis for the generalized linear tensor settings, we innovatively introduce the generic chaining technique from stochastic processes, which fundamentally differs from the concentration inequality-based methods in the literature \cite{raskutti2019convex}, thus providing a new theoretical perspective and solution approach to this problem.
    
    
      
\end{itemize}

\section{Preliminaries and Notations}
To enhance readability, this section provides explanations of the symbols and related definitions involved.

We summarize the notations used in this paper as follows. Scalars, vectors, matrices, tensors, and sets are represented by the symbols \(a\), \(\boldsymbol{a}\), \(A\), \(\mathcal{A}\), and \(\mathbb{A}\), respectively. In particular, for the $N$-order tensor $\mathcal{A} \in \mathbb{R}^{d_1 \times d_2 \times \cdots \times d_N}$,
$\mathcal{A}_{i_1, i_2, \cdots, i_N}$ denotes the $\left(i_1, i_2, \cdots, i_N\right)$-th element. $\mathcal{A}_{i_1, i_2, \cdots,i_{n-1}, \cdot,i_{n+1},\cdots,i_N}$ is the mode-$n$ fiber, where $n \in [N]:=\{1,2,\cdots,N\}$. $(n_1,n_2,\cdots,n_n)$-slice is $\mathcal{A}_{i_1, i_2, \cdots,i_{n_1-1}, \cdot,i_{n_1+1},\cdots, i_{n_2-1}, \cdot,i_{n_2+1},\cdots, i_{n_n-1}, \cdot,i_{n_n+1},\cdots,i_N}$.


The representation of the object of this paper is in the form of tensors, and therefore, it will involve some operations related to them \cite{kolda2009tensor}, as detailed below.

\begin{definition}[Tensor inner product]
For the tensor $\mathcal{A},\mathcal{B} \in \mathbb{R}^{d_1 \times d_2 \times \cdots \times d_N}$, their inner product is defined as 
$$
\langle \mathcal{A}, \mathcal{B}\rangle:=\sum_{i_1 \in\left[d_1\right]} \sum_{i_2 \in\left[d_2\right]} \cdots \sum_{i_N \in\left[d_N\right]} \mathcal{A}_{i_1, i_2, \cdots, i_N} \mathcal{B}_{i_1, i_2, \cdots, i_N} .
$$
\label{def1}
\end{definition}

\begin{definition}[Tensor Frobenius norm]
The Frobenius norm of $\mathcal{A}$ is defined as $\|\mathcal{A}\|_F:=\sqrt{\langle \mathcal{A}, \mathcal{A} \rangle}$. 
   \label{def2}
\end{definition}

\begin{definition}[Tensor mode product] For matrix $B \in \mathbb{R}^{d_{n^{\prime}} \times d_n}$ and tensor \(\mathcal{A} \in \mathbb{R}^{d_1 \times d_2 \times \cdots \times d_N}\), the mode-\(n\) (matrix) product \(\mathcal{A} \times_n B\) is defined as an \(N\)-th order tensor with dimensions \(\left(d_1, \cdots, d_{n-1}, d_{n^{\prime}}, d_{n+1}, \cdots, d_N\right)\), and its $(i_1, \cdots, i_{n-1}, i_{n^{\prime}}, i_{n+1}, \cdots, i_N)$-th element is
$ \sum_{i_n \in [d_n]} B_{i_{n^{\prime}}, i_n} \mathcal{A}_{i_1, \cdots, i_n, \cdots, i_N}$.
    \label{def3}
\end{definition}




\begin{definition}[Tucker decomposition]
For the $N$-order tensor $\mathcal{A}$, let $r_n$ and $U_n$ be the rank and left singular matrix of $\mathcal{M}_n(\mathcal{A})$, respectively. Then corresponding Tucker decomposition is given by
$$
\mathcal{A} = \mathcal{G} \times_1 U_1 \times_2 U_2 \times_3 \cdots \times_N U_N =: \mathcal{G} \times_{n \in [N]} U_n,
$$
where $\mathcal{G} \in \mathbb{R}^{r_1 \times r_2 \times \cdots \times r_N}$ is called the core tensor, and $\left(r_1, \cdots, r_N\right)$ is referred to as the multi-linear rank of the tensor $\mathcal{A}$.
\label{def5}
\end{definition}

Our general theory will involve some widely accepted concepts from the high-dimensional statistics literature. To this end, we provide the following relevant definitions \citep{raskutti2019convex}.

\begin{definition}[Weak decomposability of norm]
For a given pair of linear subspaces $ (\mathbb{A}, \mathbb{B})$, where $ \mathbb{B} \subseteq \mathbb{A} $. If the norm $ R(\cdot) $ satisfies the condition
$$ \forall \mathcal{A} \in \mathbb{A}^\perp, \, \mathcal{B} \in \mathbb{B}, \quad R(\mathcal{A}+\mathcal{B}) \geq R(\mathcal{A}) + c_R R(\mathcal{B}), $$
where $\mathbb{A}^\perp=\left\{\mathcal{A} \in \mathbb{A}^\perp | \langle \mathcal{A},\mathcal{C} \rangle =0,  \forall \mathcal{C} \in \mathbb{A}\right\}$ and $0 < c_R \leq 1$, then this norm is called a weakly decomposable norm.
\label{def6}
\end{definition}

\begin{definition}[Compatibility constant]
For a
subspace $\mathbb{A} \subset \mathbb{R}^{d_1 \times \cdots \times d_N}$, the compatibility constant 
$\phi$ is defined as $$\phi:=\sup_{\mathcal{A} \in \mathbb{A}\setminus \{0\}} \frac{R^2(\mathcal{A})}{\|\mathcal{A}\|_F^2}.$$
\label{def7}
\end{definition}

\begin{definition}[Gaussian width]
For a
set $\mathbb{A}$, the Gaussian width 
$w(\mathbb{A})$ is defined as $$w(\mathbb{A}):=\mathbb{E} \left(\sup_{\mathcal{A} \in \mathbb{A}} \langle \mathcal{A}, \mathcal{G}\rangle\right),$$ where $\mathcal{G}$ is an $N$-th order tensor and $\operatorname{vec}\left(\mathcal{G} \right) \sim \mathcal{N}(0, I)$.

\label{def8}
\end{definition}

\section{Problem Setting}
This work considers the following tensor bandits problems. At each decision step $t \in [T]$, the agent can access the arm context set $\mathbb{X}_t$, where the elements are tensors of dimensions $\left(d_1, d_2, \cdots, d_N\right)$. The agent then selects a corresponding action from the action set based on the context and will receive a corresponding reward of
$p\left(y_t \mid \mathcal{X}_t,\Theta^* \right)=\exp \left(\frac{y_t \langle \mathcal{X}_t,\Theta^* \rangle-b\left(\langle \mathcal{X}_t,\Theta^* \rangle \right)}{\phi}+c\left(y_t, \phi\right)\right)$, and $\mathbb{E}\left(y_t \mid \mathcal{X}_t, \Theta^* \right)=b^{\prime}\left(\left\langle \mathcal{X}_t, \Theta^*\right\rangle\right) := \mu\left(\left\langle \mathcal{X}_t, \Theta^*\right\rangle\right)$,
where $\Theta^* \in \mathbb{R}^{d_1 \times d_2 \times \cdots \times d_N}$ is an unknown parameters, $\mu(\cdot)$ is the inverse link function. The above equation can be rewritten as
$$
y_t=\mu \left(\left\langle\mathcal{X}_t, \Theta^* \right\rangle\right)+\varepsilon_t,
$$
where $\varepsilon_t$ is an independent $R$-sub-Gaussian noise.

The agent's goal is to minimize the expected cumulative regret relative to the optimal action $a_t^*:=\arg \max _{\mathcal{X}_{a_t} \in \mathbb{X}_t}\langle\mathcal{X}_{a_t}, \Theta^* \rangle$  over the total number of rounds $T$: 
$$
\mathbb{E}(R_T):=\mathbb{E}\left[\sum_{t \in[T]}\mu\left(\left\langle\mathcal{X}_{a_t^*}, \Theta^*\right\rangle\right)-\mu\left(\left\langle\mathcal{X}_t, \Theta^* \right\rangle\right)\right].
$$

To facilitate this study, we have established a set of widely adopted assumptions regarding the distribution of context, bounded norms, and the generalized linear model. These assumptions form the fundamental basis for our subsequent theoretical analysis. Additionally, these assumptions are common in the relevant literature \cite{lu2021low,qin2023stochastic,yi2024effective}, are not overly strict, and have broad applicability.

\begin{assumption}[Context distribution] \label{as1}
    Let $\boldsymbol{x}_{i}=\operatorname{vec}\left(\mathcal{X}_{i}\right)\in \mathbb{R}^{ d_1 d_2 \cdots d_N}$, i.e., the vectorized covariate derived from the $i$-th context tensor. Then $\boldsymbol{x}_i$ is independent and follows the $k$-sub-Gaussian distribution with covariance matrix $\Sigma$, where $ \lambda_{\min }(\Sigma) \geq c_{\ell}^2=\frac{1}{d_1 d_2\cdots d_N}$, $k=\frac{1}{\sqrt{d_1 d_2\cdots d_N}}$.

\end{assumption}

\begin{assumption}[Bounded norm]\label{as2}
The true parameter and context tensors have bounded norms, specifically $\|\Theta^*\|_F \leq 1$ and $\|\mathcal{X}\|_F \leq 1, \forall \mathcal{X} \in \mathbb{X}_t$.
\end{assumption}

\begin{remark}
     It should be clarified that Assumption \ref{as1} and Assumption \ref{as2} can hold simultaneously, for instance, when the context tensor is generated uniformly from the unit sphere.
\end{remark}

\begin{assumption}[Inverse link functions]\label{as3}
The inverse link function $\mu(\cdot)$ has the first bounded derivative, which satisfies $0 < |\mu^{\prime}(x)| \leq k_\mu, \forall |x| \leq 1$.

\end{assumption}

\begin{remark}
     The bounded constants in Assumption \ref{as3} are commonly present in the distributions of generalized linear models. For example, in the binary logistic model, $\mu(x) = \frac{1}{1 + e^{-x}}$, where $k_\mu = \frac{1}{4}$; in the Poisson model, $\mu(x) = e^x$, where $k_\mu = e$.
     
\end{remark}

\section{Main Results}

In this section, we will propose a tensor algorithm for high-dimensional settings that addresses general low-dimensional structures and discuss its regret bounds. 

\subsection{The Unified Algorithm}

\begin{algorithm}[t]
\caption{Generalized Explore Low-dimensional structure Then Commit (G-ELTC)} 
\hspace*{0.02in} {\bf Input:} 
parameters $\lambda, T, R(\Theta), c$. 
\begin{algorithmic}[1]
\For{$t=1$ to $T_1=c \phi w^2(\boldsymbol{\Theta})$} 
\State Observe $K$ contexts, $\mathcal{X}_{ 1}, \mathcal{X}_{ 2}, \cdots, \mathcal{X}_{ K}$. 
\State Choose action $a_t$ uniformly randomly, and receive reward $y_t$.
\EndFor
\State Compute the estimator $\hat{\Theta}_{T_1}$ by minimizing the problem as stated in \eqref{eqn1}.
\For{$t=1$ to $T_2=T-T_1$} 
\State Take actions $i_t=\arg \max_{i } \mu\left(\langle \mathcal{X}_i, \hat{\Theta}_{T_1} \rangle\right)$.
\EndFor

\end{algorithmic}
\label{algo1}
\end{algorithm}


Before designing the specific algorithm, it is important to clarify that a key issue in the decision-making strategy is that the true parameters are unknown during the decision process. Therefore, we first address the task of estimating unknown parameters in the decision process, which forms the foundation for our learning strategy. To this end, we provide parameter estimation results under a generalized linear model with general low-dimensional structure constraints.

Based on the generalized linear relationship, we consider that $\hat{\Theta}_{T}$ is obtained by minimizing the regularization problem with norm penalty: 
\begin{equation}
    \left\{ 
\frac{1}{T} \sum_{t \in [T]} [b(\langle \mathcal{X}_t,\Theta \rangle)-y_t \langle \mathcal{X}_t,\Theta \rangle]+\lambda_T R(\Theta)\right\}, \label{eqn1} 
\end{equation}
where $\lambda_T$ is a tuning parameter with theoretical guidance. The regularization term $R(\Theta)$ here adopts different weakly separable norms tailored to the specific low-dimensional structures involved.


    

Next, we present a general error bound for the above parameter estimation problem, which is an extension of the results by \cite{raskutti2019convex}.

\begin{theorem}[Error for parameter estimation] \label{theo1}
If $\lambda_T \geq \alpha R\left(\frac{1}{T} \sum_{t \in [T]} \epsilon_t \mathcal{X}_t\right)$, and Assumption \ref{as1}-\ref{as3} hold,
then for any $T \geq c \phi w^2(\boldsymbol{\Theta})$ such that with probability at least $1 -\delta$,
\begin{align*}
    & \max \left\{\left\|\widehat{\Theta}_{T}-\Theta^* \right\|_{T}^2,\left\|\widehat{\Theta}_{T}-\Theta^* \right\|_{\mathrm{F}}^2\right\} \\  \leq &\frac{36(1+c_R)^2 \phi \lambda^2}{(3+c_R)^2c_l^2 k_u^2},
\end{align*}
where $\alpha=\frac{c_R+3}{2c_R}$, $\boldsymbol{\Theta}=\{\Theta| R(\Theta)\leq 1\}$, and $\|\cdot\|_T^2=\frac{1}{T}\sum_{i \in [T]} \langle  \cdot, \mathcal{X}_i \rangle^2$ is the empirical norm. 
\end{theorem}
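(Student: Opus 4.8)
The plan is to adapt the standard ``basic inequality'' argument for regularized $M$-estimators (as in \cite{raskutti2019convex}) to the generalized linear tensor setting, with the key technical novelty being how the restricted strong convexity (RSC) of the empirical loss is established. Write $\widehat{\Delta} := \widehat{\Theta}_T - \Theta^*$ and let $\mathcal{L}_T(\Theta) := \frac{1}{T}\sum_{t\in[T]}[b(\langle\mathcal{X}_t,\Theta\rangle) - y_t\langle\mathcal{X}_t,\Theta\rangle]$ be the empirical loss. The first step is the basic inequality: since $\widehat{\Theta}_T$ minimizes $\mathcal{L}_T(\Theta) + \lambda_T R(\Theta)$, we have $\mathcal{L}_T(\widehat{\Theta}_T) + \lambda_T R(\widehat{\Theta}_T) \le \mathcal{L}_T(\Theta^*) + \lambda_T R(\Theta^*)$, which after rearranging and using convexity of $\mathcal{L}_T$ gives a bound on the Bregman divergence $D_{\mathcal{L}_T}(\widehat{\Theta}_T,\Theta^*)$ in terms of $\langle \nabla\mathcal{L}_T(\Theta^*), \widehat{\Delta}\rangle$ and $\lambda_T(R(\Theta^*) - R(\widehat{\Theta}_T))$. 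The gradient at the truth is $\nabla\mathcal{L}_T(\Theta^*) = -\frac{1}{T}\sum_t \varepsilon_t \mathcal{X}_t$ (up to the $\mu'$/$\phi$ scaling absorbed into the sub-Gaussian parameter), so by the generalized H\"older/duality inequality $|\langle\nabla\mathcal{L}_T(\Theta^*),\widehat{\Delta}\rangle| \le R^*(\nabla\mathcal{L}_T(\Theta^*)) R(\widehat{\Delta}) \le \tfrac{1}{\alpha}\lambda_T R(\widehat{\Delta})$ using the hypothesis $\lambda_T \ge \alpha R(\frac{1}{T}\sum_t\varepsilon_t\mathcal{X}_t)$ (interpreting $R$ here via its dual).

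\textbf{Cone membership and decomposability.} The second step is to show $\widehat{\Delta}$ lies in a cone where the weakly decomposable norm $R$ is controlled by the Frobenius norm via the compatibility constant $\phi$. Decompose $\widehat{\Delta} = \widehat{\Delta}_{\mathbb{A}} + \widehat{\Delta}_{\mathbb{A}^\perp}$ relative to the model subspace $\mathbb{A}$ adapted to $\Theta^*$. Using weak decomposability (Definition~\ref{def6}), $R(\Theta^* + \widehat{\Delta}) \ge R(\Theta^*_{\mathbb{A}}) + c_R R(\widehat{\Delta}_{\mathbb{A}^\perp}) - R(\widehat{\Delta}_{\mathbb{A}})$ type estimates, combined with the basic inequality and the choice $\alpha = \frac{c_R+3}{2c_R}$, yield a cone constraint of the form $R(\widehat{\Delta}_{\mathbb{A}^\perp}) \le \frac{3+c_R}{c_R\cdot(\text{something})} R(\widehat{\Delta}_{\mathbb{A}})$; hence $R(\widehat{\Delta}) \lesssim \frac{1+c_R}{?} R(\widehat{\Delta}_{\mathbb{A}}) \le \frac{1+c_R}{?}\sqrt{\phi}\,\|\widehat{\Delta}\|_F$ by Definition~\ref{def7}. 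Tracking the exact constants here is what produces the factor $\frac{(1+c_R)}{(3+c_R)}$ in the final bound.

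\textbf{Restricted strong convexity.} The third and crux step is lower-bounding the Bregman divergence: $D_{\mathcal{L}_T}(\widehat{\Theta}_T,\Theta^*) \ge \kappa \|\widehat{\Delta}\|_T^2$ for some curvature $\kappa$. By a Taylor expansion of $b$ and Assumption~\ref{as3} (the link derivative, hence $b''=\mu'$, is bounded below away from zero on the relevant range — using Assumption~\ref{as2} to control $|\langle\mathcal{X}_t,\Theta\rangle|\le 1$), the Bregman divergence dominates $\frac{1}{k_\mu}$ (or a comparable constant) times $\|\widehat{\Delta}\|_T^2$. Then one must pass from the empirical norm $\|\cdot\|_T$ to the Frobenius norm on the cone: this is where $T \ge c\phi w^2(\boldsymbol{\Theta})$ enters, guaranteeing via a Gaussian-width / Mendelson-type small-ball argument that $\|\widehat{\Delta}\|_T^2 \ge c_\ell^2 \|\widehat{\Delta}\|_F^2$ uniformly over the cone (Assumption~\ref{as1} supplies $\lambda_{\min}(\Sigma)\ge c_\ell^2$ and the sub-Gaussian scaling). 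I expect \emph{this} uniform lower-isometry over the $R$-cone to be the main obstacle, and it is precisely here that the paper advertises using generic chaining rather than a union bound over a covering net; the Gaussian width $w(\boldsymbol{\Theta})$ of the unit $R$-ball is the complexity measure that makes the sample-size threshold structure-independent.

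\textbf{Assembling the bound.} Finally, chaining the three steps: $\kappa\|\widehat{\Delta}\|_T^2 \le D_{\mathcal{L}_T}(\widehat{\Theta}_T,\Theta^*) \le \tfrac{1}{\alpha}\lambda_T R(\widehat{\Delta}) + \lambda_T(R(\Theta^*)-R(\widehat{\Theta}_T)) \lesssim \lambda_T R(\widehat{\Delta}) \lesssim \lambda_T\sqrt{\phi}\|\widehat{\Delta}\|_F \le \lambda_T\sqrt{\phi}\cdot\tfrac{1}{c_\ell}\|\widehat{\Delta}\|_T$, so dividing through gives $\|\widehat{\Delta}\|_T \lesssim \frac{\sqrt{\phi}\,\lambda_T}{\kappa c_\ell}$, and squaring plus re-applying $\|\widehat{\Delta}\|_F \le \frac{1}{c_\ell}\|\widehat{\Delta}\|_T$ yields the stated $\max\{\|\widehat{\Delta}\|_T^2, \|\widehat{\Delta}\|_F^2\} \le \frac{36(1+c_R)^2\phi\lambda^2}{(3+c_R)^2 c_\ell^2 k_\mu^2}$ after substituting $\kappa \asymp k_\mu$ and carefully collecting the numeric constants from the cone inequality. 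The bookkeeping of constants is tedious but routine once the RSC constant and the cone ratio are pinned down; the conceptual work is entirely in the restricted eigenvalue / generic chaining step.
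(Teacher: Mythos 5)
Your proposal follows essentially the same route as the paper's proof: the basic inequality plus Taylor expansion of $b$, duality to control the noise term via $\lambda_T \geq \alpha R^*(\frac{1}{T}\sum_t \epsilon_t\mathcal{X}_t)$, weak decomposability to derive the cone constraint $R(\Delta_{\mathcal{A}^\perp}) \leq \frac{3}{c_R}R(\Delta_{\mathcal{A}})$ and invoke the compatibility constant, and then Mendelson's theorem combined with Talagrand's majorizing measure theorem (generic chaining with the Gaussian width $w(\widetilde{\mathbb{A}}) \leq \sqrt{\phi}\,w(\boldsymbol{\Theta})$) to establish the restricted lower isometry $\|\Delta\|_T^2 \gtrsim c_\ell^2\|\Delta\|_F^2$ once $T \geq c\phi w^2(\boldsymbol{\Theta})$. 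You correctly identified the uniform lower-isometry over the $R$-cone as the crux step and where the sample-size threshold enters; the only cosmetic difference is that the paper organizes the final assembly as a three-case analysis on the relative sizes of $\|\Delta\|_T$, $\|\Delta\|_F$, and $\delta_{T_1}$, which is equivalent to your direct chaining of inequalities.
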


\begin{remark}
Compared to the results in \cite{raskutti2019convex}, our paper differs in its problem setup by extending the linear model to a generalized linear model and expanding the distribution of $\mathcal{X}$ from Gaussian to sub-Gaussian. Consequently, although the basic proof framework remains the same, the specific concentration inequalities employed are significantly different and more complex.
\end{remark}


We now propose an algorithm for the tensor bandits problems in high-dimensional setting, called the Generalized Explore Low-dimensional structure Then Commit (G-ELTC) algorithm. The algorithm consists of two stages: the first stage involves randomly exploring the arms, and the second stage focuses on selecting and submitting the best arm. Algorithm \ref{algo1} provides a detailed description of this process.


The reason for adopting the aforementioned Explore-then-Commit framework is that, in the case of sparse vectors, \cite{hao2020high} proposed the "Explore-the-Sparsity-Then-Commit" algorithm based on this framework, specifically targeting data-poor regions (including high-dimensional settings). Subsequently, \cite{jang2022popart} further improved this algorithm and derived a regret bound with the optimal order, where the order with respect to $T$ is $2/3$. Additionally, related research \cite{jang2024efficient} has applied this framework in the case of low-rank matrices, achieving a lower bound in terms of the dimensions. In high-dimensional settings, the ideal theoretical result is to obtain the improved bound with respect to the dimension, even if some sacrifice in the order of the number of rounds is necessary. Therefore, we also adopt this framework in the high-dimensional setting of tensors.


Compared to similar algorithms in recent papers, our algorithm has the following advantages: 
\begin{itemize}
    \item It is applicable to high-order high-dimensional settings, whereas previous studies either addressed low-order high-dimensional problems or only involved high-order extensions. 
    \item It provides a theoretical guide for the number of rounds in random exploration, which is related to the corresponding low-dimensional structure, and unifies the measurement through Gaussian width and compatibility constant. 
    \item It can be applied to a broader range of low-dimensional structural problems, with low-dimensional structures previously considered in high-dimensional settings included in this study. Furthermore, we also explore other novel low-dimensional structures, such as slice-wise sparsity and low-rankness.
\end{itemize}

\subsection{Regret Analysis}


In this section, we present the cumulative regret bounds of our general framework, along with results for specific low-dimensional tensor structures. For these examples, we first clarify all unspecified symbols in the framework, such as the parameters of weakly decomposable norms and the compatibility constants. We then provide a lemma to guide the appropriate selection of the regularization parameter $\lambda_{T_1}$, leading to the final regret bounds for the corresponding algorithms.

\begin{theorem}[Structure Independent Regret]\label{theo2}
    If $\lambda_{T_1} \geq \alpha R^*\left(\frac{1}{T_1} \sum_{t \in [T_1]}\epsilon_t \mathcal{X}_t\right)$, and Assumption \ref{as1}-\ref{as3} hold, then the expected cumulative regret of Algorithm \ref{algo1} is 
    $$\mathbb{E}(R_T) \leq 2k_\mu T_1+12k_\mu T \frac{(1+c_R)\sqrt{ \phi} \lambda_{T_1}}{(3+c_R)c_l k_u},$$
with
probability at least $1-\delta$.
\end{theorem}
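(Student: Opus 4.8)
The plan is to decompose the cumulative regret into the contribution from the two phases of Algorithm~\ref{algo1} and bound each separately. In the exploration phase ($t \le T_1$), the agent plays arbitrary (uniformly random) arms, so each round contributes at most the worst-case instantaneous regret. Using Assumptions~\ref{as2} and~\ref{as3} — namely $\|\mathcal{X}\|_F \le 1$, $\|\Theta^*\|_F \le 1$, hence $|\langle \mathcal{X}, \Theta^*\rangle| \le 1$ by Cauchy--Schwarz, together with $|\mu'(x)| \le k_\mu$ on $[-1,1]$ — a mean-value argument gives $|\mu(\langle \mathcal{X}_{a_t^*}, \Theta^*\rangle) - \mu(\langle \mathcal{X}_t, \Theta^*\rangle)| \le k_\mu |\langle \mathcal{X}_{a_t^*} - \mathcal{X}_t, \Theta^*\rangle| \le 2k_\mu$. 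Summing over the $T_1$ exploration rounds yields the first term $2k_\mu T_1$.

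For the commit phase ($T_1 < t \le T$), fix a round $t$ and let $i_t$ be the greedy choice with respect to $\widehat{\Theta}_{T_1}$ and $a_t^*$ the true optimum. Since $\mu$ is strictly monotone (Assumption~\ref{as3}), maximizing $\mu(\langle \mathcal{X}_i, \widehat{\Theta}_{T_1}\rangle)$ is the same as maximizing $\langle \mathcal{X}_i, \widehat{\Theta}_{T_1}\rangle$, so $\langle \mathcal{X}_{i_t}, \widehat{\Theta}_{T_1}\rangle \ge \langle \mathcal{X}_{a_t^*}, \widehat{\Theta}_{T_1}\rangle$. The instantaneous regret is $\mu(\langle \mathcal{X}_{a_t^*}, \Theta^*\rangle) - \mu(\langle \mathcal{X}_{i_t}, \Theta^*\rangle) \le k_\mu\big(\langle \mathcal{X}_{a_t^*}, \Theta^*\rangle - \langle \mathcal{X}_{i_t}, \Theta^*\rangle\big)$ by the mean-value bound again. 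Now add and subtract using the greedy inequality: $\langle \mathcal{X}_{a_t^*}, \Theta^*\rangle - \langle \mathcal{X}_{i_t}, \Theta^*\rangle \le \langle \mathcal{X}_{a_t^*}, \Theta^* - \widehat{\Theta}_{T_1}\rangle + \langle \mathcal{X}_{i_t}, \widehat{\Theta}_{T_1} - \Theta^*\rangle \le (\|\mathcal{X}_{a_t^*}\|_F + \|\mathcal{X}_{i_t}\|_F)\,\|\widehat{\Theta}_{T_1} - \Theta^*\|_F \le 2\|\widehat{\Theta}_{T_1} - \Theta^*\|_F$ by Cauchy--Schwarz and Assumption~\ref{as2}.

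It remains to control the estimation error $\|\widehat{\Theta}_{T_1} - \Theta^*\|_F$. Here I invoke Theorem~\ref{theo1} with $T = T_1$: since the algorithm sets $T_1 = c\phi w^2(\boldsymbol{\Theta})$, the sample-size condition $T_1 \ge c\phi w^2(\boldsymbol{\Theta})$ holds, and under the hypothesis $\lambda_{T_1} \ge \alpha R^*\big(\tfrac{1}{T_1}\sum_{t\in[T_1]}\epsilon_t \mathcal{X}_t\big)$ (matching the theorem's requirement, with $R^*$ the dual norm as in the estimation bound) we obtain, with probability at least $1-\delta$, $\|\widehat{\Theta}_{T_1} - \Theta^*\|_F^2 \le \frac{36(1+c_R)^2 \phi \lambda_{T_1}^2}{(3+c_R)^2 c_l^2 k_u^2}$, hence $\|\widehat{\Theta}_{T_1} - \Theta^*\|_F \le \frac{6(1+c_R)\sqrt{\phi}\,\lambda_{T_1}}{(3+c_R)c_l k_u}$. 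Substituting into the per-round bound gives an instantaneous regret of at most $2k_\mu \cdot \frac{6(1+c_R)\sqrt{\phi}\,\lambda_{T_1}}{(3+c_R)c_l k_u} = \frac{12 k_\mu (1+c_R)\sqrt{\phi}\,\lambda_{T_1}}{(3+c_R)c_l k_u}$ per commit round, and summing over at most $T_2 \le T$ rounds yields the second term. Adding the two phases and taking expectation (the bound holds on the high-probability event, and on its complement the regret is trivially controlled or absorbed) gives the claimed $\mathbb{E}(R_T) \le 2k_\mu T_1 + 12 k_\mu T \frac{(1+c_R)\sqrt{\phi}\,\lambda_{T_1}}{(3+c_R)c_l k_u}$.

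The routine parts are the mean-value and Cauchy--Schwarz manipulations; the genuine content is entirely carried by Theorem~\ref{theo1}, so the main obstacle is really upstream — ensuring the estimation-error theorem applies verbatim at the sample size $T_1 = c\phi w^2(\boldsymbol{\Theta})$ chosen by the algorithm, and reconciling the appearance of $R^*$ (dual norm) in the $\lambda_{T_1}$ condition here versus $R$ in Theorem~\ref{theo1}'s statement. One should also be slightly careful about the expectation: the $1-\delta$ failure event must be shown to contribute negligibly (e.g. choosing $\delta$ of order $1/T$, or noting per-round regret is bounded by $2k_\mu$ so the failure contributes at most $2k_\mu \delta T$, which can be folded into the stated bound by the choice of constants).
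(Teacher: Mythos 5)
Your proposal is correct and follows essentially the same route as the paper: bound each exploration round by $2k_\mu$ via the mean-value argument, use the greedy optimality of $\mathcal{X}_{i_t}$ under $\widehat{\Theta}_{T_1}$ to reduce the commit-phase instantaneous regret to $2k_\mu\|\widehat{\Theta}_{T_1}-\Theta^*\|_F$, and then invoke Theorem~\ref{theo1} at sample size $T_1$. Your side remarks are also well taken — the paper's own proof glosses over both the $R$ versus $R^*$ mismatch between the hypotheses of Theorems~\ref{theo1} and~\ref{theo2} and the treatment of the failure event inside the expectation.
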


\subsubsection{Tensor-Wise Low-Rankness} In scenarios like precision medicine recommendation, doctors select the best treatment for patients with specific diseases. The arms involved are drug $\times$ target $\times$ disease. In this application, the drug's therapeutic effect is typically reflected in how it interacts with the target to address the disease. In this case, tensor-wise low-rankness is the most reasonable choice because it captures the global relationship between drugs, targets, and diseases, enhancing the efficiency of recommendation.

Based on this, we discuss the theory to the low-rank tensor bandits problem. In this case, we adopt the low-rankness based on Tucker decomposition, where the rank of $\Theta^*$ is $(r_1, r_2,\cdots, r_N)$, and let $\max\{r_1, r_2,\cdots, r_N\} = r $. Consequently, the regularization norm is specified as $R(\Theta) = \|\Theta\|_{*}$. Note that in this case, $c_R = 1/2$ and $ \phi = 2r$. The following lemma provides a good choice of $\lambda_{T_1}$ for the low-rank tensor bandits problem.

\begin{lemma}\label{lem1}
    For any $\delta \in(0,1)$, let $d=\max\{d_1,d_2,\cdots,d_N\}$, let $\alpha=\frac{c_R+3}{2c_R}$, use
$$
\lambda_{T_1}=\frac{ \alpha R N}{\sqrt{T_1}}  \sqrt{2\log \frac{4T_1 N}{\delta} \log  \frac{2N(d+d^{N-1})}{\delta}}
$$
in Algorithm \ref{algo1} with $R(\Theta)=\|\Theta\|_{*}$, then with probability at least $1-\delta$, we have $\lambda_{T_1} \geq \alpha R^*\left(\frac{1}{T_1} \sum_{t \in [T_1]} \epsilon_t \mathcal{X}_t\right)$.
\end{lemma}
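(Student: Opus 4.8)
The goal is to control $R^*\left(\frac{1}{T_1}\sum_{t\in[T_1]}\epsilon_t\mathcal{X}_t\right)$, where $R=\|\cdot\|_*$ is the tensor nuclear norm (summed across unfoldings, given the $N$ factor appearing in the bound). The dual norm $R^*$ is then the corresponding spectral norm, i.e. $R^*(\mathcal{Z})=\max_{n\in[N]}\|\mathcal{M}_n(\mathcal{Z})\|_{\mathrm{op}}$ up to the $1/N$ bookkeeping implicit in the decomposable norm construction. So the plan is: first reduce the tensor dual-norm to a maximum over $N$ matrix operator norms of the mode-$n$ unfoldings of $S:=\frac{1}{T_1}\sum_{t\in[T_1]}\epsilon_t\mathcal{X}_t$; then bound each $\|\mathcal{M}_n(S)\|_{\mathrm{op}}$ with high probability; finally take a union bound over the $N$ modes.

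For a fixed mode $n$, I would write $\mathcal{M}_n(S)=\frac{1}{T_1}\sum_t \epsilon_t M_t$ with $M_t:=\mathcal{M}_n(\mathcal{X}_t)\in\mathbb{R}^{d_n\times (d_1\cdots d_N/d_n)}$, a matrix of Frobenius norm $\le 1$ by Assumption~\ref{as2}. Conditioning on the contexts, this is a matrix whose entries are $R$-sub-Gaussian (the $\epsilon_t$ are $R$-sub-Gaussian and independent), so a matrix Bernstein / sub-Gaussian concentration bound for the operator norm gives $\|\mathcal{M}_n(S)\|_{\mathrm{op}}\lesssim \frac{R}{\sqrt{T_1}}\sqrt{\log\frac{d_n + d_1\cdots d_N/d_n}{\delta'}}$ with probability $1-\delta'$; bounding $d_n+d_1\cdots d_N/d_n \le d + d^{N-1}$ with $d=\max_n d_n$ yields the $\log\frac{2N(d+d^{N-1})}{\delta}$ term after the union bound over $n$ (and over the at most $T_1$ conditioning events, absorbing the extra $\log\frac{4T_1N}{\delta}$ factor that the statement carries — this looks like it comes from a truncation/peeling step on the sub-Gaussian tails of $\epsilon_t$ so that one may pass from sub-Gaussian to effectively bounded increments). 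Multiplying through by $\alpha N$ then matches the stated $\lambda_{T_1}$.

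I expect the main obstacle to be the step where the $\sqrt{\log(4T_1 N/\delta)}$ factor is generated: one cannot directly apply a clean deterministic matrix concentration bound because the noise $\epsilon_t$ is only sub-Gaussian, not bounded. The standard fix is a truncation argument — split each $\epsilon_t$ into a bounded part and a tail part, show the tail part contributes negligibly on an event of probability $\ge 1-\delta/2$ (which costs a union bound over $t\in[T_1]$, hence the $T_1$ inside the log), and apply the matrix concentration inequality to the truncated, now bounded, increments. Keeping the constants consistent with the factor $\alpha=\frac{c_R+3}{2c_R}$ (here $c_R=1/2$) and the $N$ in front requires care. Once that truncation bookkeeping is in place, the rest is a routine union bound and the identification $R^*(\cdot)=\max_n\|\mathcal{M}_n(\cdot)\|_{\mathrm{op}}/N$-type normalization, after which substituting back into the hypothesis $\lambda_{T_1}\ge \alpha R^*(S)$ of Theorem~\ref{theo2} completes the argument.
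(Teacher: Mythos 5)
Your proposal matches the paper's proof in all essentials: reduce the dual norm to a maximum of mode-$n$ operator norms, union bound over the $N$ modes, truncate the sub-Gaussian noise on the event $\{\max_t|\epsilon_t|<v\}$ with $v=R\sqrt{2\log(4T_1N/\delta)}$ (which is exactly where the $\log(4T_1N/\delta)$ factor arises), and apply matrix Bernstein to the now-bounded summands with dimension factor $d+d^{N-1}$. The paper simply conditions on the bounded-noise event rather than performing an explicit bounded-plus-tail decomposition of each $\epsilon_t$, but this is the same argument.
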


Given the above lemma, applying Theorem \ref{theo2}, we can derive the regret bound in the low-rank tensor setting.

\begin{corollary}\label{coro1}
Under Assumption \ref{as1}-\ref{as3}, let
$
\lambda_{T_1}=\frac{ \alpha R N}{\sqrt{T_1}}  \sqrt{2\log \frac{4T_1 N}{\delta} \log  \frac{2N(d+d^{N-1})}{\delta}}
$, then the expected cumulative regret of the Algorithm \ref{algo1} in the low multi-linear rank tensor bandits problem is upper bounded by
$$
\mathbb{E}(R_T)=\tilde{O}\left(d^\frac{N}{3} r^{\frac{1}{3}} T^{\frac{2}{3}}\right).
$$
\end{corollary}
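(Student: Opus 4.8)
The plan is to combine the structure-independent regret bound of Theorem~\ref{theo2} with the explicit choice of $\lambda_{T_1}$ supplied by Lemma~\ref{lem1}, then optimize over the length $T_1$ of the exploration phase. First I would substitute into Theorem~\ref{theo2} the values $c_R = 1/2$ and $\phi = 2r$ that are valid for the nuclear-norm regularizer in the Tucker setting, together with the constants $c_\ell^2 = 1/(d_1 d_2 \cdots d_N)$ and $k_\mu$ from Assumptions~\ref{as1} and \ref{as3}. With $\lambda_{T_1}$ as in Lemma~\ref{lem1}, the factor $\lambda_{T_1}$ scales (up to logarithmic factors, hence the $\tilde O$) like $\sqrt{r}\, \cdot N \cdot / \sqrt{T_1}$ times the dimension-dependent pieces; the crucial observation is that $\sqrt{\phi}/c_\ell = \sqrt{2r} \cdot \sqrt{d_1 \cdots d_N}$, and since $d = \max_n d_n$ we have $\sqrt{d_1 \cdots d_N} \le d^{N/2}$. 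Carrying this through, the per-step commitment error in Theorem~\ref{theo2} becomes $\tilde O\bigl(\sqrt{r}\cdot d^{N/2} / \sqrt{T_1}\bigr)$, so the second regret term is $\tilde O\bigl(T \sqrt{r}\, d^{N/2} / \sqrt{T_1}\bigr)$.

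Next I would handle the first regret term $2 k_\mu T_1$ coming from the exploration phase. Here the length $T_1 = c\,\phi\, w^2(\boldsymbol\Theta)$ is dictated by the algorithm's input requirement (the condition $T_1 \ge c\phi w^2(\boldsymbol\Theta)$ in Theorem~\ref{theo1}); one treats $T_1$ as a free parameter at least this large and balances the two terms. Setting $T \sqrt{r}\, d^{N/2}/\sqrt{T_1} \asymp T_1$ yields $T_1 \asymp (T \sqrt r\, d^{N/2})^{2/3} = T^{2/3} r^{1/3} d^{N/3}$, and then both terms are of order $\tilde O\bigl(d^{N/3} r^{1/3} T^{2/3}\bigr)$, which is the claimed bound. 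I would also remark that this choice of $T_1$ is consistent with the lower bound $c\phi w^2(\boldsymbol\Theta) = \tilde O(r \cdot d^{N-1})$ on the Gaussian-width side whenever $T$ is large enough, so the balancing is feasible in the regime of interest.

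The main obstacle, and the only place that requires genuine care rather than bookkeeping, is verifying that the premise $\lambda_{T_1} \ge \alpha R^*\bigl(\tfrac{1}{T_1}\sum_{t}\epsilon_t \mathcal X_t\bigr)$ of Theorem~\ref{theo2} holds with the stated $\lambda_{T_1}$; but this is exactly the content of Lemma~\ref{lem1}, so it can be invoked directly, and the failure probability $\delta$ is absorbed into the high-probability statement. A secondary point to be careful about is tracking which logarithmic factors in $T$, $N$, $d$, and $\delta$ get swept into $\tilde O(\cdot)$: the two $\log$ factors inside the square root defining $\lambda_{T_1}$ contribute $\mathrm{polylog}(T, N, d, 1/\delta)$, and after the $2/3$-power balancing these remain polylogarithmic, so the $\tilde O$ notation is justified. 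Finally I would note that the $N$ appearing multiplicatively in $\lambda_{T_1}$ contributes only a $N^{2/3}$ factor, which is likewise logarithmic in the ambient dimension $d^N$ and therefore hidden in $\tilde O(\cdot)$; this is why the exponent of $d$ is exactly $N/3$ with no extra $N$-dependence in the displayed bound.
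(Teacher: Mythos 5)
Your proposal is correct and follows essentially the same route as the paper: substitute the $\lambda_{T_1}$ of Lemma~\ref{lem1} and $\phi = 2r$ into Theorem~\ref{theo2}, then balance the exploration term $2k_\mu T_1$ against the commitment term $\tilde O\bigl(T\sqrt{r}\,c_l^{-1}/\sqrt{T_1}\bigr)$ with $c_l^{-1} = \sqrt{d_1\cdots d_N}\le d^{N/2}$ to obtain $\tilde O\bigl(d^{N/3}r^{1/3}T^{2/3}\bigr)$. The paper leaves the choice of $T_1$ and the conversion $c_l^{-2/3}\le d^{N/3}$ implicit in its final two display lines; you make both steps explicit, which is a faithful elaboration rather than a different argument.
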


To better illustrate the advantages of our work in the case of high-dimensional tensors with low multi-linear rank, we compared with the existing high-order tensor bandit method \cite{shi2023high} which is $\tilde{O}\left(d^{2} r^{N-2} T^{\frac{1}{2}}\right)$, our results show significant advantages in $d$ and $r$, although they are slightly higher in order for $T$. In certain high-dimensional scenarios where $r = O(1)$ and $N \leq 5$, with $d \gg T$, our bound remains competitive. Moreover, our work extends to generalized linear models, offering broader applicability. Secondly, compared to the matricization methods G-ESTT with regret $\tilde{O}\left( M^{\frac{1}{2}} d^{\frac{N-1}{2}}  r^{\frac{1}{2}} T^{\frac{1}{2}} \right)$ \cite{kang2022efficient} and LPA-ETC with regret $\tilde{O}\left( d^{N-1} r^{\frac{2}{3}}  T^{\frac{2}{3}}\right)$ \cite{jang2024efficient}, our algorithm still performs significantly better in terms of the orders of $d$ and $r$. This advantage demonstrates that our algorithm has stronger competitiveness in many high-dimensional scenarios and further highlights the crucial role of tensor representation.

\subsubsection{Slice-Wise Sparsity and Low-Rankness}

In applications like social media analysis, the platform pushes topics to specific user groups at certain times. The involved arms are time $\times$ user group $\times$ topic, where the data shows that users only participate in topic discussions at specific time points, and these users exhibit group characteristics. Therefore, slice-wise sparsity and low-rankness can more effectively capture the sparsity of users across time and reveal the potential relationships between user groups and topics. 

Next, we discuss the regret of tensor bandits problem, where the sparsity and low-rankness are at the slice level. In this scenario, take $(1, 2)$-slices of third-order tensor as an illustration. That is, it refers to the $s$ nonzero slices $\Theta^*_{\cdot \cdot k}, k \in [d_3]$ are rank-$ r$ matrices, so we choose the $R(\Theta)=\|\Theta\|_{(1,2),*}=\sum_{k \in [d_3]} \|\Theta_{\cdot  \cdot k}\|_{*} $. In this case, $c_R = 1$ and $\phi = 2r$. Similarly, the theoretical guidance for adjusting the parameter is provided by the following lemma.

\begin{lemma}\label{lem2}
    For any $\delta \in(0,1)$, let $d=\max\{d_1,d_2,d_3\}$, and $\alpha=\frac{c_R+3}{2c_R}$, use

$$
\lambda_{T_1}=\frac{c \alpha R}{\sqrt{T_1}}  \sqrt{\log \frac{4T_1d}{\delta} \log  \frac{4d}{\delta}}
$$
in Algorithm \ref{algo1} with $R(\Theta)=\|\Theta\|_{(1,2),*}$, then with probability at least $1-\delta$, we have $\lambda_{T_1} \geq \alpha R^*\left(\frac{1}{T_1} \sum_{t \in [T_1]} \epsilon_t \mathcal{X}_t\right)$.
\end{lemma}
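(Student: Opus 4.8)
The plan is to pass to the dual norm $R^{*}$ and then control $R^{*}\!\big(\tfrac{1}{T_1}\sum_{t\in[T_1]}\epsilon_t\mathcal{X}_t\big)$ by a matrix concentration argument carried out one frontal slice at a time. For $R(\Theta)=\|\Theta\|_{(1,2),*}=\sum_{k\in[d_3]}\|\Theta_{\cdot\cdot k}\|_{*}$, the matrix Hölder inequality gives $\langle\mathcal{A},\Theta\rangle=\sum_{k}\langle\mathcal{A}_{\cdot\cdot k},\Theta_{\cdot\cdot k}\rangle\le\big(\max_{k}\|\mathcal{A}_{\cdot\cdot k}\|_{\mathrm{op}}\big)\sum_{k}\|\Theta_{\cdot\cdot k}\|_{*}$, and this is tight (concentrate all of the nuclear-norm budget on the worst slice along its top singular vectors), so $R^{*}(\mathcal{A})=\max_{k\in[d_3]}\|\mathcal{A}_{\cdot\cdot k}\|_{\mathrm{op}}$, the largest spectral norm among the frontal slices. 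Writing $M_t^{(k)}:=(\mathcal{X}_t)_{\cdot\cdot k}$, it therefore suffices to prove that, for a suitable absolute constant $c$, with probability at least $1-\delta$ one has $\max_{k\in[d_3]}\big\|\tfrac{1}{T_1}\sum_{t\in[T_1]}\epsilon_t M_t^{(k)}\big\|_{\mathrm{op}}\le\tfrac{cR}{\sqrt{T_1}}\sqrt{\log\tfrac{4T_1d}{\delta}\log\tfrac{4d}{\delta}}$; multiplying by $\alpha$ then yields the claim.

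Condition on all observed contexts and on the uniform exploration choices, so that each $\mathcal{X}_t$ is fixed with $\|\mathcal{X}_t\|_F\le1$ by Assumption~\ref{as2}; then $\sum_{t\in[T_1]}\|M_t^{(k)}\|_F^2\le\sum_{t\in[T_1]}\|\mathcal{X}_t\|_F^2\le T_1$ for every $k$, while the $\epsilon_t$ are still independent and $R$-sub-Gaussian. Replace $\epsilon_t$ by its truncation $\tilde\epsilon_t:=\epsilon_t\mathbf{1}\{|\epsilon_t|\le B\}$ with $B:=R\sqrt{2\log(4T_1d/\delta)}$: a sub-Gaussian tail bound and a union bound over the $T_1$ rounds show $\tilde\epsilon_t=\epsilon_t$ for all $t$ on an event $\mathcal{E}$ with $\mathbb{P}(\mathcal{E})\ge1-\delta/2$, while the truncation bias $\mathbb{E}\tilde\epsilon_t$ is exponentially small in $B^2/R^2$ and hence negligible at the target scale. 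Fix $k$ and apply a matrix Bernstein inequality (via the Hermitian dilation for the rectangular case) to $\sum_{t}S_t$ with $S_t:=\tfrac{1}{T_1}(\tilde\epsilon_t-\mathbb{E}\tilde\epsilon_t)M_t^{(k)}$: here $\|S_t\|_{\mathrm{op}}\le\tfrac{2B}{T_1}\|M_t^{(k)}\|_F\le\tfrac{2B}{T_1}$ and the matrix-variance parameter obeys $\nu_k\le\tfrac{R^2}{T_1^2}\big\|\sum_{t}M_t^{(k)}(M_t^{(k)})^{\top}\big\|_{\mathrm{op}}\le\tfrac{R^2}{T_1^2}\sum_{t}\|M_t^{(k)}\|_F^2\le\tfrac{R^2}{T_1}$, which gives $\mathbb{P}\big(\|\sum_t S_t\|_{\mathrm{op}}\ge s\big)\le(d_1+d_2)\exp\!\big(\tfrac{-s^2/2}{\nu_k+2Bs/(3T_1)}\big)$.

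Substituting $s=\tfrac{cR}{\sqrt{T_1}}\sqrt{\log\tfrac{4T_1d}{\delta}\log\tfrac{4d}{\delta}}$ and using that $T_1\ge c\phi w^{2}(\boldsymbol{\Theta})$ is large, the variance term dominates the denominator ($\nu_k+2Bs/(3T_1)\le2R^2/T_1$), so the exponent is at most $-\tfrac{c^2}{4}\log\tfrac{4T_1d}{\delta}\log\tfrac{4d}{\delta}$; since both logarithms exceed $1$, $d_1+d_2\le2d$, and $\log\tfrac{4d^2}{\delta}\le\log\tfrac{4T_1d}{\delta}+\log\tfrac{4d}{\delta}$, taking $c$ a large enough absolute constant makes this probability at most $\delta/(2d)$. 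A union bound over the at most $d$ frontal slices together with $\mathcal{E}$ caps the total failure probability at $\delta$, and since the estimate was conditional on an arbitrary realization of the contexts it holds unconditionally, proving the lemma. The essential difficulty is the operator-norm step: a naive net argument over the two unit spheres would cost a $\sqrt{d_1+d_2}$ factor, whereas the lemma requires the slice dimensions to enter only logarithmically — this is exactly what the matrix-variance bound $\nu_k\le R^2/T_1$ (rather than the worst per-direction second moment) buys through matrix Bernstein, and the remaining care is in handling the unbounded sub-Gaussian multipliers by truncation and in tracking the sub-Gaussian/sub-exponential regime split so that the rate stays $1/\sqrt{T_1}$.
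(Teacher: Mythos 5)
Your proposal is correct and follows essentially the same route as the paper's proof: identify the dual norm as the maximum operator norm over frontal slices, control the sub-Gaussian noise on a high-probability event, apply matrix Bernstein to each slice, and union bound over the $d_3$ slices. Your use of explicit truncation with a bias estimate (rather than simply conditioning on $\max_t|\epsilon_t|<v$ as the paper does) is a minor technical refinement of the same argument, not a different approach.
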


Similarly, we can derive the regret bound in the slice-wise sparse and low-rank tensor setting.

\begin{corollary}\label{coro2}
Under Assumption \ref{as1}-\ref{as3}, let
$
\lambda_{T_1}=\frac{c \alpha R}{\sqrt{T_1}}  \sqrt{\log \frac{4T_1d}{\delta} \log  \frac{4d}{\delta}}
$, the expected cumulative regret of the Algorithm \ref{algo1} in the tensor bandits with slice-wise sparsity and low-rankness is upper bounded by
$$
\mathbb{E}(R_T)=\tilde{O}\left(d r^{\frac{1}{3}} T^{\frac{2}{3}}\right).
$$
\end{corollary}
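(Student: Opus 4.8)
The plan is to combine the structure-independent regret bound of Theorem \ref{theo2} with the specific parameter choices for the slice-wise sparse and low-rank setting, and then simplify the orders. First I would substitute into Theorem \ref{theo2} the values $c_R = 1$ and $\phi = 2r$ established just above for the norm $R(\Theta) = \|\Theta\|_{(1,2),*}$, together with the choice of $\lambda_{T_1}$ supplied by Lemma \ref{lem2}; since that lemma guarantees $\lambda_{T_1} \ge \alpha R^*\big(\tfrac{1}{T_1}\sum_{t\in[T_1]}\epsilon_t\mathcal{X}_t\big)$ with probability at least $1-\delta$, the hypothesis of Theorem \ref{theo2} is met. This yields, with probability at least $1-\delta$,
$$
\mathbb{E}(R_T) \le 2k_\mu T_1 + 12 k_\mu T\,\frac{(1+c_R)\sqrt{\phi}\,\lambda_{T_1}}{(3+c_R)c_l k_u}
= 2k_\mu T_1 + C k_\mu T \sqrt{r}\,\frac{\lambda_{T_1}}{c_l k_u},
$$
for an absolute constant $C$, where $\lambda_{T_1} = \tilde{O}(R/\sqrt{T_1})$ after absorbing the logarithmic factors $\sqrt{\log(4T_1 d/\delta)\log(4d/\delta)}$ into the $\tilde{O}$ notation.

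Next I would insert the value of $T_1$. By the input specification of Algorithm \ref{algo1}, $T_1 = c\phi w^2(\boldsymbol{\Theta})$, so I need the Gaussian width of the unit ball $\boldsymbol{\Theta} = \{\Theta : \|\Theta\|_{(1,2),*} \le 1\}$. Standard bounds for the nuclear-norm ball of a matrix give $w \lesssim \sqrt{d}$ up to logarithmic factors; summing a nuclear norm over the $d_3$ slices and using a union-bound / peeling argument over which slices are "active," the effective width scales like $\sqrt{s\,d}$ up to logs, hence $T_1 = \tilde{O}(r\,s\,d)$. Using also $c_l^2 = 1/(d_1 d_2 d_3)$ so that $1/c_l = \sqrt{d_1 d_2 d_3} = \tilde{O}(d^{3/2})$, the second term becomes $\tilde{O}\big(T\sqrt{r}\cdot d^{3/2}\cdot R/\sqrt{T_1}\big) = \tilde{O}\big(T\sqrt{r}\,d^{3/2}/\sqrt{r s d}\big) = \tilde{O}(T\,d/\sqrt{s})$. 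Treating $s = O(1)$ (as is implicit in the other corollaries, which suppress the sparsity level in the stated order), the regret is $\tilde{O}(r s d + T d/\sqrt{s}) = \tilde{O}(rd + Td)$ before optimizing over the exploration length.

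Finally, I would trade off the two terms. Here the honest argument requires re-examining the split: since $T_1$ is fixed by the algorithm's input rather than tuned, to recover the advertised $T^{2/3}$ order one balances $T_1 \asymp T\lambda_{T_1}$, i.e. treats $T_1$ as a free parameter $m$, giving $\mathbb{E}(R_T) = \tilde{O}(m + Td^{3/2}\sqrt{r}\,R/\sqrt{m})$, which is minimized at $m \asymp (Td^{3/2}\sqrt r)^{2/3} = \tilde{O}(T^{2/3} d\, r^{1/3})$, yielding $\mathbb{E}(R_T) = \tilde{O}(T^{2/3} d\, r^{1/3})$. The main obstacle I anticipate is the Gaussian-width computation for the mixed slice-nuclear norm ball and making the bookkeeping of the logarithmic and $s$-dependent factors consistent with what is suppressed in the other corollaries; once $w(\boldsymbol{\Theta})$ and $\phi$ are pinned down, the rest is substitution and the standard explore-then-commit trade-off. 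I would also double-check that $T_1 \le T$ so that $T_2 = T - T_1 > 0$, which holds for $T$ large relative to the dimension, matching the regime of interest.
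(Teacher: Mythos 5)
Your proposal is correct and follows essentially the same route as the paper: substitute $c_R=1$, $\phi=2r$, and Lemma~\ref{lem2}'s $\lambda_{T_1}$ into Theorem~\ref{theo2}, then balance the $2k_\mu T_1$ term against the $T\sqrt{r}/(c_l\sqrt{T_1})$ term to get $T_1\asymp T^{2/3}dr^{1/3}$ (the paper performs this optimization implicitly, jumping straight to $\tilde{O}(c_l^{-2/3}r^{1/3}T^{2/3})$ and then using $c_l^{-2/3}=(d_1d_2d_3)^{1/3}\le d$). Your Gaussian-width digression is not needed for the final bound and is slightly off --- the width of the slice-nuclear-norm ball is its expected dual norm $\mathbb{E}\max_{k}\|G_{\cdot\cdot k}\| = O(\sqrt{d})$ rather than $O(\sqrt{sd})$ --- but since you correctly discard it in favor of the explore-then-commit trade-off, the conclusion stands.
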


This structurally constrained bandits problem has not been addressed in previous work. To demonstrate the superiority of our results, we compare them with the results related to tensor matricization. In \cite{kang2022efficient}, the bound is $M^{\frac{1}{2}}d r T^{1/2}$ (Note that this bound involves $M$, which is related to the dimension d), while our work improves the dependence on dimensionality and rank, although they are slightly higher in order for $T$. In certain scenarios, our bound remains competitive. Compared with \cite{jang2024efficient}, which provides a bound $\tilde{O}\left( d^2 r^{\frac{2}{3}}  T^{\frac{2}{3}}\right)$, our work further reduces the order of dimension from $2$ to $1$ and the order of rank from $2/3$ to $1/3$. These further illustrate how the process of matricization disrupts the original structure of tensors, underscoring the significance of studying slice-wise sparsity and low-rankness within tensors. 
\begin{figure}[t]
    \centering
    \subfigure[Cumulative Regret $R_T$]{
        \label{Fig.sub.1}
        \includegraphics[width=0.20\textwidth]{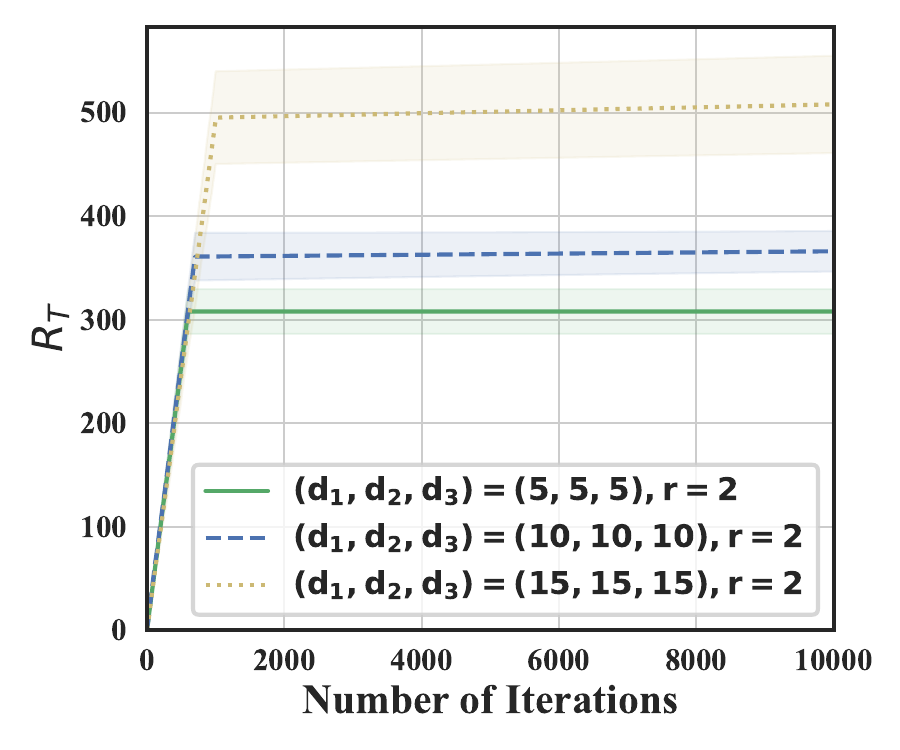}}
        \hfil
    \subfigure[The ratio $R_T/B_T$]{
        \label{Fig.sub.2}
        \includegraphics[width=0.20\textwidth]{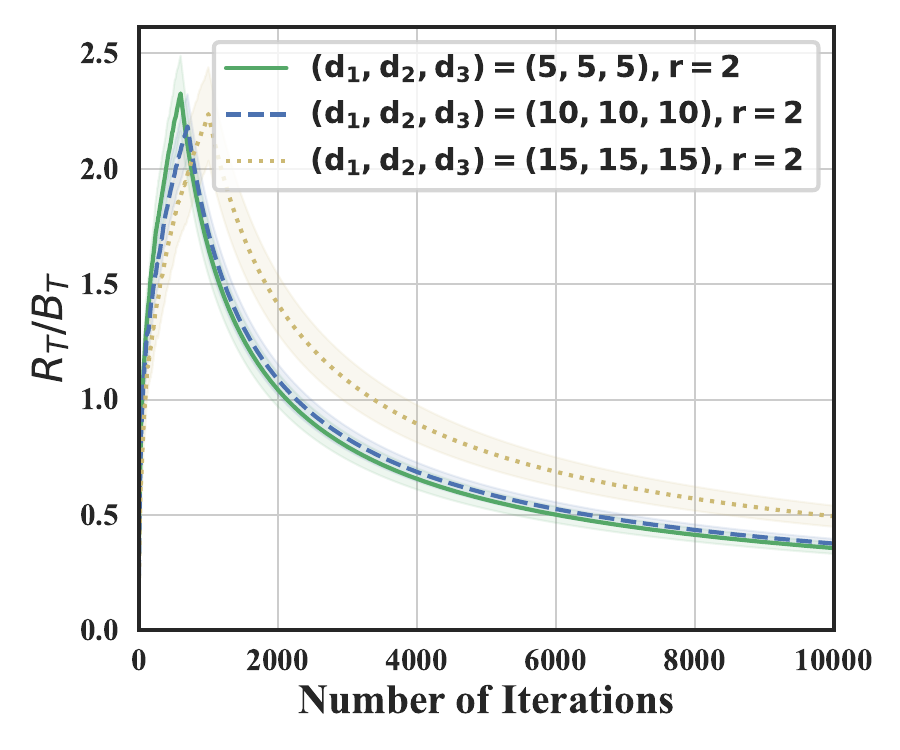}}
    \caption{Experimental results of tensor bandits under low multi-linear rankness with different dimensional settings. (a) displays the curve of cumulative regret over time, while (b) shows the variation of the ratio of cumulative regret to the theoretical bound $B_T$ over time.}
    \label{fig:figure1}
\end{figure}

\begin{figure}[t]
    \centering
    \subfigure[Cumulative Regret $R_T$]{
        \label{Fig.sub.3}
        \includegraphics[width=0.20\textwidth]{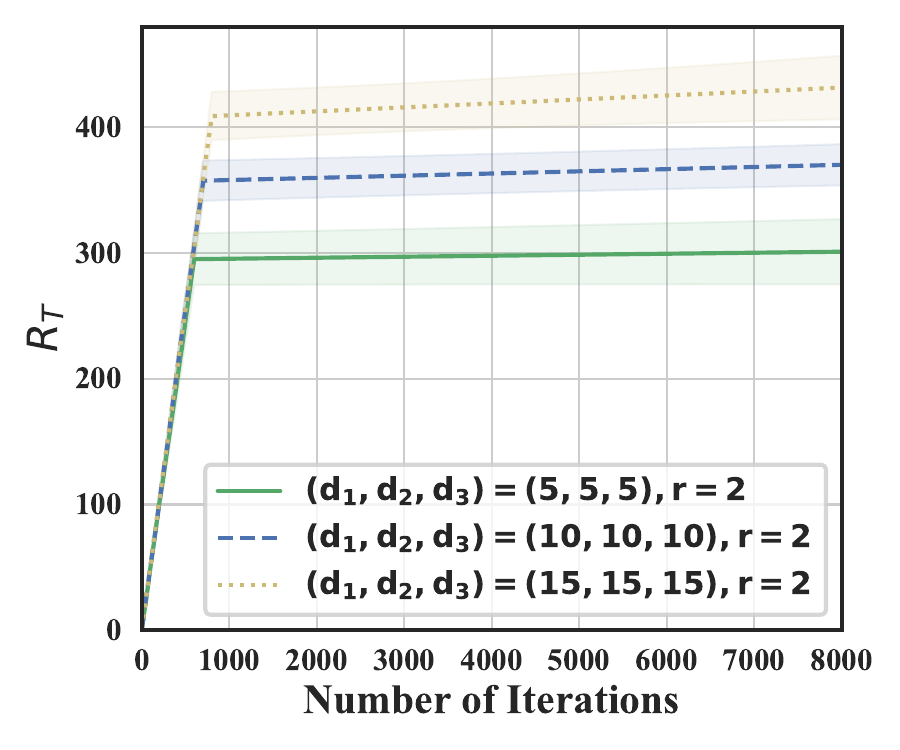}} 
        \hfil
    \subfigure[The ratio $R_T/B_T$]{
        \label{Fig.sub.4}
        \includegraphics[width=0.20\textwidth]{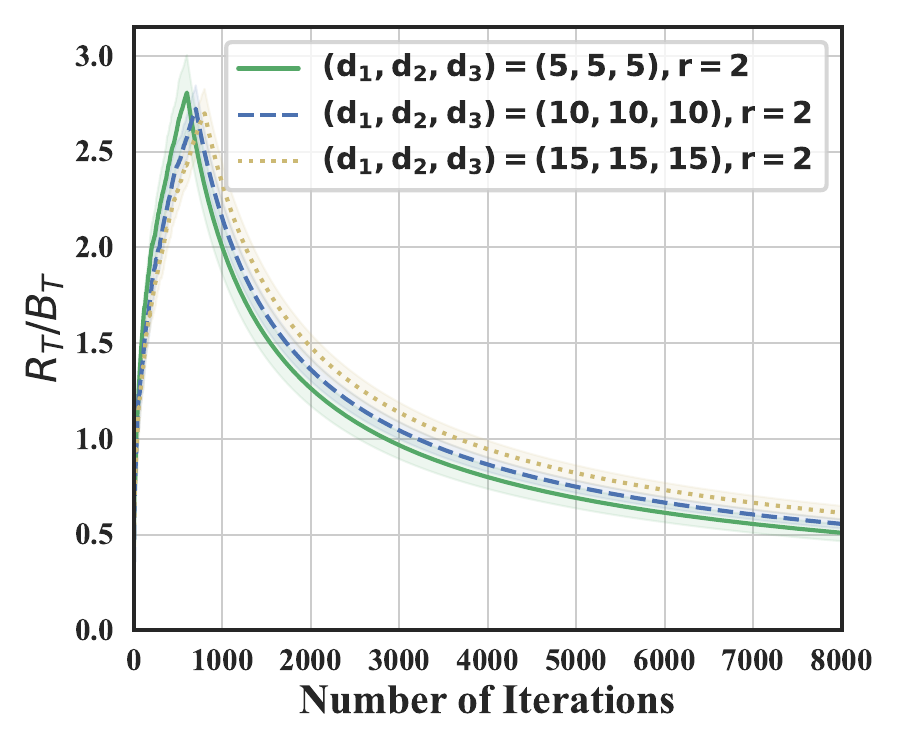}}
    \caption{Experimental results of tensor bandits under slice sparse and low rank structure with different dimensional settings. (a) displays the curve of cumulative regret over time, while (b) shows the variation of the ratio of cumulative regret to the theoretical bound $B_T$ over time.}
    \label{fig:figure2}
\end{figure}

\subsubsection{Extension to Other Low-Dimensional Structures}
In some application scenarios, sparsity may occur at the element level and the fiber level. For example, in intelligent question-answering systems, the platform matches users' questions with corresponding answers. The involved arms are user $\times$ question $\times$ answer, where only a few of the users' questions have matching answers. Therefore, entry-wise sparsity is more appropriate, as it allows optimization for specific user-question-answer pairs without assuming a global question-answer association pattern. Similarly, in mobile network resource allocation, base stations allocate resources to users at specific time periods, where only a few active users require prioritized resource allocation. Therefore, fiber-wise sparsity is more suitable for handling the sparsity of users across specific time and base station. Based on the above practical scenarios, we extend the relevant results of this paper to other low-dimensional structures, such as entry-wise sparsity and fiber-wise sparsity. The detailed theoretical results and proof of this part can be found in the the Appendix.

\section{Experiments}



In this section, we validate the effectiveness of our theoretical results through experiments. Specifically, the true parameter tensor with low multi-linear rankness $\Theta^* \in \mathbb{R}^{d_1 \times d_2 \times d_3}$ is generated as follows: we first generate the tensor with dimensions $d_1 \times d_2 \times d_3$, with its elements drawn from a standard normal distribution. We then project it into a low multi-linear rank space. While for the true parameter tensor with slice sparsity and low-rankness, it is generated as follows: first, we randomly select the indices of the non-zero slices, then the non-zero slices are randomly generated as low-rank matrices. The elements of the contextual tensor are sampled from a standard normal distribution. We construct a binary reward model, that is $y_t \sim \operatorname{Bernoulli}(\mu\left(\left \langle\mathcal{X}_t, \Theta^* \right \rangle \right))$, where $\mu(x)=\frac{e^x}{1+e^x}$. For each setting in each problem, we run the algorithm 10 independent times to compute the average results with one standard deviation error bar. 

For the two aforementioned low-dimensional structures, we conducted experiments from two main perspectives. First, we verified that under these structures, the upper bound of the cumulative regret grows sublinearly. To demonstrate this, we plotted the curve of the cumulative regret upper bound against the number of decision rounds under different dimensions (as shown in Figure \ref{Fig.sub.1} and Figure \ref{Fig.sub.3}). These figures show that in all three settings, the upper bound of cumulative regret increases with the number of random explorations and flattens with the growth of exploitation rounds, ultimately demonstrating ideal sublinear growth. Second, we confirmed that the algorithm achieves the theoretical regret bound under these structures. To demonstrate this, we plotted the ratio of the cumulative regret upper bound to the theoretical bound against the number of decision rounds (for Figure \ref{Fig.sub.2}, $B_T:=d r^{1/3}T^{2/3}$; for Figure \ref{Fig.sub.4}, $B_T:=d r^{1/3}T^{2/3}$). As shown in Figure \ref{Fig.sub.2} and Figure \ref{Fig.sub.4}, in each setting, the ratio stabilizes around a constant less than 1 as the number of rounds increases, indicating that our algorithm achieves the theoretical bound.

\section{Conclution}
This study, driven by the need for flexible decision-making strategies in high-dimensional and high-order scenarios of real-world applications, delves into the high-dimensional generalized tensor bandits problems, encompassing low-dimensional structural constraints. We employ weakly decomposable norms as convex regularizers to describe the related structures and propose a unified algorithm, along with a theoretical analysis of its cumulative regret bounds. Specifically, for tensor low-rankness structure, we achieve improved bounds in terms of both dimension and rank, effectively addressing the challenges posed by high-dimensional problems in high-order settings while extending the algorithm's applicability. Additionally, we consider the presence of several popular low-dimensional structures, such as sparsity and low-rankness, at different levels of the tensor and demonstrate how these results can be presented using different specific norms, further highlighting the flexibility and wide applicability of the proposed method.

\bibliographystyle{IEEEtran}

\bibliography{references}

\begin{thebibliography}{10}
\providecommand{\url}[1]{#1}
\csname url@samestyle\endcsname
\providecommand{\newblock}{\relax}
\providecommand{\bibinfo}[2]{#2}
\providecommand{\BIBentrySTDinterwordspacing}{\spaceskip=0pt\relax}
\providecommand{\BIBentryALTinterwordstretchfactor}{4}
\providecommand{\BIBentryALTinterwordspacing}{\spaceskip=\fontdimen2\font plus
\BIBentryALTinterwordstretchfactor\fontdimen3\font minus \fontdimen4\font\relax}
\providecommand{\BIBforeignlanguage}[2]{{%
\expandafter\ifx\csname l@#1\endcsname\relax
\typeout{** WARNING: IEEEtran.bst: No hyphenation pattern has been}%
\typeout{** loaded for the language `#1'. Using the pattern for}%
\typeout{** the default language instead.}%
\else
\language=\csname l@#1\endcsname
\fi
#2}}
\providecommand{\BIBdecl}{\relax}
\BIBdecl

\bibitem{bastani2022learning}
H.~Bastani, P.~Harsha, G.~Perakis, and D.~Singhvi, ``Learning personalized product recommendations with customer disengagement,'' \emph{Manufacturing \& Service Operations Management}, vol.~24, no.~4, pp. 2010--2028, 2022.

\bibitem{aramayo2023multiarmed}
N.~Aramayo, M.~Schiappacasse, and M.~Goic, ``A multiarmed bandit approach for house ads recommendations,'' \emph{Marketing Science}, vol.~42, no.~2, pp. 271--292, 2023.

\bibitem{wang2021multimodal}
Y.~Wang, B.~Chen, and D.~Simchi-Levi, ``Multimodal dynamic pricing,'' \emph{Management Science}, vol.~67, no.~10, pp. 6136--6152, 2021.

\bibitem{luo2024distribution}
Y.~Luo, W.~W. Sun, and Y.~Liu, ``Distribution-free contextual dynamic pricing,'' \emph{Mathematics of Operations Research}, vol.~49, no.~1, pp. 599--618, 2024.

\bibitem{bastani2020online}
H.~Bastani and M.~Bayati, ``Online decision making with high-dimensional covariates,'' \emph{Operations Research}, vol.~68, no.~1, pp. 276--294, 2020.

\bibitem{zhou2023spoiled}
T.~Zhou, Y.~Wang, L.~Yan, and Y.~Tan, ``Spoiled for choice? personalized recommendation for healthcare decisions: A multiarmed bandit approach,'' \emph{Information Systems Research}, vol.~34, no.~4, pp. 1493--1512, 2023.

\bibitem{grant2021filtered}
J.~A. Grant and R.~Szechtman, ``Filtered poisson process bandit on a continuum,'' \emph{European Journal of Operational Research}, vol. 295, no.~2, pp. 575--586, 2021.

\bibitem{agrawal2023tractable}
P.~Agrawal, T.~Tulabandhula, and V.~Avadhanula, ``A tractable online learning algorithm for the multinomial logit contextual bandit,'' \emph{European Journal of Operational Research}, vol. 310, no.~2, pp. 737--750, 2023.

\bibitem{bayati2014data}
M.~Bayati, M.~Braverman, M.~Gillam, K.~M. Mack, G.~Ruiz, M.~S. Smith, and E.~Horvitz, ``Data-driven decisions for reducing readmissions for heart failure: General methodology and case study,'' \emph{PloS one}, vol.~9, no.~10, p. e109264, 2014.

\bibitem{razavian2015population}
N.~Razavian, S.~Blecker, A.~M. Schmidt, A.~Smith-McLallen, S.~Nigam, and D.~Sontag, ``Population-level prediction of type 2 diabetes from claims data and analysis of risk factors,'' \emph{Big Data}, vol.~3, no.~4, pp. 277--287, 2015.

\bibitem{naik2008challenges}
P.~Naik, M.~Wedel, L.~Bacon, A.~Bodapati, E.~Bradlow, W.~Kamakura, J.~Kreulen, P.~Lenk, D.~M. Madigan, and A.~Montgomery, ``Challenges and opportunities in high-dimensional choice data analyses,'' \emph{Marketing Letters}, vol.~19, pp. 201--213, 2008.

\bibitem{li2010contextual}
L.~Li, W.~Chu, J.~Langford, and R.~E. Schapire, ``A contextual-bandit approach to personalized news article recommendation,'' in \emph{Proceedings of the 19th International Conference on World Wide Web}, 2010, pp. 661--670.

\bibitem{abbasi2011improved}
Y.~Abbasi-Yadkori, D.~P{\'a}l, and C.~Szepesv{\'a}ri, ``Improved algorithms for linear stochastic bandits,'' \emph{Advances in Neural Information Processing Systems}, vol.~24, 2011.

\bibitem{kim2019doubly}
G.-S. Kim and M.~C. Paik, ``Doubly-robust lasso bandit,'' \emph{Advances in Neural Information Processing Systems}, vol.~32, 2019.

\bibitem{ren2024dynamic}
Z.~Ren and Z.~Zhou, ``Dynamic batch learning in high-dimensional sparse linear contextual bandits,'' \emph{Management Science}, vol.~70, no.~2, pp. 1315--1342, 2024.

\bibitem{li2022simple}
W.~Li, A.~Barik, and J.~Honorio, ``A simple unified framework for high dimensional bandit problems,'' in \emph{International Conference on Machine Learning}.\hskip 1em plus 0.5em minus 0.4em\relax PMLR, 2022, pp. 12\,619--12\,655.

\bibitem{cai2023doubly}
J.~Cai, R.~Chen, M.~J. Wainwright, and L.~Zhao, ``Doubly high-dimensional contextual bandits: an interpretable model for ioint assortment-pricing,'' \emph{arXiv preprint arXiv:2309.08634}, 2023.

\bibitem{kolda2009tensor}
T.~G. Kolda and B.~W. Bader, ``Tensor decompositions and applications,'' \emph{SIAM Review}, vol.~51, no.~3, pp. 455--500, 2009.

\bibitem{raskutti2019convex}
G.~Raskutti, M.~Yuan, and H.~Chen, ``Convex regularization for high-dimensional multiresponse tensor regression,'' \emph{The Annals of Statistics}, vol.~47, no.~3, pp. 1554--1584, 2019.

\bibitem{lu2021low}
Y.~Lu, A.~Meisami, and A.~Tewari, ``Low-rank generalized linear bandit problems,'' in \emph{International Conference on Artificial Intelligence and Statistics}.\hskip 1em plus 0.5em minus 0.4em\relax PMLR, 2021, pp. 460--468.

\bibitem{qin2023stochastic}
Y.~Qin, Y.~Li, F.~Pasqualetti, M.~Fazel, and S.~Oymak, ``Stochastic contextual bandits with long horizon rewards,'' in \emph{Proceedings of the AAAI Conference on Artificial Intelligence}, vol.~37, no.~8, 2023, pp. 9525--9533.

\bibitem{yi2024effective}
Q.~Yi, Y.~Yang, S.~Tang, J.~Liu, and Y.~Wang, ``Effective generalized low-rank tensor contextual bandits,'' \emph{IEEE Transactions on Knowledge and Data Engineering}, pp. 1--14, 2024.

\bibitem{hao2020high}
B.~Hao, T.~Lattimore, and M.~Wang, ``High-dimensional sparse linear bandits,'' \emph{Advances in Neural Information Processing Systems}, vol.~33, pp. 10\,753--10\,763, 2020.

\bibitem{jang2022popart}
K.~Jang, C.~Zhang, and K.-S. Jun, ``Popart: Efficient sparse regression and experimental design for optimal sparse linear bandits,'' \emph{Advances in Neural Information Processing Systems}, vol.~35, pp. 2102--2114, 2022.

\bibitem{jang2024efficient}
K.~\vspace{0mm}Jang, C.~Zhang, and K.-S. Jun, ``Efficient low-rank matrix estimation, experimental design, and arm-set-dependent low-rank bandits,'' \emph{arXiv preprint arXiv:2402.11156}, 2024.

\bibitem{shi2023high}
C.~Shi, C.~Shen, and N.~D. Sidiropoulos, ``On high-dimensional and low-rank tensor bandits,'' in \emph{2023 IEEE International Symposium on Information Theory (ISIT)}.\hskip 1em plus 0.5em minus 0.4em\relax IEEE, 2023, pp. 1460--1465.

\bibitem{kang2022efficient}
Y.~Kang, C.-J. Hsieh, and T.~C.~M. Lee, ``Efficient frameworks for generalized low-rank matrix bandit problems,'' \emph{Advances in Neural Information Processing Systems}, vol.~35, pp. 19\,971--19\,983, 2022.

\bibitem{chu2011contextual}
W.~Chu, L.~Li, L.~Reyzin, and R.~Schapire, ``Contextual bandits with linear payoff functions,'' in \emph{Proceedings of the Fourteenth International Conference on Artificial Intelligence and Statistics}.\hskip 1em plus 0.5em minus 0.4em\relax JMLR Workshop and Conference Proceedings, 2011, pp. 208--214.

\bibitem{filippi2010parametric}
S.~Filippi, O.~Cappe, A.~Garivier, and C.~Szepesv{\'a}ri, ``Parametric bandits: The generalized linear case,'' \emph{Advances in neural information processing systems}, vol.~23, 2010.

\bibitem{li2017provably}
L.~Li, Y.~Lu, and D.~Zhou, ``Provably optimal algorithms for generalized linear contextual bandits,'' in \emph{International Conference on Machine Learning}.\hskip 1em plus 0.5em minus 0.4em\relax PMLR, 2017, pp. 2071--2080.

\bibitem{jun2019bilinear}
K.-S. Jun, R.~Willett, S.~Wright, and R.~Nowak, ``Bilinear bandits with low-rank structure,'' in \emph{International Conference on Machine Learning}.\hskip 1em plus 0.5em minus 0.4em\relax PMLR, 2019, pp. 3163--3172.

\bibitem{jang2021improved}
K.~Jang, K.-S. Jun, S.-Y. Yun, and W.~Kang, ``Improved regret bounds of bilinear bandits using action space analysis,'' in \emph{International Conference on Machine Learning}.\hskip 1em plus 0.5em minus 0.4em\relax PMLR, 2021, pp. 4744--4754.

\bibitem{zhou2024stochastic}
J.~Zhou, B.~Hao, Z.~Wen, J.~Zhang, and W.~W. Sun, ``Stochastic low-rank tensor bandits for multi-dimensional online decision making,'' \emph{Journal of the American Statistical Association}, pp. 1--14, 2024.

\bibitem{ide2022targeted}
T.~Id{\'e}, K.~Murugesan, D.~Bouneffouf, and N.~Abe, ``Targeted advertising on social networks using online variational tensor regression,'' \emph{arXiv preprint arXiv:2208.10627}, 2022.

\bibitem{mendelson2007reconstruction}
S.~Mendelson, A.~Pajor, and N.~Tomczak-Jaegermann, ``Reconstruction and subgaussian operators in asymptotic geometric analysis,'' \emph{Geometric and Functional Analysis}, vol.~17, no.~4, pp. 1248--1282, 2007.

\bibitem{vershynin2018high}
R.~Vershynin, \emph{High-dimensional probability: An introduction with applications in data science}.\hskip 1em plus 0.5em minus 0.4em\relax Cambridge university press, 2018, vol.~47.

\bibitem{wainwright2019high}
M.~J. Wainwright, \emph{High-dimensional statistics: A non-asymptotic viewpoint}.\hskip 1em plus 0.5em minus 0.4em\relax Cambridge university press, 2019, vol.~48.

\end{thebibliography}

\clearpage
\appendices

\section{Related Works}

The improvement in decision-making performance of contextual bandits stems from the effective utilization of feature information in the decision process. Linear bandits \cite{abbasi2011improved,chu2011contextual}, as one of the classic approaches, model the reward as the inner product of the feature vector and unknown parameters to characterize the problem. However, considering the complexity of rewards in real-world decision-making scenarios, researchers \cite{filippi2010parametric,li2017provably} have further extended this approach to generalized linear models to better capture the diversity of reward behaviors. However, these methods still struggle to effectively address the increasing complexity of decision-making scenarios, particularly as the number of interrelated factors that need to be considered grows.

To tackle this challenge, previous studies have extended the bandit problem to higher-order settings. Specifically, low-rank matrix contextual bandit models \citep{jun2019bilinear, jang2021improved, lu2021low, kang2022efficient} have been proposed to handle more complex structures. However, these studies have limited applicability in high-dimensional environments, as their cumulative regret bounds depend on higher-order of dimensions. To resolve this issue, \cite{jang2024efficient} proposed a lower bound regarding the dimension using the ETC framework. Compared to these studies, the results in this paper are based on the low multi-linear rank assumption, which not only encompasses the existing works but also extends to higher-order settings. Furthermore, although this paper shares similarities with the study by \cite{li2022simple}, the low-dimensional structures involved in this paper are more complex.

In higher-order scenarios, the low-rank tensor contextual bandits problem arises. Unlike low-rank matrices, tensor decomposition is not unique, and the rank of a tensor can be defined in various ways. For Tucker decomposition, \cite{zhou2024stochastic} was the first to focus on this issue, but it lacked theoretical guarantees. Subsequently, \cite{shi2023high} further focused on the effective utilization of low-dimensional subspaces, introducing the OFUL algorithm based on a linear parameter model and providing an upper bound on cumulative regret. For other common decomposition methods, namely CP and T-SVD, \cite{ide2022targeted} and \cite{yi2024effective} derived relevant results, respectively. However, it is noteworthy that these studies did not address the high-dimensional challenges faced by low-rank tensor contextual bandits. Compared to T-SVD and CP, Tucker decomposition offers greater flexibility and expressive power when handling high-order data \cite{kolda2009tensor}. Therefore, this paper adopts Tucker decomposition while considering low-rank tensor structures and achieves lower-order regret bounds concerning the dimension for low-rank tensors, effectively addressing the high-dimensional challenges in higher-order settings. Additionally, this paper also considers other low-dimensional structural constraints in tensors, such as slice sparsity and low-rankness.

\section{Auxiliary Lemmas}
Before proceeding with the formal proof, we need first to state the lemmas that will be used.
\begin{lemma}[Theorem D  \citep{mendelson2007reconstruction} ]\label{ref.lem1}
 There exist absolute constants $c_1, c_2, c_3$ for which the following holds. Let $(\Omega, \mu)$ be a probability space, set $\mathbb{F}$ be a subset of the unit sphere of $L_2(\mu)$, i.e., $\mathbb{F} \subseteq S_{L_2}=\left\{f|\|f\|_{L_2}=1\right\}$, and assume that $\operatorname{diam}\left(\mathbb{F},\|\cdot\|_{\psi_2}\right) \leq \kappa$. Then, for any $\theta>0$ and $n \geq 1$ satisfying
$$
c_1 \kappa \gamma_2\left(\mathbb{F},\|\cdot\|_{\psi_2}\right) \leq \theta \sqrt{n},
$$
with probability at least $1-\exp \left(-c_2 \theta^2 n / \kappa^4\right)$,
$$
\sup _{f \in \mathbb{F}}\left|\frac{1}{n} \sum_{i \in [n]} f^2\left(\mathcal{X}_i\right)-\mathbb{E}\left(f^2\right)\right| \leq \theta .
$$
Further, if $\mathbb{F}$ is symmetric, then
$$
E\left[\sup _{f \in \mathbb{F}}\left|\frac{1}{n} \sum_{i\in [n]} f^2\left(\mathcal{X}_i\right)-\mathbb{E}\left(f^2\right)\right|\right] \leq  c_3 \kappa^2 \frac{\gamma_2\left(\mathbb{F},\| \cdot\|_{\psi_2}\right)}{\sqrt{n}}.
$$
\end{lemma}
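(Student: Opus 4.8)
The plan is to establish this as a uniform deviation bound for the empirical process indexed by the squared class $\{f^2 : f \in \mathbb{F}\}$, using Talagrand's generic chaining through the $\gamma_2$ functional in the $\psi_2$ metric, following the line of argument in \citep{mendelson2007reconstruction}. Write $Z := \sup_{f \in \mathbb{F}} |\frac{1}{n}\sum_{i\in[n]} f^2(\mathcal{X}_i) - \mathbb{E}(f^2)|$. The first step is \emph{symmetrization}: by the standard symmetrization inequality I would bound $\mathbb{E} Z \leq 2\,\mathbb{E}\sup_{f\in\mathbb{F}}|\frac{1}{n}\sum_{i\in[n]}\varepsilon_i f^2(\mathcal{X}_i)|$, where the $\varepsilon_i$ are i.i.d. Rademacher signs independent of the sample, so it suffices to control the symmetrized Rademacher process.

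The second step is \emph{conditional chaining}. Conditionally on $(\mathcal{X}_i)_{i\in[n]}$, the process $f \mapsto \frac{1}{n}\sum_i \varepsilon_i f^2(\mathcal{X}_i)$ is subgaussian with respect to the empirical metric $\tilde d(f,g) := \frac{1}{\sqrt n}\,\|f^2-g^2\|_{L_2(\mu_n)}$, where $\mu_n$ is the empirical measure. Talagrand's majorizing-measure theorem then gives, conditionally, $\mathbb{E}_\varepsilon \sup_f |\cdots| \lesssim \gamma_2(\mathbb{F}, \tilde d) = \frac{1}{\sqrt n}\,\gamma_2(\mathbb{F}, \|f^2 - g^2\|_{L_2(\mu_n)})$. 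The task then reduces to comparing this empirical metric on the squares with the $\psi_2$ metric on the original class, in which $\gamma_2(\mathbb{F}, \psi_2)$ is measured.

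Third, and this is the crux, I would exploit the product identity $f^2 - g^2 = (f-g)(f+g)$. Cauchy--Schwarz together with a $\psi_2$-product bound gives $\|f^2-g^2\|_{L_2} \lesssim \|f+g\|_{\psi_2}\,\|f-g\|_{\psi_2}$, so the increments of the squared process are the increments $\|f-g\|_{\psi_2}$ modulated by the magnitude of the multiplier $f+g$. Because $\mathbb{F}$ lies on the unit $L_2$-sphere with $\operatorname{diam}(\mathbb{F},\psi_2)\leq\kappa$, the multiplier is controlled at scale $\kappa$, and passing from the empirical squared metric back to the $\psi_2$ metric of $\mathbb{F}$ contributes the remaining factor; carrying the chaining through this product structure then yields $\mathbb{E} Z \lesssim \kappa^2\,\gamma_2(\mathbb{F},\psi_2)/\sqrt n + \gamma_2^2(\mathbb{F},\psi_2)/n$. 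Under the hypothesis $c_1\kappa\gamma_2 \leq \theta\sqrt n$ the quadratic term is dominated by the linear one, and after absorbing the unit-sphere normalization into the constants I recover the stated expectation bound $c_3\kappa^2\gamma_2/\sqrt n$.

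Finally, for the high-probability statement I would upgrade the expectation bound via a concentration inequality for suprema of (unbounded) empirical processes. Since squaring sends a $\psi_2$ class to a $\psi_1$ class, the squared functions are subexponential with scale of order $\kappa^2$, which is exactly what produces the tail exponent $\theta^2 n/\kappa^4$: the variance proxy and the sub-exponential scale both contribute powers of $\kappa^2$, so a Talagrand/Adamczak-type concentration bound gives deviation $\theta$ with probability at least $1 - \exp(-c_2\theta^2 n/\kappa^4)$. The \textbf{main obstacle} is Step three: squaring destroys the subgaussian increment structure, and one must run the generic chaining for a \emph{product} (quadratic) process while tracking the multiplier $f+g$. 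This is where both the two-term bound and the $\kappa^4$ (rather than $\kappa^2$) tail exponent originate, and where the most care with the absolute constants is required.
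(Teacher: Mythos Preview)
The paper does not give its own proof of this statement: it is quoted verbatim as an auxiliary lemma attributed to \citep{mendelson2007reconstruction} and used as a black box in the proof of Theorem~\ref{theo1}. There is therefore nothing in the paper to compare your proposal against.

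That said, your outline is faithful to the actual argument in the cited reference. The key steps---symmetrization, conditional generic chaining via the $\gamma_2$ functional, the factorization $f^2-g^2=(f-g)(f+g)$ to pass from the squared class back to $\psi_2$ increments of $\mathbb{F}$, and a subexponential-type concentration inequality (squaring a $\psi_2$ class produces a $\psi_1$ class, giving the $\kappa^4$ denominator in the tail)---are exactly the ingredients of Mendelson's proof. Your identification of the product/multiplier step as the main technical point is correct: this is where the two-term chaining bound $\kappa^2\gamma_2/\sqrt{n}+\gamma_2^2/n$ arises, and the hypothesis $c_1\kappa\gamma_2\le\theta\sqrt{n}$ is precisely what collapses it to a single term.
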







\begin{lemma}[Talagrand’s majorizing measure theorem \citep{vershynin2018high}]\label{ref.lem2}
Let $\{X_t\}_{t\in \mathbb{T}}$ be a mean zero Gaussian process on a set $\mathbb{T}$. Consider the canonical metric defined on $\mathbb{T}$, i.e. $d(t, s) = \|X_t-X_s\|_{L_2}$. Then
    $$c
\gamma_2\left(\mathbb{T},d\right) \leq w(\mathbb{T}) \leq C \gamma_2\left(\mathbb{T},d\right).
$$
\end{lemma}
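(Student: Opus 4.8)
The statement is the two-sided Talagrand majorizing measure theorem, so my plan is to prove the two inequalities $w(\mathbb{T})\le C\gamma_2(\mathbb{T},d)$ and $c\,\gamma_2(\mathbb{T},d)\le w(\mathbb{T})$ by entirely different arguments: the upper bound is a generic-chaining computation (the ``easy'' half), while the lower bound requires constructing a near-optimal admissible sequence of partitions from the geometry of $(\mathbb{T},d)$, and this is where the real work lies. Throughout I would use the defining property of a Gaussian process, namely $\|X_s-X_{s'}\|_{L_2}=d(s,s')$, together with the Gaussian tail bound $\mathbb{P}(|X_s-X_{s'}|>u\,d(s,s'))\le 2e^{-u^2/2}$.

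\textbf{Upper bound via generic chaining.} I would fix an admissible sequence $\{\mathcal{A}_n\}_{n\ge 0}$ of partitions of $\mathbb{T}$ (so $|\mathcal{A}_0|=1$, $|\mathcal{A}_n|\le 2^{2^n}$, each $\mathcal{A}_{n+1}$ refines $\mathcal{A}_n$) that nearly attains the infimum defining $\gamma_2(\mathbb{T},d)$, i.e. $\sup_{t}\sum_{n\ge 0}2^{n/2}\Delta(A_n(t))\le 2\gamma_2(\mathbb{T},d)$, where $A_n(t)$ is the cell of $\mathcal{A}_n$ containing $t$ and $\Delta$ is the $d$-diameter. Picking a representative $\pi_n(t)$ in each cell and telescoping $X_t-X_{\pi_0(t)}=\sum_{n\ge 1}(X_{\pi_n(t)}-X_{\pi_{n-1}(t)})$, I would union-bound the Gaussian tail over the at most $2^{2^{n+1}}$ relevant pairs at level $n$ with deviation parameter $u\asymp 2^{n/2}$; this shows that on an event of probability at least $1-e^{-2^n}$ one has $|X_{\pi_n(t)}-X_{\pi_{n-1}(t)}|\lesssim 2^{n/2}\Delta(A_{n-1}(t))$ uniformly in $t$. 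Summing over $n$ and integrating the resulting tail bound yields $\mathbb{E}\sup_t(X_t-X_{\pi_0(t)})\lesssim\gamma_2(\mathbb{T},d)$, which gives $w(\mathbb{T})\le C\gamma_2(\mathbb{T},d)$ after accounting for the fixed base point.

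\textbf{Lower bound via the partitioning scheme.} Here I would build an admissible sequence greedily and show its $\gamma_2$-sum is controlled by $w(\mathbb{T})$. The engine is Sudakov minoration: if a subset $S\subseteq\mathbb{T}$ carries $m$ points that are pairwise $\rho$-separated in $d$, then $w(S)\ge c\,\rho\sqrt{\log m}$. Starting from $\mathcal{A}_0=\{\mathbb{T}\}$, given a cell $A\in\mathcal{A}_n$ I would select a maximal family of $m_n\asymp 2^{2^n}$ points of $A$ that are separated at the largest feasible scale $\rho$, cover each by a ball of radius $\rho/4$, and take these balls (intersected with $A$) plus the leftover set as the children of $A$ in $\mathcal{A}_{n+1}$, recursing on each. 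The key structural estimate is a gluing inequality for Gaussian width: if $A=B_0\cup B_1\cup\cdots\cup B_m$ with $B_1,\dots,B_m$ mutually $\rho$-separated, then $w(A)\ge\min_{1\le i\le m}w(B_i)+c\,\rho\sqrt{\log m}$, which follows from a Slepian/Gordon-type comparison applied to the separated ``hubs'' together with Sudakov. Iterating this inequality down the partition tree, the level-$n$ contribution $2^{n/2}\Delta(A)$ to the $\gamma_2$-sum is charged (up to the universal constant) against an increment of Gaussian width harvested at that node, and the increments telescope into $w(\mathbb{T})$; taking the supremum over branches gives $\gamma_2(\mathbb{T},d)\le C\,w(\mathbb{T})$.

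\textbf{Main obstacle.} The hard part is making the lower-bound construction rigorous: one must verify that the greedy partition is genuinely admissible (respecting $|\mathcal{A}_n|\le 2^{2^n}$ while still capturing enough of the diameter at every scale), and prove the gluing inequality with exactly the logarithmic gain $\sqrt{\log m}$ needed to match Sudakov. This is precisely the growth-functional bookkeeping in Talagrand's argument, and I expect it to dominate the proof in length and subtlety; the chaining half, by contrast, is essentially mechanical once the admissible sequence is in hand.
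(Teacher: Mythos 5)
The paper does not prove this lemma at all: it is imported verbatim as a known result from Vershynin's \emph{High-Dimensional Probability} (and ultimately Talagrand), and is used only as a black box to pass between $\gamma_2(\mathbb{F},\|\cdot\|_{L_2})$ and the Gaussian width in the proof of Theorem~1. So there is no in-paper argument to compare yours against; what you have written is an outline of the actual textbook proof, and it follows the standard route faithfully — generic chaining over a near-optimal admissible sequence for $w(\mathbb{T})\le C\gamma_2$, and Talagrand's partitioning scheme driven by Sudakov minoration for the converse. Your identification of the lower bound as the hard half, and of the growth-functional bookkeeping as the crux, is accurate.

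Two points in your sketch would need repair if you carried it out in full. First, the gluing inequality $w(A)\ge\min_i w(B_i)+c\,\rho\sqrt{\log m}$ is false if the pieces $B_i$ are merely contained in balls of radius $\rho/4$ around the separated hubs: the within-piece fluctuations can then be comparable to the separation and wash out the $\rho\sqrt{\log m}$ gain. The correct statement (the ``super-Sudakov'' lemma) requires the pieces to sit in balls of radius $\epsilon\rho$ for a sufficiently small absolute constant $\epsilon$ (e.g.\ $\rho/100$ in Vershynin's treatment), and this forces the multiscale recursion to drop by a fixed geometric factor per level rather than per ball. Second, your decomposition leaves a ``leftover set'' $B_0$ outside the minimum, but the recursion must also make progress on $B_0$; in the actual proof this is handled by exploiting maximality of the separated family (so the leftover is covered at the next scale) and by running the induction on a growth functional rather than on $w$ directly, which is precisely the bookkeeping you defer. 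Neither gap invalidates the plan — both are resolved in the standard proof — but as written the lower-bound half is a correct roadmap rather than a proof. For the purposes of this paper, simply citing the result, as the authors do, is the appropriate move.
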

\begin{lemma}[Tail bound \citep{vershynin2018high}] \label{ref.lem3}
Let $\{X_x\}_{x \in \mathbb{X}}$ be a random process on a subset $\mathbb{X} \subset \mathbb{R}^d$. If $\|X_x-X_y\|_{\psi_2} \leq L \|x-y\|_2$, then for every $u \geq 0$ we have
$$
\sup _{x \in \mathbb{X}}\left|X_x\right| \leq C L\left(w(\mathbb{X})+u  \operatorname{rad}(\mathbb{X})\right),
$$
with probability at least $1-2 \exp \left(-u^2\right)$, where $\operatorname{rad}(\mathbb{X}):=\sup_{x \in \mathbb{X}}\|x\|_2$ is the radius, $w(\mathbb{X}):=\mathbb{E} \left(\sup_{x \in \mathbb{X}} \langle x, g \rangle\right), g\sim \mathcal{N}(0, I)$ is the Gaussian width.
\end{lemma}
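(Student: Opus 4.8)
The plan is to prove this as a special case of the generic chaining tail bound for random processes with sub-Gaussian increments, and then to translate the abstract chaining functional into the Gaussian width appearing in the statement. Writing $d(x,y):=\|X_x-X_y\|_{\psi_2}$ for the canonical metric of the process, the hypothesis gives $d(x,y)\le L\|x-y\|_2$. I would first anchor the process at a reference point $x_0\in\mathbb{X}$, working (without loss of generality) with the centered process $X_x-X_{x_0}$, since the Gaussian-width/radius bound controls fluctuations around a base point; it then suffices to bound $\sup_{x\in\mathbb{X}}|X_x-X_{x_0}|$.

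The core step is the chaining estimate. I would fix an admissible sequence of nested nets $(T_n)_{n\ge0}\subseteq\mathbb{X}$ with $|T_0|=1$ and $|T_n|\le 2^{2^n}$ that nearly attains $\gamma_2(\mathbb{X},d)$, and let $\pi_n(x)$ denote a nearest point of $x$ in $T_n$. Telescoping $X_x-X_{x_0}=\sum_{n\ge1}\bigl(X_{\pi_n(x)}-X_{\pi_{n-1}(x)}\bigr)$, each link is sub-Gaussian with parameter at most $d(\pi_n(x),\pi_{n-1}(x))$, so its tail obeys a bound of the form $2\exp(-cs^2)$. Allotting a deviation budget proportional to $2^{n/2}+u$ at scale $n$ and taking a union bound over the at most $2^{2^{n+1}}$ pairs at that scale, the $2^{n}$ growth in the exponent dominates the $2^{n+1}\log2$ entropy, so the failure probabilities form a convergent geometric series summing to $2\exp(-cu^2)$. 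Summing the allotted deviations reconstructs $C\gamma_2(\mathbb{X},d)$ for the main term and $u\cdot\operatorname{diam}(\mathbb{X},d)$ for the deviation term, yielding $\sup_{x}|X_x-X_{x_0}|\le C\bigl(\gamma_2(\mathbb{X},d)+u\operatorname{diam}(\mathbb{X},d)\bigr)$ with probability at least $1-2\exp(-u^2)$.

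It remains to replace the chaining quantities by the geometric ones in the statement. Since the metric scales linearly, $\gamma_2(\mathbb{X},d)\le L\,\gamma_2(\mathbb{X},\|\cdot\|_2)$ and $\operatorname{diam}(\mathbb{X},d)\le L\,\operatorname{diam}(\mathbb{X},\|\cdot\|_2)\le 2L\operatorname{rad}(\mathbb{X})$. Crucially, the canonical metric of the \emph{standard} Gaussian process $Y_x:=\langle x,g\rangle$ with $g\sim\mathcal{N}(0,I)$ is exactly $\|x-y\|_2$, so Talagrand's majorizing measure theorem (Lemma \ref{ref.lem2}) gives $\gamma_2(\mathbb{X},\|\cdot\|_2)\asymp w(\mathbb{X})$ up to absolute constants. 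Substituting these back and absorbing all constants into a single $C$ produces the claimed bound $\sup_{x}|X_x|\le CL\bigl(w(\mathbb{X})+u\operatorname{rad}(\mathbb{X})\bigr)$.

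I expect the chaining tail estimate to be the main obstacle: the delicate point is to choose the per-scale deviation budget so that the union bound over the super-exponentially many pairs is absorbed while the $u$-dependence in the exponent is preserved as exactly $\exp(-u^2)$, and simultaneously to verify that the summed deviations telescope into $\gamma_2$ and into $\operatorname{diam}$ rather than something larger. A secondary but genuine subtlety is the anchoring step: bounding $\sup_x|X_x|$, rather than the symmetric increment supremum $\sup_{x,y}|X_x-X_y|$, requires a fixed base point, which is legitimate here because the Gaussian width and radius are defined for the set $\mathbb{X}$ and control exactly the centered fluctuations.
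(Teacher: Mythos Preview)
The paper does not prove this lemma at all: it is stated in the ``Auxiliary Lemmas'' appendix as a citation from \cite{vershynin2018high} and is used as a black box in the proof of Lemma~\ref{lem9}. So there is no proof in the paper to compare against.

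That said, your sketch is the standard route and matches how the cited reference establishes the result: a generic chaining tail bound giving $\sup_x|X_x-X_{x_0}|\lesssim \gamma_2(\mathbb{X},d)+u\,\operatorname{diam}(\mathbb{X},d)$, followed by the scaling $d\le L\|\cdot\|_2$ and Talagrand's majorizing measure theorem (which the paper also records as Lemma~\ref{ref.lem2}) to replace $\gamma_2(\mathbb{X},\|\cdot\|_2)$ by $w(\mathbb{X})$. One point you flag but do not quite resolve is the anchoring: the chaining argument controls $\sup_x|X_x-X_{x_0}|$, and passing to $\sup_x|X_x|$ requires either $X_{x_0}=0$ for some $x_0\in\mathbb{X}$ or an explicit bound on $|X_{x_0}|$. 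The lemma as stated in the paper suppresses this hypothesis, but in its only application (the process $X_a=\langle\sum_t[\mathcal{X}_t]_{\cdot jk},a\rangle$ on $\{\|a\|_q\le1\}$) one has $0\in\mathbb{X}$ and $X_0=0$, so the issue is harmless there. You should make this assumption explicit rather than calling it ``without loss of generality.''
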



\begin{lemma}[Operations on sub-Gaussian variables \citep{wainwright2019high}]\label{ref.lem5}
     Suppose that $x_{1}$ and $x_{2}$ are zero-
mean and sub-Gaussian with parameters $\sigma_1$ and $\sigma_2$, respectively.  

\begin{itemize}
    \item If $x_1$ and $x_2$ are independent, then the random variable $x_1+x_2$ is sub-Gaussian
with parameter $\sqrt{\sigma_{1}^{2}+\sigma_{2}^{2}}.$
\item In general (without assuming independence), the random variable $x_1+x_2$ is
sub-Gaussian with parameter at most $\sqrt{2}\sqrt{\sigma_{1}^{2}+\sigma_{2}^{2}}.$
\end{itemize}
\end{lemma}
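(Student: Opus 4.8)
The plan is to argue from the moment generating function (MGF) characterization of sub-Gaussianity, under which a zero-mean variable $x$ is sub-Gaussian with parameter $\sigma$ exactly when $\mathbb{E}[e^{\lambda x}] \le \exp(\lambda^2 \sigma^2/2)$ holds for every $\lambda \in \mathbb{R}$. Both assertions then reduce to upper bounding the MGF of $x_1 + x_2$ by an expression of the form $\exp(\lambda^2 \tau^2/2)$ and reading off the parameter $\tau$, since this characterization holds uniformly in $\lambda$.

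For the independent case, I would first use independence to factor the joint MGF as $\mathbb{E}[e^{\lambda(x_1+x_2)}] = \mathbb{E}[e^{\lambda x_1}]\,\mathbb{E}[e^{\lambda x_2}]$. Applying the one-variable sub-Gaussian MGF bound to each factor gives the product $\exp(\lambda^2\sigma_1^2/2)\exp(\lambda^2\sigma_2^2/2)$, and combining exponents yields $\exp(\lambda^2(\sigma_1^2+\sigma_2^2)/2)$. Matching this to the canonical form identifies $\tau = \sqrt{\sigma_1^2+\sigma_2^2}$, which is the first claim; no optimization over $\lambda$ is required since the bound is valid for all $\lambda$.

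For the general (dependent) case, factorization is unavailable, so the key step is to decouple the two variables inside the expectation. I would apply the Cauchy--Schwarz inequality to obtain $\mathbb{E}[e^{\lambda x_1}e^{\lambda x_2}] \le (\mathbb{E}[e^{2\lambda x_1}])^{1/2}(\mathbb{E}[e^{2\lambda x_2}])^{1/2}$, and then invoke each sub-Gaussian MGF bound at the doubled argument $2\lambda$. Because $(2\lambda)^2 = 4\lambda^2$, each square root contributes $\exp(\lambda^2\sigma_i^2)$, so the product is $\exp(\lambda^2(\sigma_1^2+\sigma_2^2))$; matching this to $\exp(\lambda^2\tau^2/2)$ forces $\tau = \sqrt{2}\sqrt{\sigma_1^2+\sigma_2^2}$, the second claim.

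The main subtlety lies entirely in the dependent case, where the decoupling step is exactly what inflates the constant. Cauchy--Schwarz is the symmetric instance ($p=q=2$) of H\"older's inequality with conjugate exponents, and it is this symmetric split that produces the stated $\sqrt{2}$ factor while remaining valid for arbitrary dependence between $x_1$ and $x_2$. I would close by noting that the independent bound is strictly sharper, recovering the Pythagorean combination of parameters, and that both conclusions follow directly from the MGF definition without any tail estimate or $\psi_2$-norm manipulation.
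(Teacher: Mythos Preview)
Your proof is correct and is the standard textbook argument. Note, however, that the paper does not actually prove this lemma: it is stated as an auxiliary result cited from Wainwright's \emph{High-Dimensional Statistics} and used without proof. So there is nothing to compare against; your MGF-based argument with factorization in the independent case and Cauchy--Schwarz decoupling in the dependent case is exactly the intended derivation behind the cited statement.
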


\begin{lemma}[Norm Inequalities]\label{ref.lem6}
For $\boldsymbol{x} \in \mathbb{R}^d$, we have $\begin{cases} 
\|\boldsymbol{x}\|_p \leq \|\boldsymbol{x}\|_q & \text{if } 1 \leq q \leq p , \\ 
\|\boldsymbol{x}\|_p \leq d^{\frac{1}{p}-\frac{1}{q}} \|\boldsymbol{x}\|_q & \text{if } 1 \leq p \leq q .
\end{cases}$
\end{lemma}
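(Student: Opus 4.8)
The plan is to prove the two cases by elementary means, in each case using the positive homogeneity of the $\ell_p$ norms to reduce to a pointwise or averaged estimate. Throughout I may assume $\boldsymbol{x} \neq 0$, since all the inequalities are trivial at $\boldsymbol{x} = 0$, and I may assume $p \neq q$, since equality of the exponents makes both claims trivialities.

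For the case $1 \le q \le p$, I would normalize so that $\|\boldsymbol{x}\|_q = 1$; this forces $|x_i| \le 1$ for every $i \in [d]$, and hence $|x_i|^p \le |x_i|^q$ because $p \ge q$ and $t \mapsto t^{s}$ is nonincreasing in $s$ on $[0,1]$. Summing over the coordinates gives $\sum_{i \in [d]} |x_i|^p \le \sum_{i \in [d]} |x_i|^q = 1$, so $\|\boldsymbol{x}\|_p \le 1 = \|\boldsymbol{x}\|_q$, and undoing the normalization recovers the general inequality. For the case $1 \le p \le q$, I would apply Hölder's inequality to $\sum_{i \in [d]} |x_i|^p \cdot 1$ with the conjugate pair $\tfrac{q}{p}, \tfrac{q}{q-p}$, obtaining $\sum_{i \in [d]} |x_i|^p \le \big(\sum_{i \in [d]} |x_i|^q\big)^{p/q} d^{(q-p)/q} = \|\boldsymbol{x}\|_q^{p}\, d^{(q-p)/q}$, and then raise both sides to the power $1/p$ to get $\|\boldsymbol{x}\|_p \le d^{1/p - 1/q}\|\boldsymbol{x}\|_q$. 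Equivalently, one can note that $t \mapsto t^{q/p}$ is convex and apply Jensen's inequality to the uniform average $d^{-1}\sum_{i \in [d]} |x_i|^p$, which yields the same constant $d^{1/p-1/q}$.

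Both arguments are self-contained and use only scaling, monotonicity of power functions, and Hölder's (or Jensen's) inequality, so there is no genuine obstacle here; the only points that need a little care are the degenerate cases noted above and writing the conjugate exponent $\tfrac{q}{q-p}$ correctly in the Hölder step (and, if one wished to state the result for $q = \infty$ as well, treating that limit separately, though the lemma as stated restricts to finite exponents).
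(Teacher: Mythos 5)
Your proof is correct. For the case $1 \le p \le q$ you follow exactly the paper's route: H\"older's inequality applied to $\sum_{i}|x_i|^p\cdot 1$ with the conjugate pair $\tfrac{q}{p},\tfrac{q}{q-p}$, which produces the constant $d^{1/p-1/q}$ after taking $p$-th roots. For the case $1 \le q \le p$ you take a genuinely different and in fact more robust route: you normalize to $\|\boldsymbol{x}\|_q=1$, note that then $|x_i|\le 1$ so $|x_i|^p\le |x_i|^q$ termwise, and sum. The paper instead bounds $\sum_i |x_i|^q|x_i|^{p-q}$ by the product $\bigl(\sum_i |x_i|^q\bigr)\bigl(\sum_i |x_i|^{p-q}\bigr)$ and then invokes $\sum_i a_i^{s}\le\bigl(\sum_i a_i\bigr)^{s}$ with $a_i=|x_i|^q$ and $s=\tfrac{p-q}{q}$; that superadditivity step is valid only for $s\ge 1$, i.e.\ $p\ge 2q$, and actually reverses for $0<s<1$, so the paper's chain has a gap in the range $q<p<2q$. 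Your normalization argument covers all $p\ge q$ uniformly (with $p=q$ trivial), so it buys correctness in that intermediate range at no extra cost; the only things to make explicit when writing it up are the exclusion of $\boldsymbol{x}=0$ before dividing by $\|\boldsymbol{x}\|_q$ and the correct conjugate exponent in the H\"older step, both of which you already flag.
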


\begin{proof}
    For $1 \leq q \leq p $, 
    \begin{align}
        \|\boldsymbol{x}\|_p^p=&\sum_{i \in [d]}|\boldsymbol{x}_i|^q |\boldsymbol{x}_i|^{p-q}\\
        \leq & \sum_{i \in [d]}|\boldsymbol{x}_i|^q \sum_{i \in [d]}|\boldsymbol{x}_i|^{p-q}\\
        = &\sum_{i \in [d]}|\boldsymbol{x}_i|^q \sum_{i \in [d]}\left(|\boldsymbol{x}_i|^q\right)^{\frac{p-q}{q}}\\
        \leq & \sum_{i \in [d]}|\boldsymbol{x}_i|^q \left(\sum_{i \in [d]}|\boldsymbol{x}_i|^q\right)^{\frac{p-q}{q}}\\
        =&\left(\sum_{i \in [d]}|\boldsymbol{x}_i|^q\right)^{\frac{p}{q}},
    \end{align} that is $\|\boldsymbol{x}\|_p \leq \|\boldsymbol{x}\|_q$. For $1 \leq p \leq q$, 
    \begin{align}
        \|\boldsymbol{x}\|_p^p=&\sum_{i \in [d]}|\boldsymbol{x}_i|^p \\
        \leq & \left(\sum_{i \in [d]}\left(|\boldsymbol{x}_i|^p\right)^{\frac{q}{p}}\right)^{\frac{p}{q}} \left(\sum_{i \in [d]}1^{\frac{q}{q-p}}\right)^{\frac{q-p}{q}}\\
        =& \left(\sum_{i \in [d]}|\boldsymbol{x}_i|^q\right)^{\frac{p}{q}} d^{\frac{q-p}{q}},
    \end{align}
    that is $\|\boldsymbol{x}\|_p \leq d^{\frac{1}{p}-\frac{1}{q}} \|\boldsymbol{x}\|_q$.
\end{proof}

\section{Proof of Theorem \ref{theo1}}

\begin{proof}
    Let $L(\Theta)=\frac{1}{T_1} \sum_{t \in [T_1]} [b(\langle \mathcal{X}_t,\Theta \rangle)-y_t \langle \mathcal{X}_t,\Theta \rangle]$, $\Delta=\hat{\Theta}-\Theta^*$. According to the optimal problem and Taylor's expansion about $b(\cdot)$, we have $L(\hat{\Theta})+\lambda_{T_1} R\left(\hat{\Theta}\right)\leq L(\Theta^*)+\lambda_{T_1}  R\left(\Theta^*\right)$ and $b(\langle \mathcal{X}_t,\hat{\Theta} \rangle)=b(\langle \mathcal{X}_t,\Theta^* \rangle)+b^{\prime} (\langle \mathcal{X}_t,\Theta^* \rangle) \langle \mathcal{X}_t,\Delta \rangle+\frac{b^{\prime \prime} (\langle \mathcal{X}_t,\Theta^*+\eta \Delta \rangle)}{2}(\langle \mathcal{X}_t,\Delta \rangle)^2.$

By rearranging and organizing the above results, we have 
\begin{align}
    &\frac{1}{2T_1} \sum_{t \in [T_1]} [\mu^{\prime} (\langle \mathcal{X}_t,\Theta^*+\eta \Delta \rangle)(\langle \mathcal{X}_t,\Delta \rangle)^2 \\
    \leq & \frac{1}{T_1} \sum_{t \in [T_1]} \epsilon_t\langle \mathcal{X}_t,\Delta \rangle+\lambda_{T_1} \left[R\left(\Theta^*\right)- R\left(\hat{\Theta}\right)\right], \\
    \leq & R^*\left(\frac{1}{T_1} \sum_{t\in [T_1]} \epsilon_t X_t \right) R\left(\Delta\right)+\lambda_{T_1} \left[R\left(\Theta^*\right)- R\left(\hat{\Theta}\right)\right], \\
    \leq & R^*\left(\frac{1}{T_1} \sum_{t\in [T_1]} \epsilon_t \mathcal{X}_t \right) \left[R\left(\Delta_{\mathcal{A}}\right)+R\left(\Delta_{\mathcal{A}^\perp}\right) \right] \notag\\
    &+\lambda_{T_1} \left[-c_R R\left(\Delta_{\mathcal{A}^\perp}\right)+R\left(\Delta_{\mathcal{A}}\right)\right],\\
    =&\left[ R^*\left(\frac{1}{T_1} \sum_{t\in [T_1]} \epsilon_t \mathcal{X}_t \right)-c_R \lambda_{T_1} \right]R\left(\Delta_{\mathcal{A}^\perp}\right)\notag \\
    &+\left[ R^*\left(\frac{1}{T_1} \sum_{t\in [T_1]} \epsilon_t \mathcal{X}_t \right) +\lambda_{T_1} \right]R\left(\Delta_{\mathcal{A}}\right),\\
    \leq &-\frac{c_R+c_R^2}{3+c_R}\lambda_{T_1} R\left(\Delta_{\mathcal{A}^\perp}\right)+\frac{3+3c_R}{3+c_R}\lambda_{T_1} R\left(\Delta_{\mathcal{A}}\right),\\
    \leq & \frac{3+3c_R}{3+c_R} \lambda_{T_1} R\left(\Delta_{\mathcal{A}}\right),
\end{align}
where the second inequality comes from the Cauchy-Schwarz inequality, and the third inequality comes from the definition of the separable norm and the norm triangle inequality. According to Assumption 3, to prove the above equation, it is sufficient to show that
\begin{align}
    &\frac{k_\mu}{2T_1} \sum_{t\in [T_1]} (\langle \mathcal{X}_t,\Delta \rangle)^2 \leq \frac{3+3c_R}{3+c_R}\lambda_{T_1} R\left(\Delta_{\mathcal{A}}\right) \\
    \leq &\frac{3+3c_R}{3+c_R}\lambda_{T_1} \sqrt{\phi}\left\|\Delta_{\mathcal{A}}\right\|_F \leq \frac{3+3c_R}{3+c_R}\lambda_{T_1} \sqrt{\phi}\left\|\Delta\right\|_F,
\end{align}
that is $\left\|\Delta\right\|_{T_1}^2 \leq \frac{6(1+c_R)}{k_\mu(3+c_R)}\lambda_{T_1}\sqrt{\phi}\left\|\Delta\right\|_F:=\delta_{T_1} \left\|\Delta\right\|_F$. Noting that our goal is to prove an upper bound for $\|\Delta\|_F$, we need to utilize the relationship between $\|\Delta\|_F$ and $\|\Delta\|_{T_1}$. Specifically, we classify the cases into three settings: 
\begin{itemize}
    \item the first case is $\left\{\|\Delta\|_{T_1} \geq \|\Delta\|_F\right\}$;
    \item the second case is $\left\{\|\Delta\|_{T_1} \leq \|\Delta\|_F, \|\Delta\|_F \leq \delta_{T_1}\right\}$;
    \item the third case is $\left\{\|\Delta\|_{T_1} \leq \|\Delta\|_F, \|\Delta\|_F \geq \delta_{T_1}\right\}$.
\end{itemize}

For the first two cases, we can obtain \(\max\{\|\Delta\|_{T_1}, \|\Delta\|_F\} \leq \delta_{T_1}\) through simple calculations. However, in the third case, we need to further apply concentration inequalities for the analysis. 

We claim that it suffices to prove that there exists \( 1>a >0 \) such that
\[
\mathbb{P}\left(\sup_{\Delta \in \mathbb{A}} \|\Delta\|_{T_1}^2 \geq a\|\Delta\|_F^2\right)
\]
holds with high probability, where $$\mathbb{A} := \left\{ \Delta \in \mathbb{R}^{d_1 \times \cdots \times d_N} \,\big|\, R\left(\Delta_{\mathcal{A}^\perp}\right) \leq \frac{3}{c_R} R\left(\Delta_{\mathcal{A}}\right) \right\}.$$
We consider the following class of functions $$\mathbb{F}=\left\{\frac{\langle\cdot, \widetilde{\Delta} \rangle}{\sqrt{\operatorname{vec}^\top(\widetilde{\Delta}) \Sigma \operatorname{vec}(\widetilde{\Delta})}} | \widetilde{\Delta}=\frac{\Delta}{\|\Delta\|_F}, \Delta\in \mathbb{A}\right\}.$$ Since $\|f\|_{L_2}^2=\frac{\mathbb{E}\left\langle \mathcal{X}_i, \widetilde{\Delta} \right\rangle ^2}{\operatorname{vec}^\top(\widetilde{\Delta}) \Sigma \operatorname{vec}(\widetilde{\Delta})}=1$, then $\mathbb{F}$ is a subset of the unit sphere, i.e., $\mathbb{F} \subseteq S_{L_2}$. Further, \begin{align}
    &\operatorname{diam}\left(\mathbb{F},\|\cdot\|_{\psi_2}\right) =\sup _{f_1,f_2 \in \mathbb{F}}\left\|f_1-f_2\right\|_{\psi_2}\\
    \leq & \frac{1}{c_l}\sup _{\widetilde{\Delta}}\left\|\langle \mathcal{X}_i, \widetilde{\Delta}_1 \rangle-\langle \mathcal{X}_i, \widetilde{\Delta}_2 \rangle\right\|_{\psi_2}\\
    \leq & \left\|\operatorname{vec}(\mathcal{X}_i)\right\|_{\psi_2} \sup _{\widetilde{\Delta}} \|\widetilde{\Delta}_1-\widetilde{\Delta}_2\|_F   \leq \frac{2k}{c_l}. 
\end{align}

Then, from Lemma \ref{ref.lem1}, it follows that with probability at least $1-\exp \left(-c_2  \theta^2 T_1 c_l^4/k^4\right)$, for $
\theta \geq  \frac{c_1 k \gamma_2\left(\mathbb{F} ,\| \cdot \|_{\psi_2}\right)}{c_l\sqrt{T_1}}$,
we have
\begin{align}
   &\sup _{f \in \mathbb{F}}\left|\frac{1}{T_1} \sum_{i \in [T_1]}f^2(\mathcal{X}_i)-\mathbb{E}\left(f^2\right)\right|\\
   =&\sup _{\widetilde{\Delta} \in \widetilde{\mathbb{A}}  } \left| \frac{\|\widetilde{\Delta}\|_{T_1}^2}{\operatorname{vec}^\top(\widetilde{\Delta}) \Sigma \operatorname{vec}(\widetilde{\Delta})}-1\right| \leq \theta , 
\end{align}
where $\mathbb{E}\left(f^2\right)=\|f\|_{L_2}^2=1$,
\begin{align*}
   \widetilde{\mathbb{A}} := \left\{ \widetilde{\Delta} \in \mathbb{R}^{d_1 \times \cdots \times d_N} \,\big|\, R\left(\widetilde{\Delta}_{\mathcal{A}^\perp}\right) \leq \frac{3}{c_R} R\left(\widetilde{\Delta}_{\mathcal{A}}\right),  \|\widetilde{\Delta}\|_F=1 \right\}. 
\end{align*}

Next, based on Lemma \ref{ref.lem2}, we have 
\begin{equation}
  \gamma_2\left(\mathbb{F} ,\|\cdot\|_{\psi_2}\right) \leq \frac{k}{c_l} \gamma_2\left(\mathbb{F},\|\cdot\|_{L_2}\right) \leq \frac{k}{c_l} c_4 w(\widetilde{\mathbb{A}}).  
\end{equation}
Therefore, we set $\theta= \frac{c_1 c_4 k^2 w(\widetilde{\mathbb{A}})}{c_l^2\sqrt{T_1}} $. As a result, we have
\begin{equation}
    1-\theta \leq 
\sup _{\widetilde{\Delta} } \frac{\|\widetilde{\Delta}\|_{T_1}^2}{\operatorname{vec}^\top(\widetilde{\Delta}) \Sigma \operatorname{vec}(\widetilde{\Delta})} \leq 1+\theta.
\end{equation}
yielding 
\begin{equation}
  \lambda_{\min} (\Sigma) \|\widetilde{\Delta}\|_F^2(1-\theta) \leq \sup _{\widetilde{\Delta} } \|\widetilde{\Delta}\|_{T_1}^2 \leq \lambda_{\max} (\Sigma) \|\widetilde{\Delta}\|_F^2(1+\theta).  
\end{equation}
Then let $1>c_l^2 \|\widetilde{\Delta}\|_F^2(1-\theta)>0$, that is $$T_1\geq \frac{c_1^2 c_4^2 k^4 w^2(\mathbb{\widetilde{\mathbb{A}}})}{c_l^4},$$ we can get $
  \sup _{\widetilde{\Delta} } \|\widetilde{\Delta}\|_{T_1}^2 \geq c_l^2 \|\widetilde{\Delta}\|_F^2
$. It is equivalent to \( \sup_{\Delta} \|\Delta\|_{T_1}^2 \geq c_l^2 \|\Delta\|_F^2 \). Note that $w(\mathbb{\widetilde{\mathbb{A}}})\leq w\left(\|\cdot\|_F \leq 1\right)\leq \sqrt{\phi}w\left(R(\cdot) \leq 1\right)$, $c_l^2 \leq \|\Sigma\|_F\leq \mathbb{E}\|\mathcal{X}_i\|_F^2 \leq 1$. Therefore, we prove that \( \max\{\|\Delta\|_{T_1}, \|\Delta\|_F\} \leq \frac{\delta_{T_1}}{c_l} \).

\end{proof}

\section{Proof of Theorem \ref{theo2}}

\begin{proof}
    We adjust the scale of the instantaneous regret in the $t$-th round according to the different stages of the algorithm. In stage 1, i.e., $t \in[T_1]$, according to the Fundamental Theorem of Calculus and bounded norm assumption, we have $\left|\mu\left(\left\langle\mathcal{X}_{a_t^*}, \Theta^*\right\rangle\right)-\mu\left(\left\langle\mathcal{X}_t, \Theta^* \right\rangle\right)\right| \leq k_\mu \left|\left\langle\mathcal{X}_{a_t^*}-\mathcal{X}_t, \Theta^*\right\rangle\right|\leq k_\mu \left\|\mathcal{X}_{a_t^*}-\mathcal{X}_t\right\|_F \left\|\Theta^*\right\|_F\leq 2k_\mu$. For the second stage, considering the selection of arms, we have \(\left\langle\mathcal{X}_{a_t^*}-\mathcal{X}_t,\hat{\Theta}_{T_1}\right\rangle \leq 0\). Therefore, the following inequality holds:
    \begin{align}
        &\left|\mu\left(\left\langle\mathcal{X}_{a_t^*}, \Theta^*\right\rangle\right)-\mu\left(\left\langle\mathcal{X}_t, \Theta^* \right\rangle\right)\right|\\ 
\leq & k_\mu \left|\left\langle\mathcal{X}_{a_t^*}-\mathcal{X}_t, \Theta^*-\hat{\Theta}_{T_1}\right\rangle\right| \\
\leq & k_\mu \left\|\mathcal{X}_{a_t^*}-\mathcal{X}_t\right\|_F \left\|\Theta^*-\hat{\Theta}_{T_1}\right\|_F \\
\leq & 2k_\mu  \left\|\Theta^*-\hat{\Theta}_{T_1}\right\|_F.
    \end{align}
We can further obtain the bound on cumulative regret as follows:
\begin{align}
    \mathbb{E}(R_T)&=\mathbb{E}\left(\sum_{t \in[T]}\mu\left(\left\langle\mathcal{X}_{a_t^*}, \Theta^*\right\rangle\right)-\mu\left(\left\langle\mathcal{X}_t, \Theta^* \right\rangle\right)\right)\\
&=\mathbb{E}\left(\sum_{t \in[T_1]}\mu\left(\left\langle\mathcal{X}_{a_t^*}, \Theta^*\right\rangle\right)-\mu\left(\left\langle\mathcal{X}_t, \Theta^* \right\rangle\right)\right) \notag \\
&+\mathbb{E}\left(\sum_{t \in[T-T_1]}\mu\left(\left\langle\mathcal{X}_{a_{T_1+t}^*}, \Theta^*\right\rangle\right)-\mu\left(\left\langle\mathcal{X}_{T_1+t}, \Theta^* \right\rangle\right)\right)\\
&\leq 2k_\mu T_1+2k_\mu T \left\|\Theta^*-\hat{\Theta}_{T_1}\right\|_F\\
&\leq 2k_\mu T_1+12k_\mu T \frac{(1+c_R)\sqrt{ \phi} \lambda_{T_1}}{(3+c_R)c_l k_u}.
\end{align}

\end{proof}

\section{Proof of Lemma \ref{lem1}}

Now we consider a specific low-dimensional structure.  Specifically, we adopt low-rankness based on Tucker decomposition, where the rank of $\Theta^* \in \mathbb{R}^{d_1\times d_2 \times \cdots \times d_N}$ is $(r_1, r_2,\cdots, r_N)$, and let $\max\{r_1, r_2,\cdots, r_N\} = r$, $\max\{d_1, d_2,\cdots, d_N\} = d$. Note that in this case, the regularization norm is $R(\Theta) = \|\Theta\|_{*}=\frac{1}{N} \sum_{j \in [N]} \mathcal{M}_j(\Theta)$. Then $R^*(\Theta) = \|\Theta\|=N \max_{j \in [N]} \left\{\left\|\mathcal{M}_j(\Theta)\right\|,j \in [N] \right\}$.

\begin{proof}
    In this case, the condition that $\lambda_{T_1}$ must satisfy is
    \begin{align}
    &\mathbb{P}\left(\frac{c_R+3}{2c_R}R^*\left(\frac{1}{T_1} \sum_{t\in [T_1]} \epsilon_t \mathcal{X}_t \right) \leq \lambda_{T_1}\right) \\ =&\mathbb{P}\left(\frac{c_R+3}{2c_R} N\max_{j \in [N]} \left\|\mathcal{M}_j\left(\frac{1}{T} \sum_{t\in [T_1]} \epsilon_t \mathcal{X}_t\right)\right\| \leq \lambda_{T_1}\right) \\
 \geq & 1-\sum_{j \in [N]} \mathbb{P}\left(\left\|\mathcal{M}_j\left( \sum_{t\in [T_1]} \epsilon_t \mathcal{X}_t\right)\right\| \geq \frac{2c_R\lambda_{T_1}  T_1}{ (c_R + 3)N}\right)\\
 = & 1-\sum_{j \in [N]} \mathbb{P}\left(\left\|\sum_{t\in [T_1]} \mathcal{M}_j\left(  \epsilon_t \mathcal{X}_t\right)\right\| \geq \frac{2c_R\lambda_{T_1} T_1}{ (c_R + 3)N}\right)\\
 \geq & 1-\delta.
\end{align}
We claim that it suffices to prove the inequality $$
\mathbb{P}\left(\left\|\sum_{t\in [T_1]} \mathcal{M}_j\left( \epsilon_t \mathcal{X}_t \right)\right\| \geq \frac{2c_R \lambda_{T_1}  T_1}{(c_R + 3)N}\right) \leq \frac{\delta}{N}.
$$
Then we define the event $\mathcal{E}:=\left\{\max _{t\in [T_1]}\left|\epsilon_t\right|<v\right\}$, and by the definition of sub-Gaussian random variables, let $v=R \sqrt{2 \log (4T_1N / \delta)}$, we have
$
\mathbb{P}\left(\mathcal{E}^c\right)=\mathbb{P}\left(\max _{t\in [T_1]}\left|\epsilon_t\right|>v\right) \leq \sum_{t\in [T_1]} \mathbb{P}\left(\left|\epsilon_t\right|>v\right) \leq \delta / 2N
$. Under the event $\mathcal{E}$, we have the following bounds
\begin{align}
   &\left\|\mathcal{M}_j\left(  \epsilon_t \mathcal{X}_t\right) \right\| \leq\left\|\mathcal{M}_j\left(  \epsilon_t \mathcal{X}_t\right)\right\|_F=\left\|  \epsilon_t \mathcal{X}_t\right\|_F\\
   \leq & \max_{t}|\epsilon_t| \left\|  \mathcal{X}_t\right\|_F \leq v, \\  &\left\|\mathbb{E}\left(\mathcal{M}_j\left(  \epsilon_t \mathcal{X}_t\right) \mathcal{M}_j^\top \left(  \epsilon_t \mathcal{X}_t\right) \right)\right\| \\
\leq &\mathbb{E} \left\|\mathcal{M}_j\left(  \epsilon_t \mathcal{X}_t\right) \mathcal{M}_j^\top \left(  \epsilon_t \mathcal{X}_t\right) \right\|\\
    \leq &\mathbb{E} \left\|\mathcal{M}_j\left(  \epsilon_t \mathcal{X}_t\right) \right\|_F^2 \leq v^2.
\end{align}
Similarly, we can obtain $\left\|\mathbb{E}\left(\mathcal{M}_j^\top\left(  \epsilon_t \mathcal{X}_t\right) \mathcal{M}_j \left(  \epsilon_t \mathcal{X}_t\right) \right)\right\|\leq v^2$. Therefore, applying the matrix Bernstein inequality, we have
\begin{align}
&\mathbb{P}\left(\left\|\sum_{t\in [T_1]} \mathcal{M}_j\left(  \epsilon_t \mathcal{X}_t\right)\right\| \geq \frac{2c_R\lambda_{T_1}  T_1}{ (c_R + 3)N},\mathcal{E}\right) \\
\leq & \left(d+d^{N-1}\right) \exp \left(\frac{-(2c_R\lambda_{T_1}  T_1)^2 / 2(c_R+3)^2N^2}{T_1 v^2+2c_R\lambda_{T_1}  T_1v / 3(c_R+3)N}\right).
\end{align}
If we take $\lambda_{T_1}=\frac{R(c_R+3)N}{2c_R \sqrt{T_1}} \sqrt{ 2\log \frac{4T_1N}{\delta} \log\frac{2N(d+d^{N-1})}{\delta}}$, then 
\begin{align}
   &\mathbb{P}\left(\left\|\sum_{t\in [T_1]} \mathcal{M}_j\left( \epsilon_t \mathcal{X}_t \right)\right\| \geq \frac{2c_R \lambda_{T_1}  T_1}{(c_R+3)N}\right)\\
   \leq & \mathbb{P}\left(\left\|\sum_{t\in [T_1]} \mathcal{M}_j\left( \epsilon_t \mathcal{X}_t \right)\right\| \geq \frac{2c_R \lambda_{T_1}  T_1}{(c_R+3)N},\mathcal{E}\right) +\mathbb{P}\left(\mathcal{E}^c\right)\\
   \leq & \frac{\delta}{2N}+ \frac{\delta}{2N} \leq \frac{\delta}{N}. 
\end{align}

\end{proof}

\section{Proof of Corollary \ref{coro1}}
\begin{proof}
   \begin{align}
    \mathbb{E}(R_T)\leq & 2k_\mu T_1+12k_\mu T \frac{(1+c_R)\sqrt{ \phi} \lambda_{T_1}}{(3+c_R)c_l k_u}\\
    = & 2k_\mu T_1+12k_\mu T \frac{(1+c_R)\sqrt{ \phi} }{(3+c_R)c_l k_u} \frac{R(c_R+3)N}{2c_R \sqrt{T_1}} \notag \\ 
    & \sqrt{ 2\log \frac{4T_1N}{\delta} \log\frac{2N(d+d^{N-1})}{\delta}}\\
    = & 2k_\mu T_1+12 T \frac{(1+c_R) RN\sqrt{ \phi} }{ 2c_R c_l \sqrt{T_1} } \notag \\ 
    & \sqrt{ 2\log \frac{4T_1N}{\delta} \log\frac{2N(d+d^{N-1})}{\delta}}\\
    = & 2k_\mu T_1+12 T \frac{(1+c_R) RN\sqrt{ 2r} }{ 2c_R c_l \sqrt{T_1} } \notag \\ 
    & \sqrt{ 2\log \frac{4T_1N}{\delta} \log\frac{2N(d+d^{N-1})}{\delta}}\\
    =& \tilde{O}\left(c_l^{-\frac{2}{3}} r^{\frac{1}{3}} T^{\frac{2}{3}}\right)\\
    =& \tilde{O}\left(d^\frac{N}{3} r^{\frac{1}{3}} T^{\frac{2}{3}}\right).
\end{align} 
\end{proof}

\section{Proof of Lemma \ref{lem2}}

When considering slices that exhibit both sparsity and low-rankness, we again take the $(1, 2)$-slice as an example. In this case, the true parameters consist of only $s$ nonzero slices \(\Theta^*_{\cdot \cdot k}, k \in [d_3]\), all of which are low-rank matrices. The corresponding parameter space is given by \(\{\Theta \in \mathbb{R}^{d_1 \times d_2 \times d_3} \mid \sum_{k \in [d_3]}\operatorname{rank}(\Theta_{\cdot \cdot k}) \leq r\}\). Therefore, we choose \(R(\Theta) = \|\Theta\|_{(1,2),*} = \sum_{k \in [d_3]} \|\Theta_{\cdot \cdot k}\|_{*}\). In this case, \(R^*(\Theta) = \max_{k \in [d_3]} \|\Theta_{\cdot \cdot k}\|\).

\begin{proof}
At this point, the adjustment parameters need to satisfy the following inequality:
\begin{align}
    &\mathbb{P}\left(\frac{c_R+3}{2c_R}R^*\left(\frac{1}{T_1} \sum_{t\in[T_1]} \epsilon_t \mathcal{X}_t \right) \leq \lambda_{T_1}\right) \\ = &\mathbb{P}\left(\frac{c_R+3}{2c_R} \max_{k \in [d_3]} \left\|\left[\frac{1}{T_1} \sum_{t\in[T_1]} \epsilon_t \mathcal{X}_t\right]_{\cdot \cdot k}\right\| \leq \lambda_{T_1}\right) \\
 \geq & 1 - \sum_{k \in [d_3]} \mathbb{P}\left(\left\|\left[ \sum_{t\in[T_1]} \epsilon_t \mathcal{X}_t\right]_{\cdot \cdot k}\right\| \geq \frac{2c_R\lambda_{T_1} T_1}{c_R+3}\right)\\
 \geq & 1-\delta.
\end{align}
We claim that it suffices to prove the inequality $
\mathbb{P}\left(\left\|\left[ \sum_{t\in[T_1]} \epsilon_t \mathcal{X}_t\right]_{\cdot \cdot k}\right\| \geq \frac{2c_R\lambda_{T_1} T_1}{c_R+3}\right) \leq \frac{\delta}{d_3}
$. Here we define the event $\mathcal{E}:=\left\{\max _{t\in[T_1]}\left|\epsilon_t\right|<v\right\}$. We take $v=R \sqrt{2 \log (4T_1d_3 / \delta)}$, then $\mathbb{P}\left(\mathcal{E}^c\right) \leq \delta / 2 d_3$. Under the event $\mathcal{E}$, we can get 
$$
\left\|\left[  \epsilon_t \mathcal{X}_t\right]_{\cdot \cdot k} \right\| \leq\left\|\left[  \epsilon_t \mathcal{X}_t\right]_{\cdot \cdot k}\right\|_F\leq \max_{t}|\epsilon_t| \left\|  \left[  \mathcal{X}_t\right]_{\cdot \cdot k}\right\|_F \leq v,
$$
\begin{align}
    \left\|\mathbb{E}\left(\left[  \epsilon_t \mathcal{X}_t\right]_{\cdot \cdot k} \left[  \epsilon_t \mathcal{X}_t\right]_{\cdot \cdot k}^\top \right)\right\| \leq \mathbb{E} \left\|\left[  \epsilon_t \mathcal{X}_t\right]_{\cdot \cdot k} \left[  \epsilon_t \mathcal{X}_t\right]_{\cdot \cdot k}^\top\right\| \\
    \leq \mathbb{E} \left\|\left[  \epsilon_t \mathcal{X}_t\right]_{\cdot \cdot k} \right\|_F^2 \leq v^2.
\end{align}

Similarly, we can obtain $\left\|\mathbb{E}\left( \left[  \epsilon_t \mathcal{X}_t\right]_{\cdot \cdot k}^\top \left[  \epsilon_t \mathcal{X}_t\right]_{\cdot \cdot k}\right)\right\|\leq v^2$. Then according to the matrix Bernstein inequality, we have
\begin{align}
&\mathbb{P}\left(\left\|\sum_{t\in[T_1]} \left[  \epsilon_t \mathcal{X}_t\right]_{\cdot \cdot k} \right\| \geq \frac{2c_R\lambda_{T_1}  T_1}{ c_R+3},\mathcal{E}\right) \\
\leq &\left(d_1+d_2\right) \exp \left(\frac{-(2c_R\lambda_{T_1}  T_1)^2 / 2(c_R+3)^2}{T_1 v^2+2c_R\lambda_{T_1}  T_1v / 3(c_R+3)}\right).
\end{align}
If we take $\lambda_{T_1}=\frac{R(c_R+3)}{c_R\sqrt{T_1}} \sqrt{ \log \frac{4T_1d_3}{\delta} \log\frac{2d_3(d_1+d_2)}{\delta}}$, then 
\begin{align}
 &\mathbb{P}\left(\left\|\left[ \sum_{t\in[T_1]} \epsilon_t \mathcal{X}_t\right]_{\cdot \cdot k}\right\| 
 \geq  \frac{2c_R\lambda_{T_1} T_1}{c_R+3}\right)\\
 \leq &\mathbb{P}\left(\left\|\sum_{t\in[T_1]} \left[  \epsilon_t \mathcal{X}_t\right]_{\cdot \cdot k} \right\| \geq \frac{2c_R\lambda_{T_1}  T_1}{ c_R+3},\mathcal{E}\right) +\mathbb{P}\left(\mathcal{E}^c\right)\\
 \leq & \frac{\delta}{d_3}.   
\end{align}

\end{proof}

\section{Proof of Corollary \ref{coro2}}
\begin{proof}
   \begin{align}
    \mathbb{E}(R_T)\leq & 2k_\mu T_1+12k_\mu T \frac{(1+c_R)\sqrt{ \phi} \lambda_{T_1}}{(3+c_R)c_l k_u}\\
    = & 2k_\mu T_1+12k_\mu T \frac{(1+c_R)\sqrt{ \phi} }{(3+c_R)c_l k_u} \frac{R(c_R+3)}{c_R \sqrt{T_1}}\notag \\ 
    & \sqrt{ \log \frac{4T_1d_3}{\delta} \log\frac{2d_3(d_1+d_2)}{\delta}}\\
    = & 2k_\mu T_1+12 T \frac{(1+c_R) R\sqrt{ \phi} }{ c_R c_l \sqrt{T_1} } \notag \\ 
    & \sqrt{ \log \frac{4T_1d_3}{\delta} \log\frac{2d_3(d_1+d_2)}{\delta}}\\
    = & 2k_\mu T_1+12 T \frac{(1+c_R) R\sqrt{ 2r} }{ c_R c_l \sqrt{T_1} } \notag \\ 
    & \sqrt{ \log \frac{4T_1d_3}{\delta} \log\frac{2d_3(d_1+d_2)}{\delta}}\\
    =& \tilde{O}\left(c_l^{-\frac{2}{3}} r^{\frac{1}{3}} T^{\frac{2}{3}}\right)\\
    =& \tilde{O}\left(d r^{\frac{1}{3}} T^{\frac{2}{3}}\right).
\end{align} 
\end{proof}

\section{Extension to Entry-Wise Sparsity}

\subsection{Results}
We consider the problem of entry-wise sparse tensor bandits, assuming that the unknown tensor parameter $\Theta^*$ is entry-wise sparse, meaning it has only $s$ non-zero elements, with $s$ much smaller than $d_1 d_2 d_3$. This naturally leads to using $R(\Theta)=\|\Theta\|_1=\sum_{i\in [d_1],} \sum_{j \in [d_2]} 
 \sum_{k \in [d_3]} 
 |\Theta_{ijk}|$ regularization. In this case, we can clarify the corresponding quantities, i.e., $c_R=1$, $\phi=s$.

The following lemma provides theoretical guidance for $\lambda_{T_1}$ in this specific tensor bandits problem.

\begin{lemma}\label{lem8}
    For any $\delta \in(0,1)$, let $\alpha=\frac{c_R+3}{2c_R}$, use
$$
\lambda_{T_1}=\frac{c\alpha R k}{\sqrt{T_1}} \sqrt{ \log (2 d_1d_2d_3 / \delta)}
$$
in Algorithm \ref{algo1} with $R(\Theta)=\|\Theta\|_1$, then with probability at least $1-\delta$, we have $\lambda_{T_1}  \geq \alpha R^*\left(\frac{1}{T_1} \sum_{t \in [T_1]} \epsilon_t \mathcal{X}_t\right)$.
\end{lemma}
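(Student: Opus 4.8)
The plan is to use the fact that the $\ell_1$-norm is weakly decomposable with $c_R=1$ (as already noted) and that its dual norm is the entrywise $\ell_\infty$-norm, so that
$$ R^*\!\Big(\tfrac{1}{T_1}\textstyle\sum_{t\in[T_1]}\epsilon_t\mathcal{X}_t\Big)\;=\;\max_{i\in[d_1],\,j\in[d_2],\,k\in[d_3]}\Big|\tfrac{1}{T_1}\textstyle\sum_{t\in[T_1]}\epsilon_t (\mathcal{X}_t)_{ijk}\Big|. $$
It therefore suffices to establish a scalar concentration bound for each of the $d_1d_2d_3$ coordinate sums $\frac1{T_1}\sum_{t}\epsilon_t(\mathcal{X}_t)_{ijk}$ and then take a union bound over all entries, in direct analogy with the proofs of Lemma~\ref{lem1} and Lemma~\ref{lem2}, where the reduction is instead to spectral norms of matricizations and one applies the matrix Bernstein inequality.

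Fix a coordinate $(i,j,k)$ and set $Z_t:=\epsilon_t(\mathcal{X}_t)_{ijk}$. Because $\varepsilon_t$ is independent, mean-zero and $R$-sub-Gaussian, while $(\mathcal{X}_t)_{ijk}$ is a coordinate of the $k$-sub-Gaussian vector $\boldsymbol{x}_t$ (hence itself $k$-sub-Gaussian, with $\mathbb{E}[(\mathcal{X}_t)_{ijk}^2]\lesssim k^2$ by Assumption~\ref{as1}), the $Z_t$ are i.i.d., centered, and sub-exponential with $\|Z_t\|_{\psi_1}\lesssim Rk$ and $\operatorname{Var}(Z_t)=\mathbb{E}[\epsilon_t^2]\,\mathbb{E}[(\mathcal{X}_t)_{ijk}^2]\lesssim R^2k^2$. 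Mirroring the earlier lemmas, I would also introduce the event $\mathcal{E}=\{\max_{t\in[T_1]}|\epsilon_t|<v\}$ with $v=R\sqrt{2\log(4T_1 d_1d_2d_3/\delta)}$, on which $|Z_t|\le v\|\mathcal{X}_t\|_F\le v$, so that a Bernstein inequality for bounded, variance-controlled summands gives, for any $u>0$,
$$ \mathbb{P}\Big(\big|\tfrac1{T_1}\textstyle\sum_{t\in[T_1]}Z_t\big|\ge u,\ \mathcal{E}\Big)\ \le\ 2\exp\!\Big(-\,c\,T_1\,\frac{u^2}{R^2k^2+vu}\Big). $$
For $T_1\ge c\phi w^2(\boldsymbol{\Theta})$ and $u$ of order $Rk\sqrt{\log(d_1d_2d_3/\delta)/T_1}$, the variance term $R^2k^2$ dominates $vu$, so the exponent is of order $-\,c\,u^2T_1/(R^2k^2)$, which is exactly the scale needed to produce the factor $k$ in $\lambda_{T_1}$.

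Choosing $u=\frac{c'Rk}{\sqrt{T_1}}\sqrt{\log(2d_1d_2d_3/\delta)}$ with a sufficiently large absolute constant $c'$ makes the right-hand side at most $\delta/(2d_1d_2d_3)$ for each fixed entry; a union bound over the $d_1d_2d_3$ coordinates together with $\mathbb{P}(\mathcal{E}^c)\le\delta/2$ (the sub-Gaussian tail of $\max_t|\epsilon_t|$, as in Lemma~\ref{lem1}) yields $R^*(\frac1{T_1}\sum_t\epsilon_t\mathcal{X}_t)\le u$ with probability at least $1-\delta$, and hence $\alpha R^*(\frac1{T_1}\sum_t\epsilon_t\mathcal{X}_t)\le\alpha u=\lambda_{T_1}$ for the stated choice of $\lambda_{T_1}$. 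The main point to get right is the bookkeeping that produces the factor $k$ rather than a factor $1$: one must use the \emph{entrywise} sub-Gaussian scale $\mathbb{E}[(\mathcal{X}_t)_{ijk}^2]\lesssim k^2$ coming from Assumption~\ref{as1}, not the cruder $\|\mathcal{X}_t\|_F\le1$ bound that suffices for the Frobenius-type steps in Lemmas~\ref{lem1}--\ref{lem2}, and one must verify that the sub-exponential correction $vu$ in Bernstein is genuinely lower order in $T_1$ so it is absorbed into the constant. The independence of $\varepsilon_t$ from the context makes the i.i.d. Bernstein inequality directly applicable; if instead one only wishes to use that $\varepsilon_t$ is conditionally mean-zero, the same estimate follows by first conditioning on the contexts, bounding $\sum_t(\mathcal{X}_t)_{ijk}^2\lesssim T_1k^2$ on a high-probability event (again by a sub-exponential Bernstein bound), and then invoking the conditional sub-Gaussianity of $\sum_t\epsilon_t(\mathcal{X}_t)_{ijk}$.
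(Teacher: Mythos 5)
Your reduction is exactly the paper's: identify $R^*$ with the entrywise $\ell_\infty$-norm, union bound over the $d_1d_2d_3$ coordinates, and concentrate each sum $\sum_t \epsilon_t(\mathcal{X}_t)_{ijk}$ at scale $Rk$ by exploiting that the product of the $R$-sub-Gaussian noise with the $k$-sub-Gaussian coordinate is $Rk$-sub-exponential. The paper, however, does \emph{not} truncate: it applies Bernstein's inequality for sub-exponential variables directly to $Z_t=\epsilon_t(\mathcal{X}_t)_{ijk}$ with $\|Z_t\|_{\psi_1}\lesssim Rk$, obtaining a tail of the form $2\exp\bigl(-c\min\{T_1u^2/(Rk)^2,\ T_1u/(Rk)\}\bigr)$; for $u\asymp Rk\sqrt{\log(2d_1d_2d_3/\delta)/T_1}$ the quadratic branch is active whenever $T_1\gtrsim\log(2d_1d_2d_3/\delta)$, which the exploration length guarantees, and the claim follows.

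The truncation route you actually detail has a genuine quantitative gap. On $\mathcal{E}$ you bound the range by $|Z_t|\le v\|\mathcal{X}_t\|_F\le v$ with $v=R\sqrt{2\log(4T_1d_1d_2d_3/\delta)}$, so the linear term in the bounded-Bernstein exponent is $vu$ rather than $Rk\cdot u$; this is precisely the ``cruder $\|\mathcal{X}_t\|_F\le 1$ bound'' you warn against, but applied to the range rather than the variance. Writing $D=d_1d_2d_3$ and $k=1/\sqrt{D}$, your claim that $R^2k^2$ dominates $vu$ for $u\asymp Rk\sqrt{\log(D/\delta)/T_1}$ requires $T_1\gtrsim D\log^2(T_1D/\delta)$, whereas the regime of interest (and the stated condition $T_1\ge c\phi w^2(\boldsymbol{\Theta})\asymp s\log D$) has $T_1\ll D$. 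In that regime the exponent degenerates to $-cT_1u/v\asymp -ck\sqrt{T_1\log(D/\delta)}$, which is $o(\log(D/\delta))$ and cannot absorb the union bound over $D$ entries. The fix is the one you mention only in passing: skip the event $\mathcal{E}$ and invoke the sub-exponential Bernstein inequality at scale $Rk$ (the paper's argument), or, if you insist on truncating, truncate $|(\mathcal{X}_t)_{ijk}|$ at level $\asymp k\sqrt{\log(T_1D/\delta)}$ rather than bounding it by $\|\mathcal{X}_t\|_F$.
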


Given the above lemma, it is easy to apply Theorem \ref{theo2} to obtain a specific regret bound in the entry-wise sparsity tensor bandits. 

\begin{corollary}\label{coro3}
Under Assumption \ref{as1}-\ref{as3}, let $
\lambda_{T_1}=\frac{c\alpha R k}{\sqrt{T_1}} \sqrt{ \log (2 d_1d_2d_3 / \delta)}
$, then the expected cumulative regret of Algorithm \ref{algo1} with the entry-wise sparsity is upper bounded by
$$
\mathbb{E}(R_T)=O\left(s^{1 / 3} T^{2 / 3} \log^{1/3} (d_1d_2d_3 )\right).
$$

\end{corollary}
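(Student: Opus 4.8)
The plan is to combine Lemma~\ref{lem8} with the structure-independent regret bound of Theorem~\ref{theo2} and then balance the exploration length $T_1$, following exactly the template of the proofs of Corollaries~\ref{coro1} and~\ref{coro2}. First I would record the structure-dependent constants for the entry-wise sparse model: with $R(\Theta)=\|\Theta\|_1$ one has $c_R=1$ and $\phi=s$, and the dual norm is $R^*(\Theta)=\|\Theta\|_\infty=\max_{i,j,k}|\Theta_{ijk}|$. By Lemma~\ref{lem8}, the prescribed choice $\lambda_{T_1}=\frac{c\alpha R k}{\sqrt{T_1}}\sqrt{\log(2d_1d_2d_3/\delta)}$ satisfies $\lambda_{T_1}\ge \alpha R^*\big(\tfrac{1}{T_1}\sum_{t\in[T_1]}\epsilon_t\mathcal{X}_t\big)$ on an event of probability at least $1-\delta$, so on that event Theorem~\ref{theo2} applies verbatim.

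Next I would substitute this $\lambda_{T_1}$, together with $c_R=1$ and $\phi=s$, into the bound $\mathbb{E}(R_T)\le 2k_\mu T_1+12k_\mu T\frac{(1+c_R)\sqrt{\phi}\,\lambda_{T_1}}{(3+c_R)c_l k_u}$. Three simplifications occur: the factor $k_\mu$ in the numerator cancels the $k_u$ ($=k_\mu$) in the denominator; with $c_R=1$ the prefactor $\frac{1+c_R}{3+c_R}$ equals $\tfrac12$; and, crucially, Assumption~\ref{as1} gives $c_l=(d_1d_2d_3)^{-1/2}=k$, so $k/c_l=1$ and no power of the ambient dimension survives. This leaves
\begin{equation*}
\mathbb{E}(R_T)\ \le\ 2k_\mu T_1+6c\alpha R\,T\sqrt{\tfrac{s}{T_1}}\,\sqrt{\log(2d_1d_2d_3/\delta)}.
\end{equation*}

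Finally I would choose $T_1$ to balance the two terms. The right-hand side has the form $aT_1+bT/\sqrt{T_1}$ with $a=2k_\mu$ and $b=6c\alpha R\sqrt{s\log(2d_1d_2d_3/\delta)}$; it is minimized at $T_1\asymp (bT/a)^{2/3}$, where it equals $\Theta\!\big(a^{1/3}(bT)^{2/3}\big)$. Since $a=O(1)$ and $b=O\!\big(\sqrt{s\,\log(d_1d_2d_3)}\big)$, this yields $\mathbb{E}(R_T)=O\!\big(s^{1/3}T^{2/3}\log^{1/3}(d_1d_2d_3)\big)$, which is the claimed bound; for consistency one takes the constant $c$ in the exploration length $T_1$ of Algorithm~\ref{algo1} so that $T_1$ equals this balancing value, which for large $T$ dominates $c\phi w^2(\boldsymbol{\Theta})$ and hence keeps the sample-size hypothesis of Theorem~\ref{theo1} satisfied.

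I do not expect a genuinely hard step here: the argument is the same pipeline as Corollaries~\ref{coro1} and~\ref{coro2}, and all the substantive probabilistic content — a union bound over the $d_1d_2d_3$ coordinates combined with a truncation of the sub-Gaussian noise and a scalar Bernstein/sub-Gaussian tail estimate — has already been done in Lemma~\ref{lem8}. The only points requiring care are the bookkeeping of the absolute constants through Theorem~\ref{theo2} and the observation that the $k/c_l$ factor collapses to $1$ under Assumption~\ref{as1}; this collapse is exactly what removes the dimension from the rate and replaces it with the sparsity level $s$ (up to the logarithmic factor).
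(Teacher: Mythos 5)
Your proposal is correct and follows essentially the same route as the paper: plug the $\lambda_{T_1}$ from Lemma~\ref{lem8} and $\phi=s$, $c_R=1$ into Theorem~\ref{theo2}, cancel $k_\mu$ against $k_u$, balance $T_1\asymp (bT/a)^{2/3}$, and observe that Assumption~\ref{as1} forces $k/c_l=1$ so the ambient dimension drops out of the polynomial rate (the paper writes this as $k^{2/3}c_l^{-2/3}=1$ after balancing). Your explicit treatment of the balancing step and of the consistency with $T_1\ge c\phi w^2(\boldsymbol{\Theta})$ is slightly more careful than the paper's, but the argument is the same.
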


In this case, it can degenerate into the Lasso bandits problem. Therefore, we compare it with the related Lasso bandits results obtained after vectorization. Specifically, whether compared with the bound established by \cite{hao2020high} of $\mathcal{O}(C_{\min}^{-2/3}s^{2/3} T^{2/3})$ or the bound established by \cite{jang2022popart} of $\mathcal{O}(H_{*}^{2/3}s^{2/3} T^{2/3})$, considering that $C_{\min}$ and $H_{*}$ are related to the dimension in certain cases, both of these bounds exhibit polynomial dependence on the dimension. In contrast, the bound in this paper achieves a logarithmic order on the dimension, which provides a significant advantage in high-dimensional settings. Furthermore, it is important to emphasize that the problem setting in this paper is broader than the aforementioned studies, as it encompasses generalized linear models, including their linear case.

\subsection{Proof}
\subsubsection{Proof of Lemma \ref{lem8}}
In the entry-wise sparse tensor bandits problem, the true parameter $\Theta^*$ has only $s$ nonzero elements, with $s$ being much smaller than $d_1 d_2 d_3$. Therefore, using $R(\Theta) = \|\Theta\|_1 = \sum_{i\in [d_1]} \sum_{j \in [d_2]} \sum_{k \in [d_3]} |\Theta_{ijk}|$ as the regularization term is a natural choice. Furthermore, we have $R^*(\Theta) = \|\Theta\|_\infty = \max_{i \in [d_1], j \in [d_2], k \in [d_3]} |\Theta_{ijk}|$. 

\begin{proof}
In this case, the adjustment of parameters needs to satisfy the following inequality:
\begin{align}
&\mathbb{P}\left(\frac{c_R+3}{2c_R}R^*\left(\frac{1}{T_1} \sum_{t\in[T_1]} \epsilon_t \mathcal{X}_t \right) \leq \lambda_{T_1}\right) \\ 
 =& \mathbb{P}\left(\frac{c_R+3}{2c_R} \max_{ijk} \left|\left[\frac{1}{T_1} \sum_{t\in [T_1]}\epsilon_t \mathcal{X}_t\right]_{ijk}\right| \leq \lambda_{T_1}\right) \\
 \geq & 1-\sum_{ijk} \mathbb{P}\left(\left|\left[ \sum_{t\in [T_1]} \epsilon_t \mathcal{X}_t\right]_{ijk}\right| \geq \frac{2c_R\lambda_{T_1} T_1}{ c_R+3}\right)\\
\geq & 1-\delta.
\end{align}
The following proof shows that $\epsilon_t \left[\mathcal{X}_t\right]_{ijk}$ is an independently $k_0$-sub-exponential variable with respect to $t$. Note that $\left\{\operatorname{vec}\left(\mathcal{X}_t \right),t\in [T_1]\right\}$ are independent $k$-sub-Gaussian vectors, then we have $\left\{\left[\mathcal{X}_t\right]_{ijk},t\in [T_1]\right\}$ are independent $k$-sub-Gaussian variables. Also note that each $\epsilon_t$ is $R$-sub-Gaussian, we have $\left\{\epsilon_t \left[\mathcal{X}_t\right]_{ijk},t\in [T_1]\right\}$ are independent $Rk$-sub-exponential random variables. Therefore we can use Bernstein's inequality and get
\begin{align}
&\mathbb{P}\left(\left|\left[ \sum_{t\in [T_1]} \epsilon_t X_t\right]_{ijk}\right| \geq \frac{2c_RT_1\lambda_{T_1}}{ c_R+3}\right) \\
\leq & 2 \exp\left(-c \min \left\{\frac{4c_R^2T_1^2\lambda_{T_1}^2}{ (c_R+3)^2 T_1 k_0^2},\frac{2c_RT_1\lambda_{T_1}}{ (c_R+3)k_0}\right\}\right) ,
\end{align}
where $k_0=Rk$.
If we take $\lambda_{T_1}=\frac{(c_R+3)Rk}{2c c_R \sqrt{T_1}}\sqrt{\log \frac{2d_1d_2d_3}{\delta}}$, then we conclude that
\begin{align}
    &\mathbb{P}\left(\frac{c_R+3}{2c_R}R^*\left(\frac{1}{T} \sum_{t \in [T_1]} \epsilon_t \mathcal{X}_t \right) 
    \leq  \lambda_{T_1}\right)\\
    \geq & 1-\sum_{ijk} \frac{\delta}{d_1d_2d_3} \geq 1-\delta.
\end{align}

\end{proof}

\subsubsection{Proof of Corollary \ref{coro3}}
\begin{proof}
   \begin{align}
    \mathbb{E}(R_T)\leq & 2k_\mu T_1+12k_\mu T \frac{(1+c_R)\sqrt{ \phi} \lambda_{T_1}}{(3+c_R)c_l k_u}\\
    = & 2k_\mu T_1+12k_\mu T \frac{(1+c_R)\sqrt{ \phi} }{(3+c_R)c_l k_u} \frac{Rk(c_R+3)}{2c c_R \sqrt{T_1}} \notag\\ 
    & \sqrt{\log \frac{2d_1d_2d_3}{\delta}}\\
    = & 2k_\mu T_1+12 T \frac{(1+c_R) Rk\sqrt{ \phi} }{ 2c c_R c_l \sqrt{T_1} } \sqrt{\log \frac{2d_1d_2d_3}{\delta}}\\
    = & 2k_\mu T_1+6 T \frac{(1+c_R) Rk\sqrt{ s} }{c c_R c_l \sqrt{T_1} }  \sqrt{\log \frac{2d_1d_2d_3}{\delta}}\\
    =& O\left(k^{\frac{2}{3}} c_l^{-\frac{2}{3}} s^{\frac{1}{3}} T^{\frac{2}{3}} \log^{\frac{1}{3}} \frac{2d_1d_2d_3}{\delta}\right)\\
    =& O\left(s^{\frac{1}{3}} T^{\frac{2}{3}} \log^{\frac{1}{3}} \frac{2d_1d_2d_3}{\delta}\right)\\
    =& \tilde{O}\left( s^{\frac{1}{3}} T^{\frac{2}{3}}\right).
\end{align} 
\end{proof}

\section{Extension to Fiber-Wise Sparsity}
\subsection{Results}
We focus on the fiber-wise sparse tensor bandits problem. In this case, taking mode-1 fibers as an example, i.e., $\Theta^* \in S(\Theta^*)=\{ \cdot \in [d_1] \ | \ \Theta^*_{\cdot j k} \neq 0 \}$, and $|S(\Theta^*)| = s \ll d_1$. Then we use the regularization norm $R(\Theta)=\|\Theta\|_{1,q}= \sum_{j \in [d_2]} 
 \sum_{k \in [d_3]} \left[\sum_{i \in [d_1]}|\Theta_{i jk}|^q\right]^{\frac{1}{q}}$, $q>1$. Then, at this point, $c_R = 1$ and $ \phi = \eta^2\left(d_1, \frac{1}{q}-\frac{1}{2}\right) s $, where $\eta\left(\cdot, m\right)=\max \left\{1, \cdot^m\right\}$. Likewise, we present the following lemma to identify a suitable choice for the parameter $ \lambda_{T_1}$ in this tensor bandit.

\begin{lemma}\label{lem9}
    For any $\delta \in(0,1)$, let $\alpha=\frac{c_R+3}{2c_R}$, use
    \begin{align*}
        \lambda_{T_1}=&\frac{c\alpha Rk(\sqrt{d_1}+\sqrt{\log \frac{4d_2d_3}{\delta}})}{\sqrt{T_1}} \\
        &\sqrt{2 \log (\frac{4T_1d_2 d_3}{ \delta})}\eta\left(d_1, \frac{1}{2}-\frac{1}{q}\right) 
    \end{align*}
in Algorithm \ref{algo1} with $R(\Theta)=\|\Theta\|_{1,q}$, then with probability at least $1-\delta$, we have $\lambda_{T_1} \geq \alpha R^*\left(\frac{1}{T_1} \sum_{t \in [T_1]} \epsilon_t \mathcal{X}_t\right)$.
\end{lemma}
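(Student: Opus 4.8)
\textbf{Proof proposal for Lemma \ref{lem9}.}

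The plan is to follow the same template used in the proofs of Lemma \ref{lem1} and Lemma \ref{lem2}: reduce the event $\{\lambda_{T_1} \geq \alpha R^*(\frac{1}{T_1}\sum_{t} \epsilon_t \mathcal{X}_t)\}$ to a union bound over the $d_2 d_3$ mode-1 fibers, then control the norm of each vector-valued sum via a truncation argument plus a suitable concentration inequality. First I would identify the dual norm: since $R(\Theta) = \|\Theta\|_{1,q} = \sum_{j\in[d_2]}\sum_{k\in[d_3]} \big[\sum_{i\in[d_1]} |\Theta_{ijk}|^q\big]^{1/q}$, its dual is $R^*(\Theta) = \max_{j\in[d_2],k\in[d_3]} \|\Theta_{\cdot jk}\|_{q^*}$ where $1/q + 1/q^* = 1$. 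Then
$$
\mathbb{P}\Big(\alpha R^*\big(\tfrac{1}{T_1}\textstyle\sum_{t} \epsilon_t \mathcal{X}_t\big) \leq \lambda_{T_1}\Big) \geq 1 - \sum_{j,k} \mathbb{P}\Big(\big\|\big[\textstyle\sum_{t} \epsilon_t \mathcal{X}_t\big]_{\cdot jk}\big\|_{q^*} \geq \tfrac{T_1 \lambda_{T_1}}{\alpha}\Big),
$$
so it suffices to bound each fiber-wise tail by $\delta/(d_2 d_3)$.

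Next I would reduce the $\ell_{q^*}$-norm bound on the $d_1$-dimensional random vector $\boldsymbol{v}_{jk} := [\sum_t \epsilon_t \mathcal{X}_t]_{\cdot jk}$ to an $\ell_2$-norm bound using Lemma \ref{ref.lem6}: for $q>1$ we have $q^* < \infty$, and $\|\boldsymbol{v}_{jk}\|_{q^*} \leq \eta(d_1, \tfrac{1}{q^*} - \tfrac12) \|\boldsymbol{v}_{jk}\|_2 = \eta(d_1, \tfrac12 - \tfrac1q)\|\boldsymbol{v}_{jk}\|_2$ (handling both $q^* \leq 2$ and $q^* \geq 2$ via the two cases of Lemma \ref{ref.lem6}), which explains the $\eta(d_1,\tfrac12-\tfrac1q)$ factor in the stated $\lambda_{T_1}$. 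Then I would control $\|\boldsymbol{v}_{jk}\|_2 = \sup_{\|\boldsymbol{u}\|_2 \leq 1}\langle \boldsymbol{u}, \boldsymbol{v}_{jk}\rangle$ by first conditioning on the truncation event $\mathcal{E} := \{\max_{t\in[T_1]} |\epsilon_t| < v\}$ with $v = R\sqrt{2\log(4T_1 d_2 d_3/\delta)}$, so $\mathbb{P}(\mathcal{E}^c) \leq \delta/(2 d_2 d_3)$ by a sub-Gaussian union bound; under $\mathcal{E}$, the process $\boldsymbol{u} \mapsto \langle \boldsymbol{u}, \boldsymbol{v}_{jk}\rangle$ over the unit ball has bounded sub-Gaussian increments, and I would apply a vector Bernstein / Gaussian-width tail bound (in the spirit of Lemma \ref{ref.lem3}, with $w(\text{unit ball in }\mathbb{R}^{d_1}) \asymp \sqrt{d_1}$ and radius $1$) to get a tail of the form $2\exp(-u^2)$ at level $\asymp v(\sqrt{d_1} + u)/\sqrt{T_1}$ after normalizing by $T_1$. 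Choosing $u = \sqrt{\log(4d_2 d_3/\delta)}$ then makes this tail at most $\delta/(2 d_2 d_3)$, and combining with $\mathbb{P}(\mathcal{E}^c)$ gives the per-fiber bound $\delta/(d_2 d_3)$; tracing the constants yields exactly the stated $\lambda_{T_1} = \frac{c\alpha Rk(\sqrt{d_1} + \sqrt{\log(4d_2 d_3/\delta)})}{\sqrt{T_1}}\sqrt{2\log(4T_1 d_2 d_3/\delta)}\,\eta(d_1,\tfrac12-\tfrac1q)$, where the extra factor $k$ comes from the sub-Gaussian parameter of $\operatorname{vec}(\mathcal{X}_t)$ in Assumption \ref{as1}.

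The main obstacle I anticipate is the vector-valued concentration step: unlike Lemmas \ref{lem1} and \ref{lem2} where the matrix Bernstein inequality directly produces an operator-norm tail, here I need a clean tail bound for the $\ell_2$-norm of a sum of independent truncated random vectors $\epsilon_t [\mathcal{X}_t]_{\cdot jk}$ with the "variance $\asymp v^2$, range $\asymp v$" profile, and then to correctly pass through the $\ell_{q^*}\leftrightarrow\ell_2$ comparison with dimension-dependent constants. Care is needed in (i) verifying the sub-Gaussian-increment hypothesis of Lemma \ref{ref.lem3} for the normalized process under the truncation event, (ii) checking that the $\eta(d_1,\tfrac12-\tfrac1q)$ bookkeeping matches whether $q^* \le 2$ or $q^* \ge 2$, and (iii) confirming that $\phi = \eta^2(d_1,\tfrac1q-\tfrac12)s$ is consistent with the compatibility-constant computation for $\|\cdot\|_{1,q}$ restricted to the $s$-fiber-sparse subspace — but none of these should present a genuine difficulty, only routine constant-chasing once the concentration tool is fixed. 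The remaining verification that $\lambda_{T_1} \geq \alpha R^*(\cdot)$ then holds with probability $\geq 1-\delta$ follows by summing the $d_2 d_3$ per-fiber failure probabilities.
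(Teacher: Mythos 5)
Your proposal is correct and follows essentially the same route as the paper: identify the dual norm $\max_{j,k}\|\cdot\|_{p}$ with $1/p+1/q=1$, union-bound over the $d_2d_3$ fibers, truncate the noise via the event $\mathcal{E}$ with $v=R\sqrt{2\log(4T_1d_2d_3/\delta)}$, and apply the sub-Gaussian-increment tail bound (Lemma~\ref{ref.lem3}) with $u=\sqrt{\log(4d_2d_3/\delta)}$. The only cosmetic difference is that you convert $\ell_{p}\to\ell_2$ first and run the process on the Euclidean ball, whereas the paper keeps the process indexed by the $\ell_q$-ball and bounds its Gaussian width $\max\{d_1^{1/2},d_1^{1-1/q}\}$ and radius $\max\{1,d_1^{1/2-1/q}\}$ directly via Lemma~\ref{ref.lem6}; since $\max\{d_1^{1/2},d_1^{1-1/q}\}=d_1^{1/2}\,\eta(d_1,\tfrac12-\tfrac1q)$, both bookkeepings yield the identical $\lambda_{T_1}$.
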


Based on the above lemma, we can similarly derive the regret bounds for this low-dimensional structure.

\begin{corollary}\label{coro4}
Under Assumption \ref{as1}-\ref{as3}, let \begin{align*}
        \lambda_{T_1}=&\frac{c\alpha Rk(\sqrt{d_1}+\sqrt{\log \frac{4d_2d_3}{\delta}})}{\sqrt{T_1}} \\
        &\sqrt{2 \log (\frac{4T_1d_2 d_3}{ \delta})}\eta\left(d_1, \frac{1}{2}-\frac{1}{q}\right) ,
    \end{align*} then the expected cumulative regret of the Algorithm \ref{algo1} in the tensor bandits problem with fiber-wise sparsity is upper bounded by
\begin{align*}
\mathbb{E}(R_T)= &\tilde{O}\left(  \eta^{\frac{4}{3}}\left(d_1, \frac{1}{q}-\frac{1}{2}\right) \max \left\{d_1,\log \frac{4d_2d_3}{\delta}\right\} ^{\frac{1}{3}} \right. \notag \\
    & \left. s^{\frac{1}{3}} T^{\frac{2}{3}}\right).
\end{align*}

\end{corollary}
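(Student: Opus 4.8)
The plan is to derive the bound as a direct specialization of Theorem \ref{theo2} to the fiber-sparse regularizer $R(\Theta)=\|\Theta\|_{1,q}$, using the compatibility constant $\phi=\eta^2(d_1,\tfrac1q-\tfrac12)s$ and $c_R=1$ recorded in the text, together with the choice of $\lambda_{T_1}$ from Lemma \ref{lem9}, and then optimizing over the exploration length $T_1$. The only probabilistic hypothesis of Theorem \ref{theo2} is $\lambda_{T_1}\ge\alpha R^*\big(\tfrac1{T_1}\sum_{t\in[T_1]}\epsilon_t\mathcal{X}_t\big)$, and this is exactly the conclusion of Lemma \ref{lem9} for the stated $\lambda_{T_1}$, holding with probability at least $1-\delta$; so I would simply invoke Lemma \ref{lem9}. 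For intuition on where its ingredients come from: the dual norm here is $R^*(\Theta)=\max_{j\in[d_2],k\in[d_3]}\|\Theta_{\cdot jk}\|_{q'}$ with $q'=q/(q-1)$, so Lemma \ref{lem9} is a union bound over the $d_2d_3$ fibers of a concentration estimate for $\|\tfrac1{T_1}\sum_t\epsilon_t(\mathcal{X}_t)_{\cdot jk}\|_{q'}$; the passage $\|\cdot\|_{q'}\le\eta(d_1,\tfrac12-\tfrac1q)\|\cdot\|_2$ from Lemma \ref{ref.lem6} explains the $\eta(d_1,\tfrac12-\tfrac1q)$ factor in $\lambda_{T_1}$, and sub-exponential concentration of the $\ell_2$-norm of the $d_1$-dimensional vector $\tfrac1{T_1}\sum_t\epsilon_t(\mathcal{X}_t)_{\cdot jk}$ explains the $\sqrt{d_1}+\sqrt{\log(4d_2d_3/\delta)}$ term.

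Next I would substitute into the regret bound of Theorem \ref{theo2}. With $c_R=1$ one has $\tfrac{1+c_R}{3+c_R}=\tfrac12$, $\alpha=2$, and $\sqrt\phi=\eta(d_1,\tfrac1q-\tfrac12)\sqrt s$, so
\begin{align*}
\mathbb{E}(R_T)\;\le\;2k_\mu T_1+\frac{6T}{c_l}\,\eta\!\Big(d_1,\tfrac1q-\tfrac12\Big)\sqrt s\,\lambda_{T_1},
\end{align*}
and inserting the expression for $\lambda_{T_1}$ yields a second term of order
\begin{align*}
\frac{T}{\sqrt{T_1}}\,\eta\!\Big(d_1,\tfrac1q-\tfrac12\Big)\eta\!\Big(d_1,\tfrac12-\tfrac1q\Big)\sqrt s\,\Big(\sqrt{d_1}+\sqrt{\log\tfrac{4d_2d_3}{\delta}}\Big)\sqrt{\log\tfrac{4T_1d_2d_3}{\delta}}
\end{align*}
up to the absolute constant and the factors $R,k,c_l$. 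I would then collect the dimension-dependent factors: combine the two $\eta$-terms with the $\sqrt{d_1}$, rewrite the $d_1$-dependence through $\eta(d_1,\tfrac1q-\tfrac12)$ and $\max\{d_1,\log(4d_2d_3/\delta)\}$ (using the AM--QM inequality $\sqrt{d_1}+\sqrt{\log(4d_2d_3/\delta)}\le2\sqrt{\max\{d_1,\log(4d_2d_3/\delta)\}}$), and absorb the remaining $\sqrt{\log(4T_1d_2d_3/\delta)}$ together with the constants $R,k,c_l,c,k_\mu$ into $\tilde{O}(\cdot)$. This leaves a bound of the shape $\mathbb{E}(R_T)\le 2k_\mu T_1+\tilde{O}(C\,T/\sqrt{T_1})$ with $C\asymp\eta^2(d_1,\tfrac1q-\tfrac12)\,\sqrt s\,\sqrt{\max\{d_1,\log(4d_2d_3/\delta)\}}$ (suppressing polylog factors).

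Finally I would balance the two terms over $T_1$: the exploration cost $2k_\mu T_1$ and the exploitation cost of order $C\,T/\sqrt{T_1}$ are equalized at $T_1\asymp(CT)^{2/3}$ (taken at least as large as $c\phi w^2(\boldsymbol{\Theta})$, as Theorem \ref{theo2} requires), which gives $\mathbb{E}(R_T)=\tilde{O}\big((CT)^{2/3}\big)=\tilde{O}\big(C^{2/3}T^{2/3}\big)$; unwinding $C^{2/3}$ produces the claimed bound $\tilde{O}\big(\eta^{4/3}(d_1,\tfrac1q-\tfrac12)\max\{d_1,\log(4d_2d_3/\delta)\}^{1/3}s^{1/3}T^{2/3}\big)$. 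The corollary itself is essentially bookkeeping once Theorem \ref{theo2} and Lemma \ref{lem9} are in hand; the only mildly delicate points will be keeping the exponent of $d_1$ consistent across the two opposite-signed $\eta$-factors and tracking whether $d_1$ or $\log(4d_2d_3/\delta)$ dominates inside $\lambda_{T_1}$. The genuinely hard analytic content sits in Lemma \ref{lem9} (and, behind it, Theorem \ref{theo1}): the high-probability tail bound on the dual-norm process $R^*\big(\tfrac1{T_1}\sum_t\epsilon_t\mathcal{X}_t\big)$, obtained via the sub-Gaussian/sub-exponential concentration inequalities and the Gaussian-width / generic-chaining argument underlying the general framework.
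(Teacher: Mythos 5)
Your proposal is correct and follows essentially the same route as the paper: plug the $\lambda_{T_1}$ of Lemma \ref{lem9} together with $c_R=1$ and $\phi=\eta^2\left(d_1,\tfrac1q-\tfrac12\right)s$ into the bound of Theorem \ref{theo2}, then balance the exploration term $2k_\mu T_1$ against the $T/\sqrt{T_1}$ term (a step the paper leaves implicit but you make explicit) to obtain the $T^{2/3}$ rate. The one delicate point you flag — that the product $\eta\left(d_1,\tfrac1q-\tfrac12\right)\eta\left(d_1,\tfrac12-\tfrac1q\right)=d_1^{|1/q-1/2|}$ is being written as $\eta^2\left(d_1,\tfrac1q-\tfrac12\right)$ — is handled with exactly the same looseness in the paper's own computation, so your argument matches theirs.
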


The above scenario can degenerate into the group-sparse matrix bandits problem, which characterizes the low-dimensional structure of group sparsity. In particular, when $q=2$, $R(\Theta)$ corresponds to group Lasso regularization. In this case, compared to the latest group Lasso bandits results applied after matricization, the regret bound of \cite{li2022simple} is $\mathcal{O}\left(\max\{d_1, \log(d_2d_3)\}^{\frac{1}{2}} s^{\frac{1}{3}} T^{\frac{2}{3}} \right)$, while the bound in this paper is $\tilde{O}\left( \max \left\{d_1,\log (d_2d_3)\right\}^{\frac{1}{3}}  s^{\frac{1}{3}} T^{\frac{2}{3}}\right)$. It is important to emphasize that the theoretical results in this paper have a smaller dependence on the dimension, providing a greater advantage in high-dimensional settings.

\subsection{Proof}
\subsubsection{Proof of Lemma \ref{lem9}}
When considering a specific low-dimensional structure, specifically fiber-wise sparsity, we take the mode-1 fibers as an example. In this case, we have $\Theta^* \in S(\Theta^*) = \{ \cdot \in [d_1] \ | \ \Theta^*_{\cdot j k} \neq \boldsymbol{0} \}$, and $|S(\Theta^*)| = s \ll d_1$. We then use the regularization norm $R(\Theta) = \|\Theta\|_{1,q} = \sum_{j \in [d_2]} \sum_{k \in [d_3]} \left[\sum_{i \in [d_1]} |\Theta_{i jk}|^q\right]^{\frac{1}{q}}$, where $q > 1$. Correspondingly, we have $R^*(\Theta) = \|\Theta\|_{\infty,p} = \max_{j \in [d_2],k \in [d_3]}  \left[\sum_{i \in [d_1]} |\Theta_{i jk}|^p\right]^{\frac{1}{p}}$, where $\frac{1}{q} + \frac{1}{p} = 1$. 

\begin{proof}
    In this setup, the requirement for the adjusting parameters is that the following inequality must be satisfied:
    \begin{align}
        &\mathbb{P}\left(\frac{c_R+3}{2c_R}R^*\left(\frac{1}{T_1} \sum_{t\in [T_1]} \epsilon_t \mathcal{X}_t \right) \leq \lambda_{T_1}\right) \\ =&\mathbb{P}\left(\frac{c_R+3}{2c_R} \max_{jk} \left\|\left[\frac{1}{T_1} \sum_{t\in [T_1]}\epsilon_t \mathcal{X}_t\right]_{\cdot jk}\right\|_{p} \leq \lambda_{T_1}\right) \\
\geq& 1-\sum_{jk} \mathbb{P}\left(\left\|\left[ \sum_{t\in [T_1]} \epsilon_t \mathcal{X}_t\right]_{\cdot jk}\right\|_{p} \geq \frac{2c_R\lambda_{T_1} T_1}{ c_R+3}\right)\\
\geq & 1-\delta.
\end{align}
We claim that it suffices to prove the inequality $$\mathbb{P}\left(\left\|\left[ \sum_{t\in [T_1]} \epsilon_t \mathcal{X}_t\right]_{\cdot jk}\right\|_{p} \geq \frac{2c_R\lambda_{T_1} T_1}{ c_R+3}\right) \leq \frac{\delta}{d_2 d_3}.$$
Here we define the event $\mathcal{E}:=\left\{\max _{t\in [T_1]}\left|\epsilon_t\right|<v\right\}$, and by the definition of sub-Gaussian random variables, if we take $v=R \sqrt{2 \log (4T_1 d_2d_3/ \delta)}$, then $\mathbb{P}\left(\mathcal{E}^c\right)=\mathbb{P}\left(\max _{t\in [T_1]}\left|\epsilon_t\right|>v\right) \leq \sum_{t\in [T_1]} \mathbb{P}\left(\left|\epsilon_t\right|>v\right) \leq \delta / 2 d_2d_3$. Under the event $\mathcal{E}$, we can get
\begin{align}
    &\mathbb{P}\left(\left\|\left[ \sum_{t\in [T_1]} \epsilon_t \mathcal{X}_t\right]_{\cdot j k}\right\|_{p} \geq \frac{2c_R\lambda_{T_1}  T_1}{ c_R+3}, \mathcal{E} \right) \\
    \leq & \mathbb{P}\left(\left\|\left[ \sum_{t\in [T_1]} \mathcal{X}_t\right]_{\cdot j k}\right\|_{p} \geq \frac{2c_R\lambda_{T_1}  T_1}{ v(c_R+3)} \right).
\end{align}
Let $\left\|\sum_{t\in [T_1]} \left[\mathcal{X}_t\right]_{\cdot j k}\right\|_{p}:=\sup_{\|a\|_{q}  \leq 1} \left\langle \sum_{t\in [T_1]} \left[\mathcal{X}_t\right]_{\cdot j k},a  \right\rangle := \sup_{\|a\|_{q} \leq 1} X_a $, then $X_a$ is a random process defined on the subset $\mathbb{A}=\{a\in \mathbb{R}^{d_1} |\|a\|_{q} \leq 1\}$, and
\begin{align}
    &\mathbb{P}\left(\left\|\left[ \sum_{t\in [T_1]} \mathcal{X}_t\right]_{\cdot j k}\right\|_{p} \geq \frac{2c_R\lambda_{T_1}  T_1}{ v(c_R+3)} \right)\\
    =&\mathbb{P}\left(\sup_{\|a\|_{q} \leq 1} X_a \geq \frac{2c_R\lambda_{T_1}  T_1}{ v(c_R+3)} \right)\\
    =&\mathbb{P}\left(\sup_{\|a\|_{q} \leq 1} |X_a| \geq \frac{2c_R\lambda_{T_1}  T_1}{ v(c_R+3)} \right).
\end{align}
The following proves that $\sum_{t\in [T_1]} \left[\mathcal{X}_t\right]_{\cdot j k}$ is a sub-Gaussian vector. Note that $\operatorname{vec}\left(\mathcal{X}_t\right)$ is the sub-Gaussian vector, then $\left[\mathcal{X}_t\right]_{ijk}$ is the sub-Gaussian variable with sub-Gaussian norm $k$. According to Lemma \ref{ref.lem5}, we have $\sum_{t\in [T_1]}\left[\mathcal{X}_t\right]_{ijk}$ is the sub-Gaussian variable with sub-Gaussian norm $k\sqrt{T_1}$. For any fixed $z \in \mathbb{S}^{d_1-1 }$, according to the definition of the sub-Gaussian vector, we have $\langle \sum_{t\in [T_1]} \left[\mathcal{X}_t\right]_{\cdot j k}, z \rangle=\sum_{i \in [d_1]} \sum_{t\in [T_1]} \left[\mathcal{X}_t\right]_{ij k} z_{i}$ is the sub-Gaussian variable with sub-Gaussian norm $ck\sqrt{T_1}$. Thus,
\begin{align}
    &\|X_{a_1}-X_{a_2}\|_{\psi_2} \\=&\|\langle \sum_{t\in [T_1]} \left[\mathcal{X}_t\right]_{\cdot j k},a_1  \rangle-\langle \sum_{t\in [T_1]} \left[\mathcal{X}_t\right]_{\cdot j k},a_2  \rangle\|_{\psi_2} \\
    =&\|\langle \sum_{t\in [T_1]} \left[\mathcal{X}_t\right]_{\cdot j k},a_1-a_2 \rangle\|_{\psi_2}\\
    \leq & \| \sum_{t\in [T_1]} \left[\mathcal{X}_t\right]_{\cdot j k}\|_{\psi_2} \|a_1-a_2\|_2= c \sqrt{T}k \|a_1-a_2\|_2.
\end{align}
Next, we will calculate $\operatorname{rad}(\mathbb{A})$ and $w(\mathbb{A})$ separately. Let $g \sim \mathcal{N}(0,I)$, according to Lemma \ref{ref.lem6}, we have 
\begin{align}
    &w(\mathbb{A})=\mathbb{E}\left( \sup_{a \in \mathbb{A}} \langle a,g\rangle\right)=\mathbb{E} \|g\|_p\\
    \leq & \max\{1,d_1^{1/p-1/2}\} \mathbb{E}  \|g\|_2 \\
    \leq & \max\{1,d_1^{1/p-1/2}\} \sqrt{\mathbb{E} \|g\|_2^2}\\
    =&\max\{1,d_1^{1/p-1/2}\}\sqrt{d_1 }\\
    =&\max\{d_1^{1/2},d_1^{1/p}\}=\max\{d_1^{1/2},d_1^{1-1/q}\},\\
   &\operatorname{rad}(\mathbb{A})=\sup_{a \in \mathbb{A}} \| a\|_2\leq  \max\{1,d_1^{1/2-1/q}\} \sup_{a \in \mathbb{A}} \| a\|_q\\
   \leq &\max\{1,d_1^{1/2-1/q}\}. 
\end{align}
Let $u=\sqrt{\log \frac{4d_2d_3}{\delta}}$, according to Lemma \ref{ref.lem3}, we have
\begin{align}
    \mathbb{P} \left(\sup _{a \in \mathbb{A}}\left|X_a\right| \geq C k \sqrt{T}(\max\{d_1^{1/2},d_1^{1-1/q}\} \right. \notag \\ \left. +u \max\{1,d_1^{1/2-1/q}\}) \right) \leq 2 \exp \left(-u^2\right).
\end{align}
Let $\alpha=\frac{c_R+3}{2c_R}$, $\lambda_{T_1}=\frac{c\alpha Rk(\sqrt{d_1}+\sqrt{\log \frac{4d_2d_3}{\delta}})}{\sqrt{T_1}} \sqrt{2 \log (\frac{4T_1d_2 d_3}{ \delta})}\max\{1,d_1^{1/2-1/q}\}$, then
\begin{align}
    &\mathbb{P}\left(\left\|\left[ \sum_{t\in [T_1]} \epsilon_t \mathcal{X}_t\right]_{\cdot j k}\right\|_p \geq \frac{2c_R\lambda_{T_1}  T_1}{ c_R+3} \right)\\
    \leq & \mathbb{P}\left(\left\|\left[ \sum_{t\in [T_1]} \epsilon_t \mathcal{X}_t\right]_{\cdot j k}\right\|_p \geq \frac{2c_R\lambda_{T_1}  T_1}{ c_R+3}, \mathcal{E} \right)  +\mathbb{P}\left(\mathcal{E}^c\right)\\
    \leq &\frac{\delta}{d_2d_3}.
\end{align}

\end{proof}

\subsubsection{Proof of Corollary \ref{coro4}}
\begin{proof}
   \begin{align}
    \mathbb{E}(R_T)\leq & 2k_\mu T_1+12k_\mu T \frac{(1+c_R)\sqrt{ \phi} \lambda_{T_1}}{(3+c_R)c_l k_u}\\
    = & 2k_\mu T_1+12k_\mu T \frac{(1+c_R)\sqrt{ \phi} }{(3+c_R)c_l k_u} \max\{1,d_1^{1/2-1/q}\} \notag \\ 
    &\frac{c\alpha Rk(\sqrt{d_1}+\sqrt{\log \frac{4d_2d_3}{\delta}})}{\sqrt{T_1}}  \sqrt{2 \log (\frac{4T_1d_2 d_3}{ \delta})}\\
    = & 2k_\mu T_1+12 T \frac{(1+c_R)\sqrt{ \phi} }{2c_R c_l } \max\{1,d_1^{1/2-1/q}\}\notag \\ 
    &\frac{c  Rk(\sqrt{d_1}+\sqrt{\log \frac{4d_2d_3}{\delta}})}{\sqrt{T_1}}  \sqrt{2 \log (\frac{4T_1d_2 d_3}{ \delta})}\\
    = & 2k_\mu T_1+6 T \frac{(1+c_R)c Rk\sqrt{ s} }{ c_R c_l \sqrt{T_1} } \eta^2 \left(d_1, \frac{1}{q}-\frac{1}{2}\right)\notag \\ 
    &\left(\sqrt{d_1}+\sqrt{\log \frac{4d_2d_3}{\delta}}\right)  \sqrt{2 \log (\frac{4 T_1 d_2 d_3}{ \delta})}\\
    =& \tilde{O}\left(\eta^{\frac{4}{3}}\left(d_1, \frac{1}{q}-\frac{1}{2}\right) \max \left\{d_1,\log \frac{4d_2d_3}{\delta}\right\}^{\frac{1}{3}} \right. \notag \\
    &\left. k^{\frac{2}{3}} c_l^{-\frac{2}{3}}  s^{\frac{1}{3}} T^{\frac{2}{3}}\right)\\
    =& \tilde{O}\left( \eta^{\frac{4}{3}}\left(d_1, \frac{1}{q}-\frac{1}{2}\right) \max \left\{d_1,\log \frac{4d_2d_3}{\delta}\right\} ^{\frac{1}{3}} \right. \notag \\
    & \left. s^{\frac{1}{3}} T^{\frac{2}{3}}\right).
\end{align} 
\end{proof}

\section{Additional Experiments}

\subsection{Tensor-Wise Low-Rankness}
\begin{figure}[h]
    \centering
    \subfigure[Cumulative Regret $R_T$]{
        \label{Fig.sub.5}
        \includegraphics[width=0.20\textwidth]{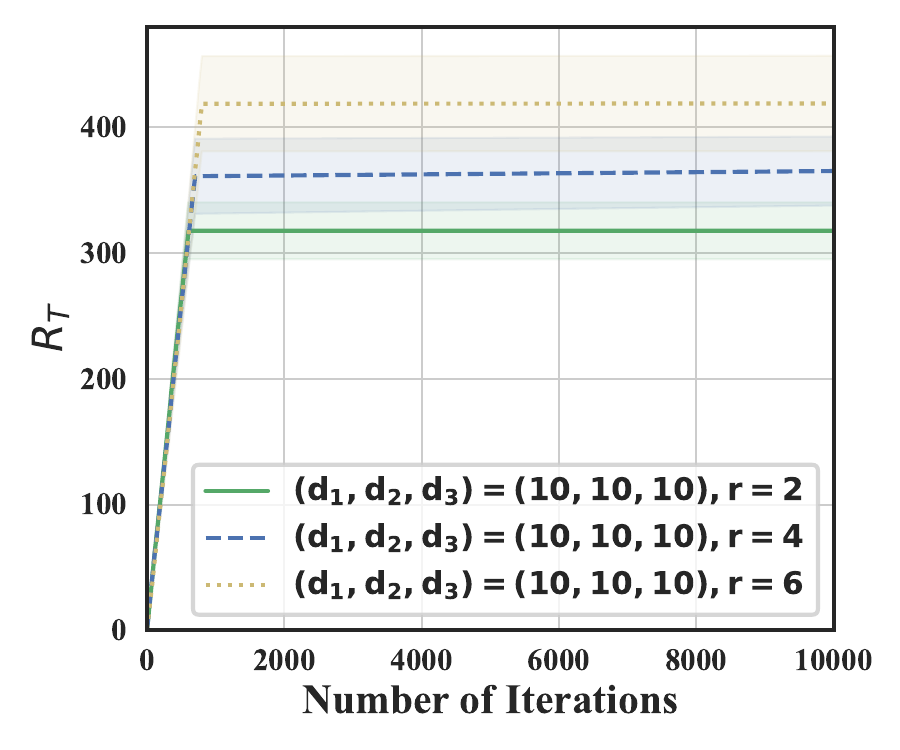}}
        \hfil
    \subfigure[The ratio $R_T/B_T$]{
        \label{Fig.sub.6}
        \includegraphics[width=0.20\textwidth]{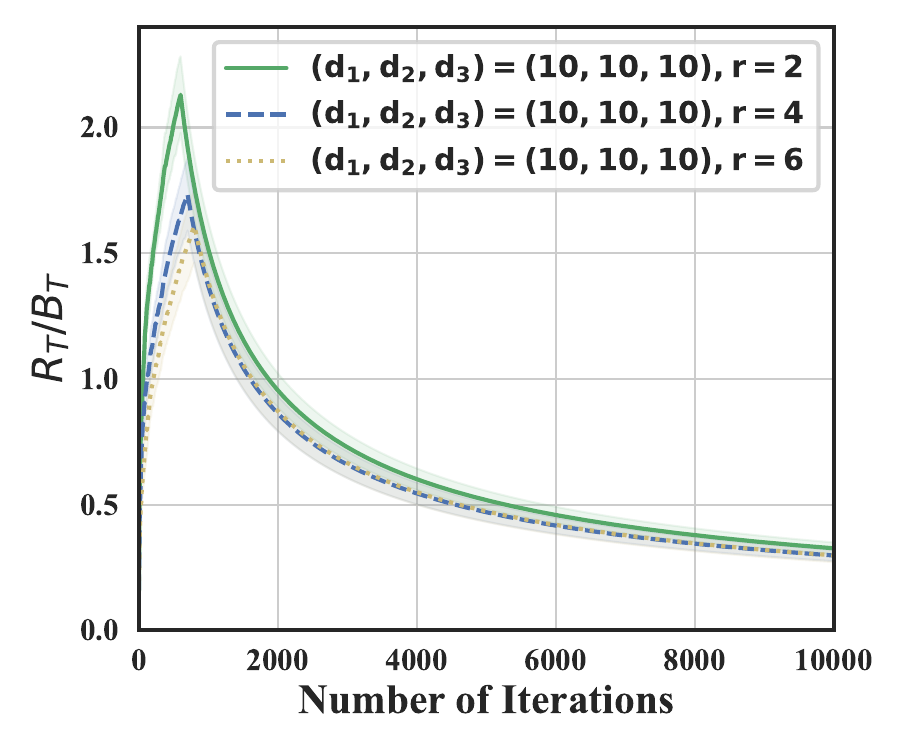}}
    \caption{Experimental results of tensor bandits under low multi-linear rankness with different rank settings. (a) displays the curve of cumulative regret over time, while (b) shows the variation of the ratio of cumulative regret to the theoretical bound $B_T$ over time.}
    \label{fig:figure3}
\end{figure}
In the case of tensors with low multi-linear rankness, we first generate the true parameters $\Theta^* \in \mathbb{R}^{d_1 \times d_2 \times d_3}$, with its elements drawn from a standard normal distribution. We then project it into a low multi-linear rank space. Next, we continue to validate the effectiveness of the algorithm from the two previously mentioned perspectives. First, we check whether the algorithm can achieve a sublinear cumulative regret upper bound. In three different different ranks (as shown in Figure \ref{Fig.sub.5}), the upper bound of cumulative regret exhibits stability as the number of decision rounds increases, demonstrating ideal sublinear growth. Additionally, the curves in the figures indicate that cumulative regret also rises with increasing rank, which is consistent with the theoretical results. Second, we verify whether the algorithm can achieve the theoretical cumulative regret upper bound, specifically by examining the ratio of cumulative regret to the theoretical bound. As shown in Figure \ref{Fig.sub.6}, this ratio gradually stabilizes at a constant less than 1 as the number of rounds increases, indicating that our algorithm can achieve the theoretical cumulative regret upper bound.
\subsection{Slice-Wise Sparsity and Low-Rankness}
\begin{figure}[h]
    \centering
    \subfigure[Cumulative Regret $R_T$]{
        \label{Fig.sub.7}
        \includegraphics[width=0.20\textwidth]{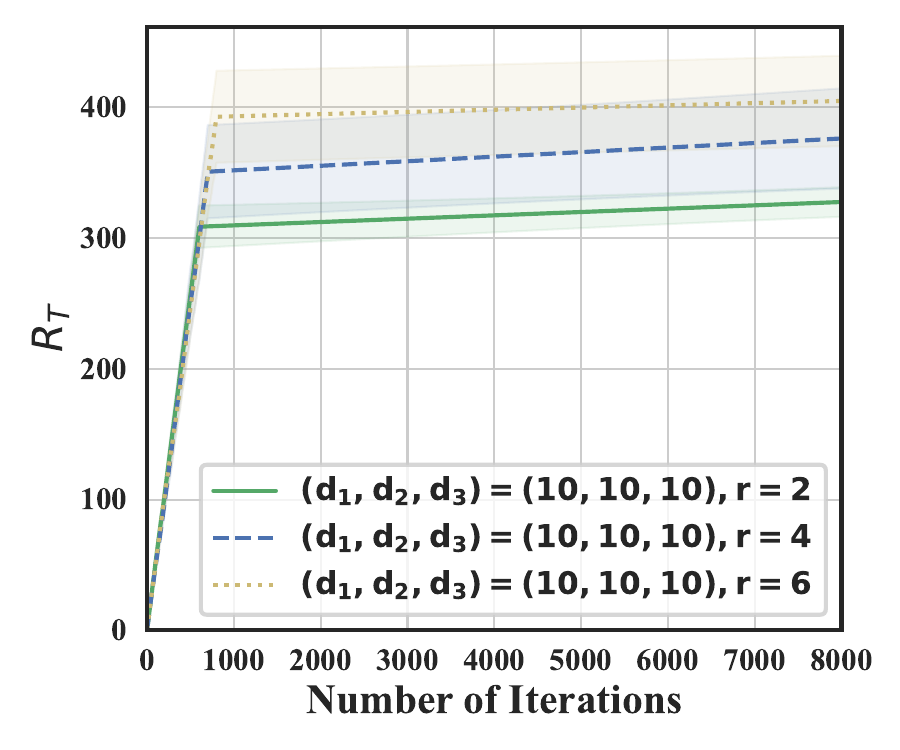}} 
        \hfil
    \subfigure[The ratio $R_T/B_T$]{
        \label{Fig.sub.8}
        \includegraphics[width=0.20\textwidth]{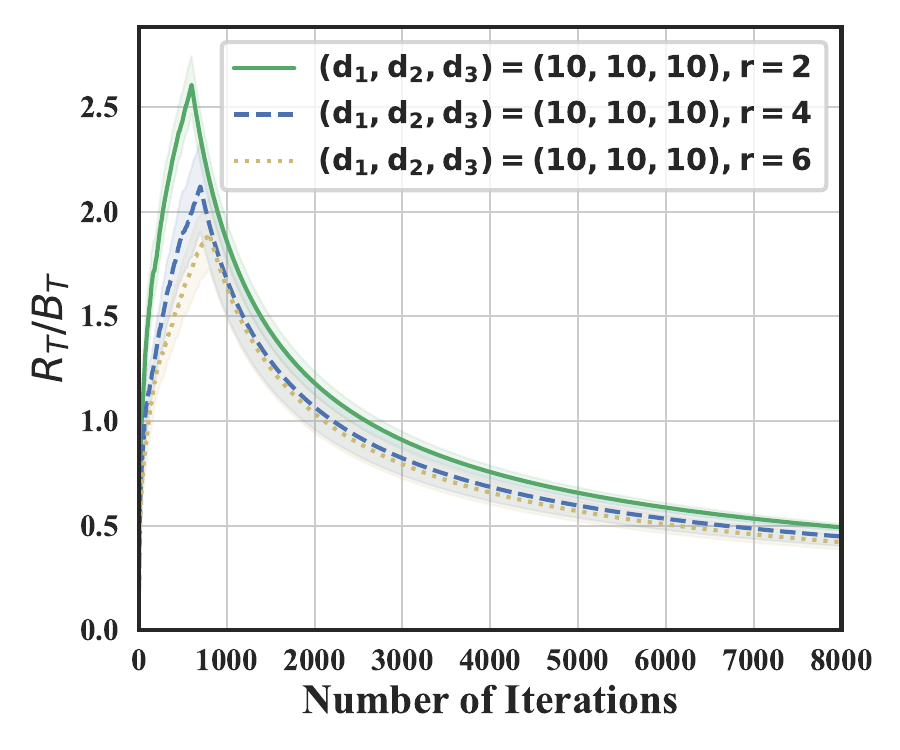}}
    \caption{Experimental results of tensor bandits under slice sparse and low rank structure with different rank settings. (a) displays the curve of cumulative regret over time, while (b) shows the variation of the ratio of cumulative regret to the theoretical bound $B_T$ over time.}
    \label{fig:figure4}
\end{figure}
For low-dimensional structures considering slice sparsity and low-rankness, the relevant experimental results at different dimensions have been presented in the experimental section of the main text. This section will focus on the cumulative regret bounds at different rankness. We analyze the effectiveness of the algorithm from two perspectives. First, at three different levels of low-rankness, the upper bound of cumulative regret increases with the number of random explorations and stabilizes as the number of exploitation rounds grows, ultimately demonstrating ideal sublinear growth. Additionally, theoretical results indicate that cumulative regret also rises with increasing levels of low dimensionality, which is validated in the relevant Figure \ref{Fig.sub.7}. Second, as shown in Figure \ref{Fig.sub.8}, the ratio of cumulative regret to the theoretical bound stabilizes around a constant less than 1 as the number of rounds increases, indicating that our algorithm is consistent with the theoretical bound.

\subsection{Comparative Experiments under Lasso Bandits}
\begin{figure}[h]
    \centering
    \subfigure[$\rho=0.3,d=100,s=5$]{
        \label{Fig.sub.9}
        \includegraphics[width=0.20\textwidth]{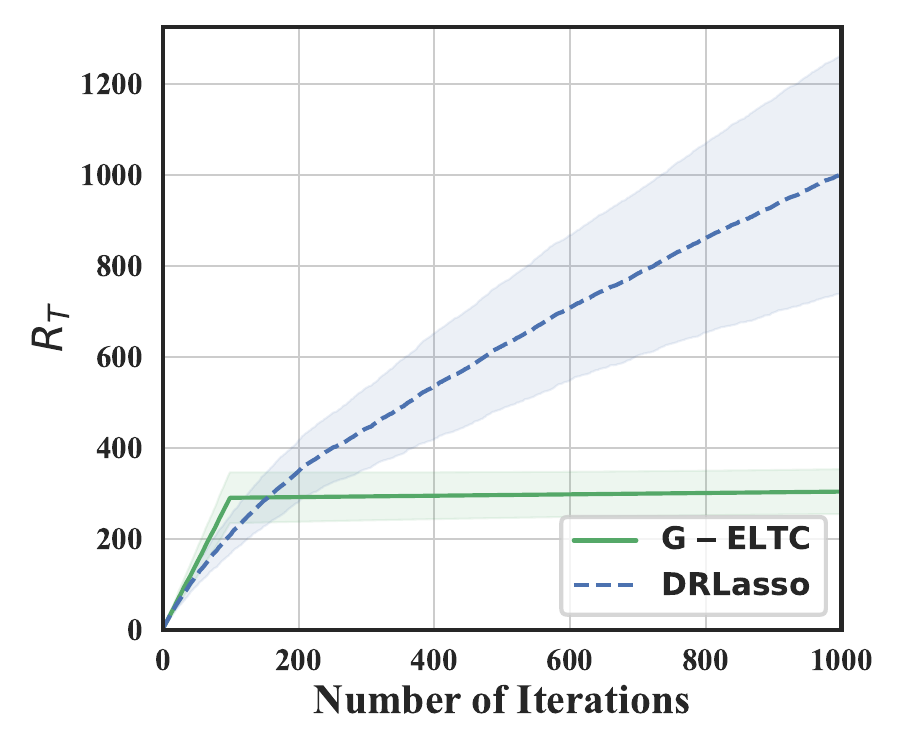}} 
        \hfil
    \subfigure[$\rho=0.7,d=100,s=5$]{
        \label{Fig.sub.10}
        \includegraphics[width=0.20\textwidth]{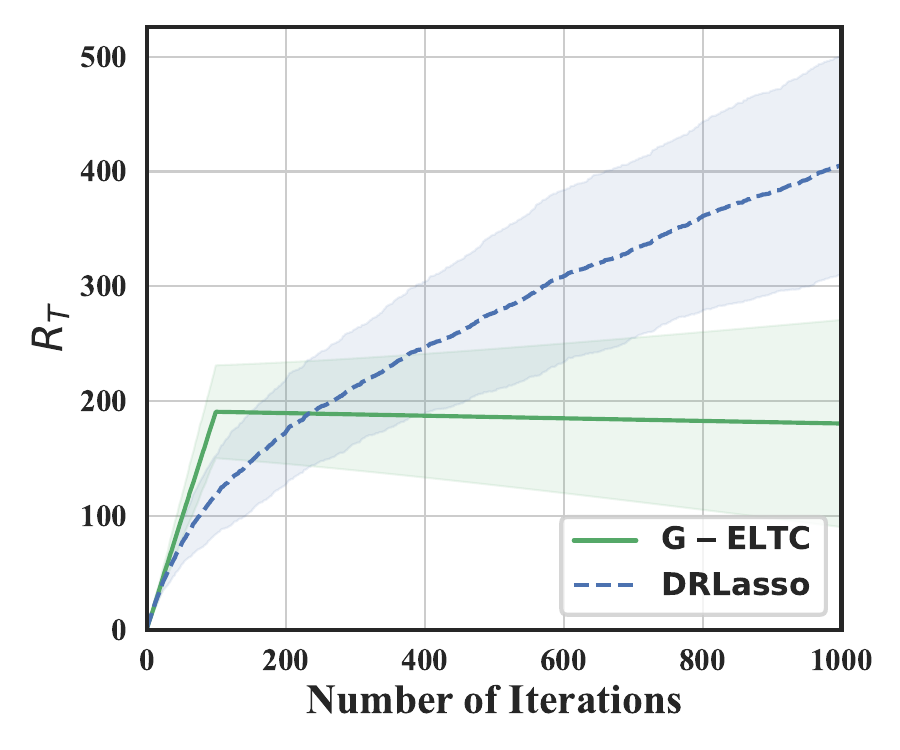}}
        \hfil
        \subfigure[$\rho=0.7,d=200,s=5$]{
        \label{Fig.sub.11}
        \includegraphics[width=0.20\textwidth]{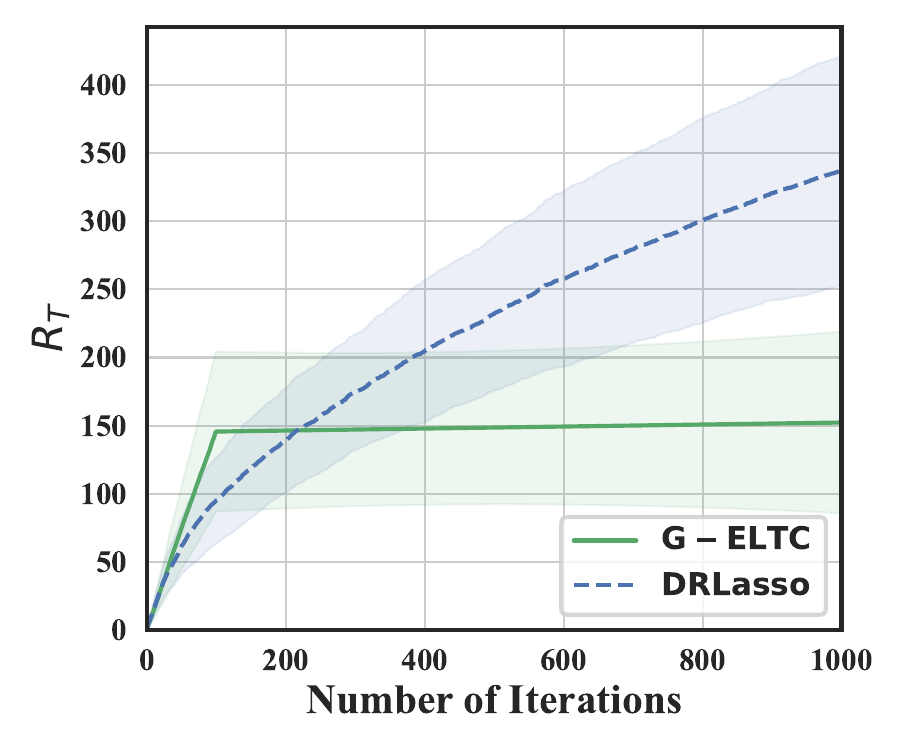}}
        \hfil
        \subfigure[$\rho=0.7,d=100,s=10$]{
        \label{Fig.sub.12}
        \includegraphics[width=0.20\textwidth]{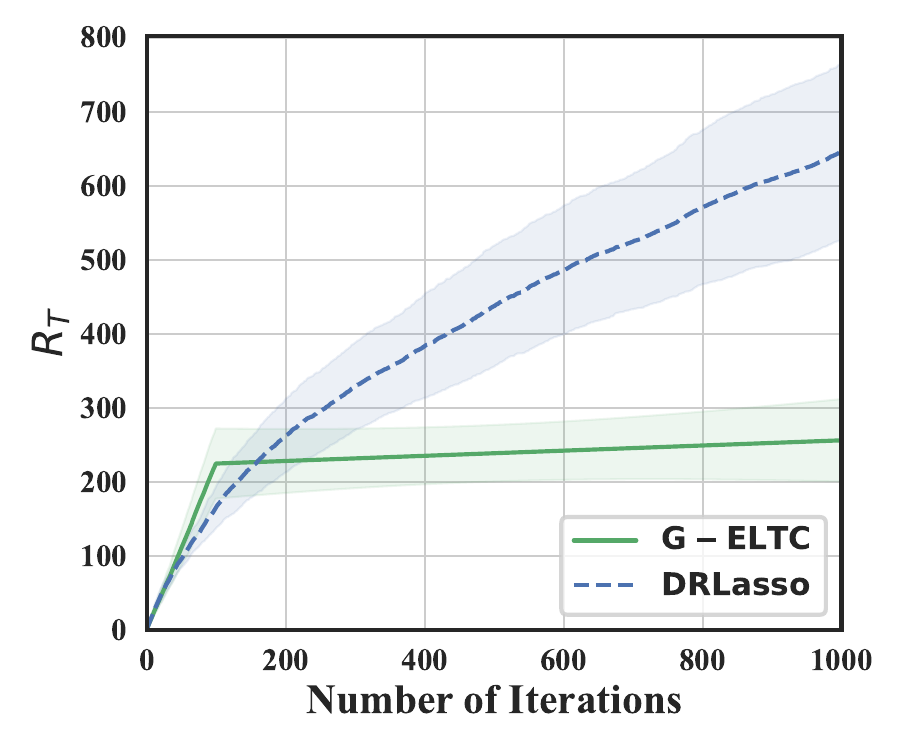}}
    \caption{Comparison of the cumulative regret bounds between the proposed algorithm and DR Lasso under the degenerate Lasso Bandit setting for different $\rho$, $d$, and $s$ configurations.}
    \label{fig:figure5}
\end{figure}


Considering that the algorithm proposed in this paper can degenerate into the Lasso bandits, to highlight the advantages of our algorithmic framework, we compare it with  doubly-robust (DR) Lasso bandits \citep{kim2019doubly}. The experimental setup is consistent with that in Section 5 of \cite{kim2019doubly}, where the number of arms $K = 100$, the dimension $d = 100,200$, and the sparsity $s = 5,10$. We conduct 10 replications for each case. The generation method of synthetic data is as follows:
\begin{itemize}
    \item The contexts of arms: $X \in \mathbb{R}^{K \times d}$, where $X_{\cdot j} \sim \mathcal{N}(0_K,V)$, $V(i,i)=1$ for every $i$ and $V(i,j)=\rho^{2}$ for every $i\neq j$. We experiment two cases for $\rho^2$, either $\rho^2=0.3$ (weak correlation) or $\rho^2=0.7$ (strong correlation). 
    \item Real parameter: For the sparse true parameters, we first randomly select $s$ non-zero indices, and then let these non-zero elements independently follow a uniform distribution on $[0,1]$.
    \item Reward: $y_t \sim \mathcal{N}(\langle x_t, \theta^* \rangle, R^2)$, where $R=0.05$.
\end{itemize}

In the case of degenerating to the Lasso bandit, we demonstrate the superiority of the framework proposed in this paper by comparing it with another algorithm, DR Lasso. There are two reasons for choosing DR Lasso for comparison: first, the settings of the algorithm are essentially the same as those in this paper, i.e., different arms share the same parameters; second, the two algorithms have their respective advantages and disadvantages in terms of theoretical bounds. Specifically, both algorithms achieve a logarithmic order bound for dimensionality, with DR Lasso having an advantage in the order of rounds, while it is inferior to the algorithm proposed in this paper in terms of sparsity. To ensure the credibility of the experimental results, the experimental settings strictly follow those described in the original DR Lasso paper. As shown in Figure \ref{fig:figure5}, in four different settings, the algorithm proposed in this paper achieves faster sublinear convergence in cumulative regret and obtains a lower cumulative regret. Furthermore, our algorithm is less sensitive to the correlation parameters.

\end{document}